  \newlength{\defbaselineskip}
  \title{Out-of-Distribution Learning with Human Feedback}
  \author[1]{Haoyue Bai}
  \author[1]{Xuefeng Du}
   \author[2]{Katie Rainey}
   \author[2]{Shibin Parameswaran}
   \author[1]{Yixuan Li}
  \affil[1]{Department of Computer Sciences, University of Wisconsin-Madison}
  \affil[2]{NIWC Pacific}
  \affil[ ]{Correspondence to: \texttt{sharonli@cs.wisc.edu}}
  \icmltitlerunning{Out-of-Distribution Learning with Human Feedback}
\date{}
\def \d1{\mathds{1}}
\def\X{\mathcal{X}}
\def \d1{\mathds{1}}
\def\*#1{\mathbf{#1}}
\def\1{\mathbf{1}}
\DeclareMathOperator*{\argmax}{arg\,max}
\DeclareMathOperator*{\argmin}{arg\,min}
\def\Pin{\mathbb{P}_{\text{in}}}
\def\Pwild{\mathbb{P}_{\text{wild}}}
\definecolor{greyC}{RGB}{180,180,180}
\definecolor{greyL}{RGB}{235,235,235}
\newtheorem{theorem}{Theorem}
\newtheorem{Definition}{Definition}
\newtheorem{assumption}{Assumption}
\theoremstyle{remark} % Style for remarks
\newtheorem{Remark}{Remark}
\newtheorem{Proposition}[theorem]{Proposition}
\newtheorem{lemma}[theorem]{Lemma}
\definecolor{Gray}{gray}{0.9}
\definecolor{mygray}{gray}{.9}
\definecolor{BlueBG}{rgb}{0,0.46,0.71}
\title{Out-of-Distribution Learning with Human Feedback}
\begin{document}

\maketitle

\begin{abstract}
Out-of-distribution (OOD) learning often relies heavily on statistical approaches or predefined assumptions about OOD data distributions, hindering their efficacy in addressing multifaceted challenges of OOD generalization and OOD detection in real-world deployment environments. This paper presents a novel framework for OOD learning with human feedback, which can provide invaluable insights into the nature of OOD shifts and guide effective model adaptation. Our framework capitalizes on the freely available unlabeled data in the wild that captures the environmental test-time OOD distributions under both covariate and semantic shifts. To harness such data, our key idea is to 
selectively provide human feedback and label a small number of informative samples from the wild data distribution, which are then used to train a multi-class classifier and
an OOD detector. By exploiting human feedback, we enhance the robustness and reliability of machine learning models, equipping them with the capability to handle OOD scenarios with greater precision. We provide theoretical insights on the generalization error bounds to justify our algorithm. Extensive experiments show the superiority of our method, outperforming the current state-of-the-art by a significant margin. 
\end{abstract}

\section{Introduction}

Modern machine learning models deployed in the wild can inevitably encounter shifts in data distributions. In practice, data shifts can often exhibit {heterogeneous} forms. For example, out-of-distribution (OOD) data can arise either from semantic shifts where the test data comes from novel categories~\cite{yang2021generalized}, or covariate shifts where the data undergoes domain or environmental changes~\cite{ chapaneri2022covariate, zhou2022domain, koh2021wilds,ye2022ood}. The nature of mixed types of OOD data poses significant challenges in both {OOD generalization} and {OOD detection}---necessitating correctly predicting the covariate-shifted OOD samples into one of the known
classes, while rejecting the semantic OOD data. For a model to be considered robust and reliable, it must excel in OOD generalization and OOD detection simultaneously, to ensure the continued success and safety of machine learning applications in real-world environments.

Previous works on OOD learning often rely heavily on statistical approaches or predefined assumptions about OOD data distributions, which may not accurately reflect the complexity and diversity of real-world scenarios. Consequently, they struggle to adapt to unforeseen OOD distributions encountered during deployment effectively. Furthermore, without human input to provide contextual information and guide model adaptation, these approaches may face challenges in accurately distinguishing between in-distribution (ID) and OOD data, leading to suboptimal performance in OOD detection tasks. The absence of human feedback thus restricts the adaptability of previous approaches, hindering their efficacy in addressing the multifaceted challenges of OOD generalization and detection in real-world deployment environments.

To tackle the challenge, we introduce a novel framework for OOD learning with human feedback, which can provide invaluable insights into the nature of OOD shifts and guide effective model adaptation. Human feedback offers a unique perspective that complements automated statistical techniques, and remains underexplored in the context of OOD learning. Our framework capitalizes on the abundance of unlabeled data available in the wild, capturing environmental OOD distributions under diverse conditions, which can be characterized as a composite mixture of ID, covariate OOD, and semantic OOD data~\cite{bai2023feed}. Such unlabeled data is ubiquitous in many real-world applications, arising organically and freely in the model's operational environment. To harness the unlabeled data for OOD learning, 
our key idea is to 
selectively provide human feedback and label a small number of highly informative samples from the wild
data distribution, which are then used to train a robust multi-class classifier and
a reliable OOD detector. By exploiting human feedback, we can enhance the robustness and reliability of machine learning models, equipping them with the capability to handle OOD scenarios with greater precision. 

Our framework employs a gradient-based sample selection mechanism, which prioritizes the most informative samples for human feedback (Section~\ref{sec:selection}). The sampling score is calculated based on the projection of its gradient onto the top singular vector of the gradient matrix, defined over all the unlabeled wild data.  Specifically, the sampling score measures the norm of the projected vector, which can be used to select informative samples (e.g., ones with relatively large gradient-based scores). The selected samples are then annotated by human, and incorporated into our learning framework. In training, we jointly optimize for both robust classification of samples from the ID and the annotated covariate OOD data, along with a reliable binary OOD detector separating between the ID data and the annotated semantic OOD data (Section~\ref{sec:objective}).  
Additionally, we deliver theoretical insights (Theorem~\ref{the:main}) into the learnability of the classifier through the use of a gradient-based sampling score, thus formally justifying the framework of OOD learning with human feedback.

Lastly, we provide extensive experiments showing that this human-centric approach can effectively improve both OOD generalization and detection under a small annotation budget (Section~\ref{sec:experiments}). Compared to SCONE~\cite{bai2023feed}, the current state-of-the-art method,  we substantially improve the OOD classification accuracy by 5.82\% on covariate-shifted CIFAR-10 data, while reducing the average OOD detection error by 15.16\% (FPR95). 
Moreover, we provide comprehensive ablations on the impacts of labeling budgets, different sampling scores, and sampling strategies, which leads to an improved understanding of OOD learning with human feedback. To summarize our key contributions:
\begin{itemize}
    \item We propose a new OOD learning with human feedback framework for joint OOD generalization and detection. Our method employs a gradient-based sampling procedure, which can select informative semantic and covariate OOD data from the wild data for OOD learning. 
    \item We present extensive empirical analysis and ablation studies to understand our learning framework. The results provide insights into using human feedback on the unlabeled wild data for both OOD generalization and detection and justify the efficacy of our algorithm. 
    \item We provide a generalization error bound for the model learned under human feedback, theoretically supporting our proposed algorithm.
\end{itemize}

\section{Problem Setup}
\label{sec:setup}

    \paragraph{Labeled in-distribution data.} Let $\mathcal{X}$ denote the input space and $\mathcal{Y}=\{1,...,C\}$ denote the label space for ID data. We use $\mathbb{P}_{\mathcal{X}\mathcal{Y}}$ as the ID joint distribution defined over $\mathcal{X}\times \mathcal{Y}$. Given an ID joint distribution $\mathbb{P}_{\mathcal{X}\mathcal{Y}}$, the labeled ID data $\mathcal{S}^{\text{in}}=\{(\mathbf{x}_i,y_i)\}_{i=1}^n$ is drawn independently and identically (i.i.d.) from $\mathbb{P}_{\mathcal{X}\mathcal{Y}}$.

\paragraph{Unlabeled wild data.} Upon deploying a classifier trained on ID, we have access to unlabeled data from the wild, denoted as $\mathcal{S}_{\text{wild}} = \{\tilde{\mathbf{x}}_i\}_{i=1}^m$, which can be used to assist in OOD learning. Following \cite{bai2023feed}, $\mathcal{S}_{\text{wild}}$ is drawn i.i.d. from  an unknown wild distribution $\mathbb{P}_{\text{wild}}$ defined below.
\begin{Definition}
The marginal distribution of the wild data is defined as:
    $$\mathbb{P}_\text{wild} := (1-\pi_c-\pi_s) \mathbb{P}_\text{in} + \pi_c \mathbb{P}_\text{out}^\text{covariate} +\pi_s \mathbb{P}_\text{out}^\text{semantic},$$ where $\pi_c, \pi_s, \pi_c+\pi_s \in [0,1]$. $\mathbb{P}_\text{in}$, $\mathbb{P}_\text{out}^\text{covariate}$, and $\mathbb{P}_\text{out}^\text{semantic}$ represent the marginal distributions 
of ID, covariate-shifted OOD, and semantic-shifted OOD data respectively. 
\end{Definition}

\paragraph{Learning goal.} Our learning framework aims to build a robust multi-class predictor $f_{\mathbf{w}}$ and an OOD detector $D_{\boldsymbol{\theta}}$ by leveraging knowledge
from labeled ID data $\mathcal{S}^{\text{in}}$ and unlabeled wild data $\mathcal{S}_{\text{wild}}$. Moreover, we allow a maximum number of $k$ human annotations for samples in the unlabeled data. Let $f_{\mathbf{w}}: \X \mapsto \mathbb{R}^{C}$  be a multi-class predictor with parameter $\mathbf{w}\in \mathcal{W}$, where $\mathcal{W}$ is the parameter space. The predicted label for an input $\mathbf{x}$ is
\begin{equation*}
    \widehat{y}(\mathbf{x};{f_{\mathbf{w}}}) \coloneqq \argmax_{y\in \mathcal{Y}} f_{\mathbf{w}}^{y}(\mathbf{x}),
\end{equation*}
where $f_{\mathbf{w}}^{y}$ is the $y$-th coordinate of $f_{\mathbf{w}}$, and $\*x$ can be either ID or covariate OOD. To detect the semantic OOD data, we need to construct a ranking function ${g}_{\boldsymbol{\theta}}:\mathcal{X}\rightarrow \mathbb{R}$ with parameter $\boldsymbol{\theta} \in \boldsymbol{\Theta}$, where $\boldsymbol{\Theta}$ is the parameter space. With the ranking function ${g}_{\boldsymbol{\theta}}$, one can define the OOD detector:
\begin{equation}\label{eq1}
\begin{split}
 D_{\boldsymbol{\theta}}(\mathbf{x}; \lambda) : = \left\{\begin{matrix}
\hfill  \textsc{ID} &~ \mathrm{if} \ {g}_{\boldsymbol{\theta}}(\mathbf{x}) >\lambda,  \\ 
\textsc{OOD}  &~ \mathrm{if} \ {g}_{\boldsymbol{\theta}}(\mathbf{x}) \leq \lambda,
\end{matrix}\right.
\end{split}
\end{equation}
where $\lambda$ is a threshold, typically chosen so that a high fraction of ID data is correctly classified.

\section{Proposed Framework}

In this section, we introduce a novel framework for OOD learning with human feedback for tackling both OOD generalization and OOD detection problems jointly. Our framework is motivated by the fundamental challenge in harnessing unlabeled wild data for OOD learning---{the lack of supervision for samples drawn from
the wild data distribution $\mathbb{P}_\text{wild}$}. To address this challenge, our key idea is to selectively label a small number of samples from the wild data distribution to train a robust multi-class classifier and an OOD detector. Specifically, the design of our framework constitutes two components revolving around the following unexplored questions:

\textbf{Q1:} \emph{How to select informative samples from the unlabeled data for human feedback?} (Section~\ref{sec:selection})

\textbf{Q2:} \emph{How to learn from these newly labeled samples to enhance OOD generalization and OOD detection capabilities?} (Section~\ref{sec:objective})

\subsection{Sample Selection for Human Feedback}\label{sec:selection}

The key to our OOD learning with human feedback framework lies in a sample selection procedure that identifies the most informative samples while reducing labeling costs. With a limited labeling budget, it is advantageous to select samples from wild data that are either covariate OOD or semantic OOD and will contribute the most to OOD generalization and detection purposes. These samples would be informative for the purpose of OOD generalization and OOD detection. Given a heterogeneous set of wild unlabeled data $\mathcal{S}_{\text{wild}}$, our rationale is to employ a sampling score that can effectively separate ID vs. non-ID part. This way, we can accordingly query samples from the non-ID pool that are most likely covariate or semantic OOD. To achieve this, we proceed to describe the sampling score. 

\textbf{Sampling score.}  We employ a gradient-based sampling score, where the gradients are estimated from a classification predictor $f_{\*w_{\mathcal{S}^{\text{in}}}}$ trained on the ID data $\mathcal{S}^{\text{in}}$:
\begin{equation}
\*w_{\mathcal{S}^{\text{in}}} \in \argmin_{\*w\in \mathcal{W}} R_{\mathcal{S}^{\text{in}}}(f_\*w),
    \label{eq:erm}
\end{equation}
where $R_{\mathcal{S}^{\text{in}}} (f_\*w)= \frac{1}{n}\sum_{(\*x_i, y_i) \in \mathcal{S}^{\text{in}}} \ell (f_\*w(\*x_i),y_i)$, $\ell: \mathbb{R}^C \times \mathcal{Y} \rightarrow \mathbb{R}^+$ is the loss function, $\*w_{\mathcal{S}^{\text{in}}}$ is the learned parameter and $n$ is the size of ID training set. {The average gradient $\bar{\nabla}$ is:}
\begin{equation}
    \bar{\nabla}=\frac{1}{n} \sum_{(\*x_i, y_i) \in \mathcal{S}^{\text{in}}} \nabla \ell (f_{\*w_{\mathcal{S}^{\text{in}}}}(\*x_i),y_i),
    \label{eq:reference_gradient}
\end{equation}
 where $\bar{\nabla} $  acts as a reference gradient that allows measuring the deviation of any other points from it.

With the reference gradient defined, we can now represent each point in $\mathcal{S}_{\text{wild}}$ as a gradient vector, relative to the reference gradient $\bar{\nabla}$.
\textcolor{black}{
Specifically, we calculate the gradient matrix (after subtracting the reference gradient $\bar{\nabla}$) for the wild data as follows:}
\begin{equation}
    \mathbf{G} = \left[\begin{array}{c}
    \nabla \ell (f_{\*w_{\mathcal{S}^{\text{in}}}}(\tilde{\*x}_1), \widehat{y}_{\tilde{\*x}_1})-\bar{\nabla} \\
    ...\\
    \nabla \ell (f_{\*w_{\mathcal{S}^{\text{in}}}}(\tilde{\*x}_m), \widehat{y}_{\tilde{\*x}_m})-\bar{\nabla}
    \end{array}\right]^\top,
    \label{eq:gradient_matrix}
\end{equation}
where $m$ denotes the size of the wild dataset, and $\widehat{y}_{\tilde{\*x}}$ is the predicted label for a wild sample $\tilde{\*x}$.

\begin{figure}[t]
\centering
\begin{subfigure}[b]{0.45\textwidth}
    \includegraphics[width=\textwidth]{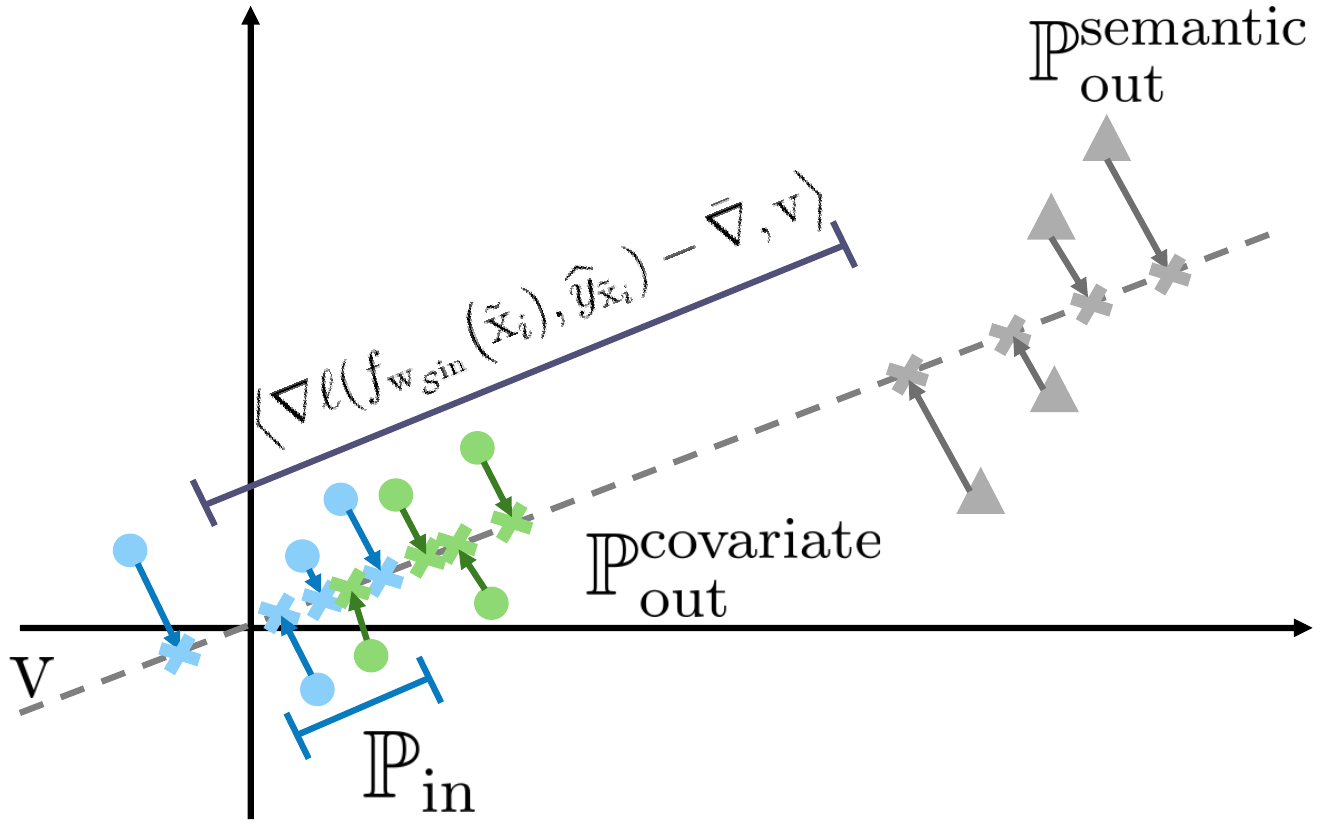}
    \caption{Gradient vectors and their projection}
    \label{fig:vis_projection}
\end{subfigure}
\hspace{9mm}
\begin{subfigure}[b]{0.45\textwidth}
    \includegraphics[width=\textwidth]{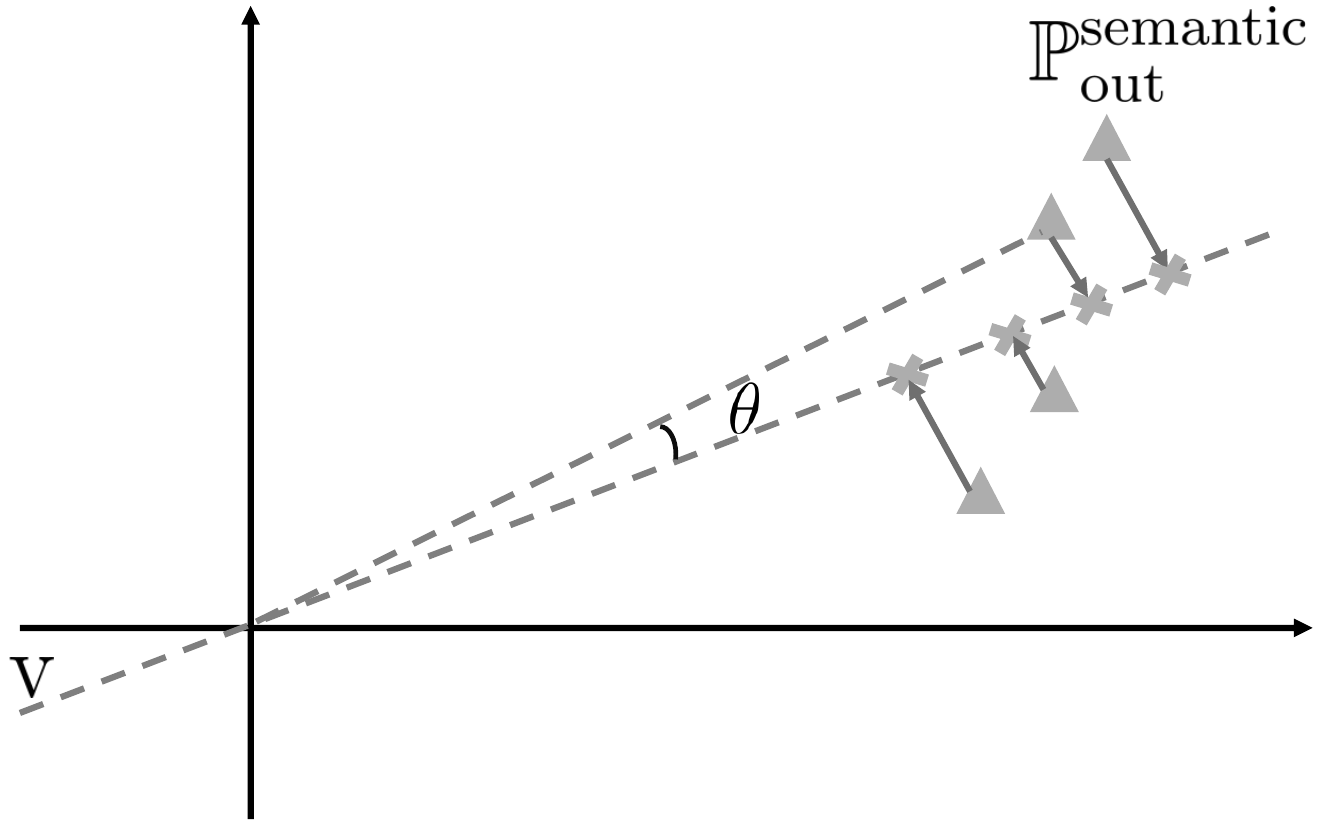}
    \caption{Angle between the gradient and the singular vector}
    \label{fig:vis_angle}
\end{subfigure}
%\vspace{-1em}
\caption{\small Illustration of the gradient vectors and their projections (the \textcolor{blue}{blue} points denote $\mathbb{P}_{\text{in}}$, the \textcolor{green}{green} points represent $\mathbb{P}_{\text{out}}^{\text{covariate}}$, and the \textcolor{gray}{gray} points indicate $\mathbb{P}_{\text{out}}^{\text{semantic}}$): (a) Visualization of the gradient projected onto the top singular vector of matrix $\mathbf{G}$ for unlabeled data. The gradients of the set $\mathbb{P}_{\text{in}}$ (inliers in the wild) are proximate to the origin (reference gradient $\bar{\nabla}$), in contrast to the gradients of the set $\mathbb{P}_{\text{out}}^{\text{semantic}}$, which are more distant. (b) The angle $\theta$ between the gradient of the set $\mathbb{P}_{\text{out}}^{\text{semantic}}$ and the singular vector $\*v$. As $\*v$ is identified to maximize the distance of the projected points (denoted by \textcolor{gray}{\ding{54}}) from the origin, considering the sum over all the gradients in $\mathbb{P}_{\text{wild}}$, $\*v$ indicates the direction of OOD data in the wild with a small angle $\theta$. 
}

\label{fig:pca}
\end{figure}

For each data  point $\tilde{\*x}_i$ in $\mathcal{S}_{\text{wild}}$, we now define our {gradient-based sampling score} as follows:
\begin{equation}
    \tau_i =\left<\nabla \ell (f_{\*w_{\mathcal{S}^{\text{in}}}}\big(\tilde{\*x}_i), \widehat{y}_{\tilde{\*x}_i})-\bar{\nabla} , \*v\right > ^2,
    \label{eq:score}
\end{equation}
where $\left<\cdot, \cdot\right>$ is the dot product operator and $\*v$ is the top singular vector of  $\mathbf{G}$. The top singular vector $\*v$  can be regarded as the principal component of the matrix $\mathbf{G}$ in Eq.~\ref{eq:gradient_matrix}, which maximizes the total distance from the projected gradients (onto the direction of $\*v$) to the origin (sum over all points in $\mathcal{S}_{\text{wild}}$)~\citep{hotelling1933analysis}. Specifically, $\*v$ is a unit-norm vector and can be computed as follows:
\begin{equation}
 \vspace{-0.1em}
    \*v \in  \argmax_{\|\*u\|_2 = 1} \sum_{\tilde{\*x}_i \in \mathcal{S}_{\text{wild}}} \left< \*u, \nabla \ell (f_{\*w_{\mathcal{S}^{\text{in}}}}\big(\tilde{\*x}_i),   \widehat{y}_{\tilde{\*x}_i})-\bar{\nabla} \right>^2.
    \label{eq:pca}
\end{equation}
Essentially, the sampling score $\tau_i$ in Eq.~\ref{eq:score} measures the $\ell_2$ norm of the projected vector. To help readers better understand our design rationale, we provide an illustrative example of the gradient vectors and their projections in Figure~\ref{fig:pca} (see caption for details).

\begin{figure*}[t]
\centering

\begin{subfigure}[b]{0.322\textwidth}
    \includegraphics[width=\textwidth]{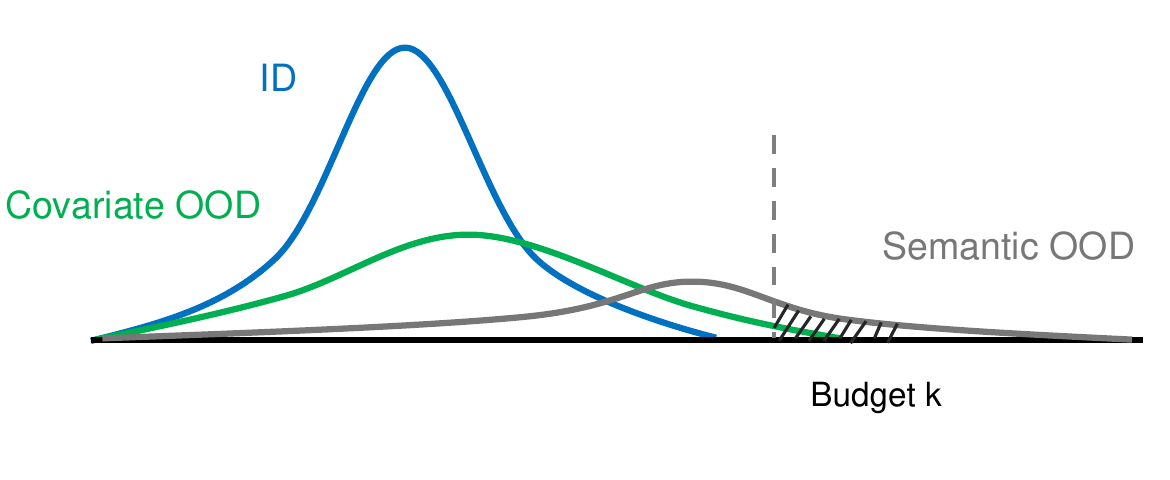}
    \caption{Top-$k$ sampling}
    \label{fig:top-k}
\end{subfigure}%
\hspace{1mm}
\begin{subfigure}[b]{0.322\textwidth}
    \includegraphics[width=\textwidth]{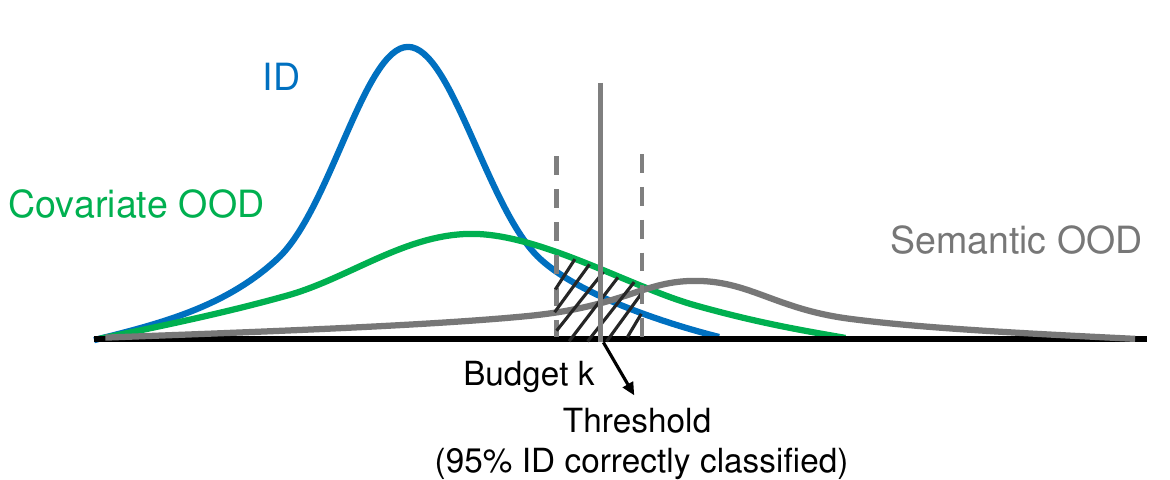}
    \caption{Near-boundary sampling}
    \label{fig:near-boundary}
\end{subfigure}%
\hspace{1mm}
\begin{subfigure}[b]{0.322\textwidth}
    \includegraphics[width=\textwidth]{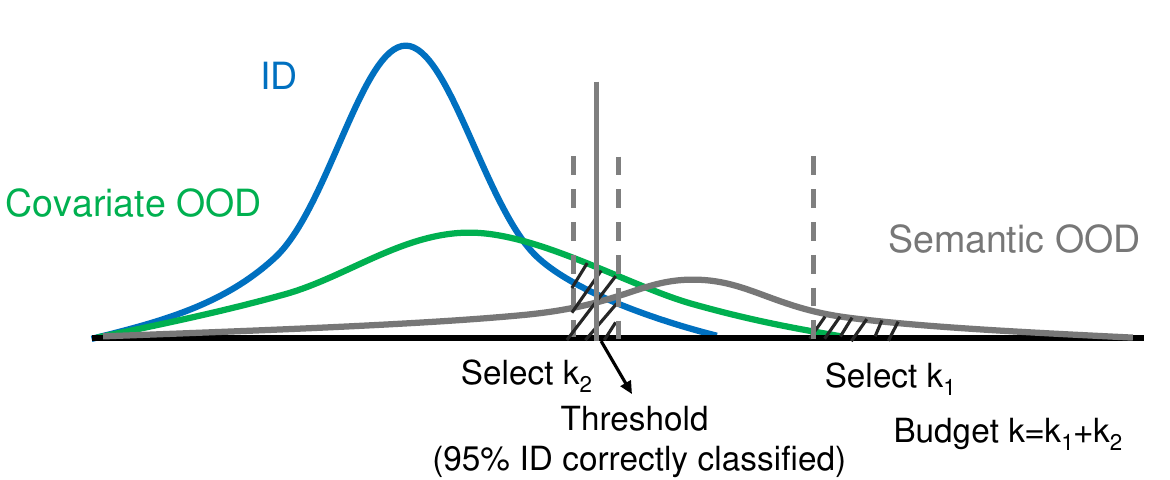}
    \caption{Mixed sampling}
    \label{fig:mixed}
\end{subfigure}
\vspace{-0.1cm}
\caption{\small Illustration of three selection criteria, (1) top-$k$ sampling, 
(2) near-boundary sampling, and (3) mixed sampling. The horizontal axis is the sampling score defined in Equation~\ref{eq:score}, and the vertical axis is the frequency. Note that we color the three different sub-distributions (ID, covariate OOD, semantic OOD) separately for clarity, but in practice, the membership is not revealed due to the unlabeled nature of wild data. 
}
\vspace{-0.3cm}
\label{fig:select-criteria}
\end{figure*}

\textbf{Sampling strategy.} Given the gradient-based scores calculated for each sample $\tilde{\*x}_i$ in $\mathcal{S}_\text{wild}$, we need to select a subset of $k \ll m$ examples for manual labeling. Here $k$ is the annotation budget. We consider three sampling strategies, as illustrated in Figure~\ref{fig:select-criteria}. 

\begin{itemize}
    \item \textbf{Top-$k$ sampling}: select $k$ samples from $\mathcal{S}_\text{wild}$ with the largest score $\tau_i$. As shown in Figure~\ref{fig:select-criteria} (a), these samples deviate mostly from the ID data and are more obviously to be semantic OOD or covariate OOD. 
    \item \textbf{Near-boundary sampling}: select $k$ samples that are close to the ID boundary, which may encompass samples with high ambiguity. As shown in Figure~\ref{fig:select-criteria} (b), we choose the threshold $\tau_b$ using labeled ID data $\mathcal{S}^{\text{in}}$ so that it captures a substantial fraction (e.g., $95\%$) of ID samples. Based on the threshold $\tau_b$, we then select $k$ samples from $\mathcal{S}_\text{wild}$ that are closest to this threshold.
    \item \textbf{Mixed sampling}: select samples using both top-$k$  and near-boundary sampling, and combine the two subsets. 
\end{itemize}
 Without loss of generality, we denote the selected set of samples as $\mathcal{S}_\text{selected}$, with cardinality $|\mathcal{S}_\text{selected}|=k$. For each sample in $\mathcal{S}_\text{selected}$, we ask the human annotator to choose a label from $\mathcal{Y} \cup \{ \perp \}$, where $\{ \perp \}$ indicates the semantic OOD. For covariate OOD, the returned label belongs to the existing label space $\mathcal{Y}$ according to the definition. 
In Section~\ref{sec:ablations}, we perform comprehensive ablations to understand the efficacy of each sampling strategy. 

\subsection{Learning Objective Leveraging Human Feedback}\label{sec:objective}
We now discuss our learning objective, which incorporates the human-annotated samples from wild data. For notation convenience, we use $\mathcal{S}_\text{selected}^{c}$ and $\mathcal{S}_\text{selected}^{s}$ to denote labeled samples corresponding to covariate OOD and semantic OOD respectively, where $\mathcal{S}_\text{selected} =\mathcal{S}_\text{selected}^{c} \cup \mathcal{S}_\text{selected}^{s}$.
Our learning framework jointly optimizes for both: (1) robust classification of samples from $\mathcal{S}^{\text{in}}$ and covariate OOD $\mathcal{S}_{\text{selected}}^{\text{c}}$, and (2) reliable binary OOD detector separating data between $\mathcal{S}^{\text{in}}$ and semantic OOD $\mathcal{S}_{\text{selected}}^{\text{s}}$. Given a weighting factor $\alpha$, the risk can be formalized as follows:
\begin{equation}\label{eq:objective}
    \*w, \boldsymbol{\theta}  = \argmin [ \underbrace{R_{\mathcal{S}^{\text{in}},\mathcal{S}^{\text{c}}_{\text{selected}}}(f_\*w)}_{\text{Multi-class classifier}} + \alpha \cdot \underbrace{R_{\mathcal{S}^{\text{in}},\mathcal{S}^{\text{s}}_{\text{selected}}}(g_{\boldsymbol{\theta}})}_{\text{OOD detector}}],
\end{equation}
where the first term can be empirically optimized using the standard cross-entropy loss. 
The second term can be viewed as explicitly optimizing the level-set 
based on the model output (threshold at 0), where the labeled ID
data $\*x$ from $\mathcal{S}^{\text{in}}$ has positive values and vice versa:
\begin{equation}\label{eq:reg_loss}
\begin{split}
R_{\mathcal{S}^{\text{in}},\mathcal{S}^{\text{s}}_{\text{selected}}}(g_{\boldsymbol{\theta}}) 
& =   R_{\mathcal{S}^{\text{in}}}^{+}(g_{\boldsymbol{\theta}})+R_{\mathcal{S}^{\text{s}}_{\text{selected}}}^{-}(g_{\boldsymbol{\theta}}) \\
& =  \mathbb{E}_{\*x \in  \mathcal{S}^{\text{in}}}~~\mathbbm{1}\{g_{\boldsymbol{\theta}}(\*x) \leq 0\} \\&+\mathbb{E}_{\tilde{\*x} \in  \mathcal{S}^{\text{s}}_{\text{selected}}}~~\mathbbm{1} \{g_{\boldsymbol{\theta}}(\tilde{\*x}) > 0\}.
        \end{split}
\end{equation}
To make the $0/1$ loss tractable, we replace it with the
binary sigmoid loss, a smooth approximation of the $0/1$
loss.  We train $g_{\boldsymbol{\theta}}$ along with the multi-class classifier $f_\*w$. 
The training enables generalization to  OOD samples drawn from $\mathbb{P}^{\text{covariate}}_\text{out}$, and at the same time, teaches the OOD detector to identify data from $\mathbb{P}^{\text{semantic}}_\text{out}$. The above process of sample selection, human annotation, and model training can be repeated until the desired performance level is achieved or the entire budget allocated for annotations is exhausted. An end-to-end algorithm is fully specified in Appendix~\ref{app:algorithm}.

\paragraph{Theoretical insights.} We now present theory to support our proposed algorithm. Our main Theorem~\ref{the:main} provides a generalization error bound \emph{w.r.t.} the empirical multi-class classifier $f_{\*w}$, learned on ID data and the selected covariate OOD data by the objective $R_{\mathcal{S}^{\text{in}},\mathcal{S}^{\text{c}}_{\text{selected}}}(f_\*w)$. We specify several mild assumptions and necessary notations for our theorem in Appendix~\ref{notation,definition,Ass,Const}. Due
to space limitations, we omit unimportant constants and simplify the statements of our theorems. We
defer the full formal statements to Appendix \ref{theoapp}. All proofs can be found in Appendices \ref{Proofapp} and \ref{NecessaryLemma}.

\begin{theorem}\label{the:main}
 (Informal). Let $\mathcal{W}$ be a hypothesis space with a VC-dimension of $d$. Denote the datasets $\mathcal{S}^{\text{in}}$ and $\mathcal{S}_{\text{selected}}^{\text{c}}$ as the labeled ID and the selected covariate OOD data by active learning, and their sizes are $n$ and $m_{\text{c}}$, respectively.  If $\widehat{\*w} \in \mathcal{W}$ minimizes the empirical risk $R_{\mathcal{S}^{\text{in}},\mathcal{S}^{\text{c}}_{\text{selected}}}(f_\*w)$ for classifying the ID and covariate OOD data, and ${\*w^{\ast}}=\arg\min_{\*w \in \mathcal{W}}R_{\mathbb{P}_{\text{out}}^{\text{covariate}}}(f_\*w)$, then for any $\delta \in (0, 1)$, with probability of at least $1 - \delta$, we have
 \begin{equation*}
\begin{aligned}
   R_{\mathbb{P}_{\text{out}}^{\text{covariate}}}(f_{\widehat{\*w}})& \leq R_{\mathbb{P}_{\text{out}}^{\text{covariate}}}(f_{\*w^{\ast}}) + 2 \underset{{\*w}\in \mathcal{W}}{\sup}  d_{\mathbf{w}}^{\ell}(\mathcal{S}^{\text{in}},\mathcal{S}^{\text{c}}_{\text{selected}}) \\&+ 4\sqrt{\frac{2 d \log (2 m_{\text{c}})+\log \frac{2}{\delta}}{m_{\text{c}}}} + 2\gamma +2\zeta,
   \end{aligned}
\end{equation*}
where {\small$\zeta=\sqrt{(\frac{1}{n} + \frac{1}{m_{\text{c}}})(\frac{d\log{(2n+2m_{\text{c}})}-\log(\delta)}{2})} +    M$ }
and {\small $\gamma = \underset{\*w \in \mathcal{W}}{\min} R_{\mathbb{P}_{\text{in}}} (f_\*w)$}. $M$ is the upper bound of the loss function for the multi-class classifier,
\begin{equation*}
\small
     d_{\mathbf{w}}^{\ell}(\mathcal{S}^{\text{in}},\mathcal{S}^{\text{c}}_{\text{selected}}) = \| \nabla R_{\mathcal{S}^{\text{in}}}(f_\*w, \widehat{f}) -  \nabla R_{\mathcal{S}^{\text{c}}_{\text{selected}}}(f_\*w, \widehat{f}) \|_2,
\end{equation*}
where {\small $\widehat{f}$} is a classifier which returns the closest one-hot vector of {\small $f_{\mathbf{w}}$, i.e., $R_{\mathcal{S}^{\text{in}}}(f_{\mathbf{w}},\widehat{f})= \mathbb{E}_{\mathbf{x}\sim \mathcal{S}^{\text{in}}} \ell(f_{\mathbf{w}},\widehat{f})$} and {\small $R_{\mathcal{S}^{\text{c}}_{\text{selected}}}(f_{\mathbf{w}},\widehat{f})= \mathbb{E}_{\mathbf{x}\sim \mathcal{S}^{\text{c}}_{\text{selected}}} \ell(f_{\mathbf{w}},\widehat{f})$}. 

\end{theorem}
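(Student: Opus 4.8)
The plan is to cast this as a domain-adaptation generalization bound, where the ``source'' supervision combines the labeled ID set $\mathcal{S}^{\text{in}}$ with the selected covariate OOD set $\mathcal{S}^{\text{c}}_{\text{selected}}$, and the ``target'' is the population covariate-shift distribution $\mathbb{P}_{\text{out}}^{\text{covariate}}$. The factor $2$ multiplying the discrepancy $\sup_{\*w} d_{\*w}^{\ell}$ together with the appearance of $R_{\mathbb{P}_{\text{out}}^{\text{covariate}}}(f_{\*w^{\ast}})$ strongly signals the canonical three-term decomposition of classical domain-adaptation analyses. First I would add and subtract the ID risk to write
\[
R_{\mathbb{P}_{\text{out}}^{\text{covariate}}}(f_{\widehat{\*w}}) \leq R_{\mathbb{P}_{\text{in}}}(f_{\widehat{\*w}}) + \big| R_{\mathbb{P}_{\text{out}}^{\text{covariate}}}(f_{\widehat{\*w}}) - R_{\mathbb{P}_{\text{in}}}(f_{\widehat{\*w}}) \big|,
\]
control each cross-distribution gap by the discrepancy, apply the same manipulation to $\*w^{\ast}$, and chain the inequalities through an intermediate ideal joint hypothesis. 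This is where the two copies of the discrepancy and the approximation term $\gamma = \min_{\*w} R_{\mathbb{P}_{\text{in}}}(f_{\*w})$ enter.

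Next I would bridge the population risks and their empirical counterparts. The ERM optimality of $\widehat{\*w}$ on the combined objective gives $R_{\mathcal{S}^{\text{in}},\mathcal{S}^{\text{c}}_{\text{selected}}}(f_{\widehat{\*w}}) \leq R_{\mathcal{S}^{\text{in}},\mathcal{S}^{\text{c}}_{\text{selected}}}(f_{\*w^{\ast}})$, and standard VC-dimension uniform convergence over $\mathcal{W}$ replaces population risks with empirical ones at the cost of the $4\sqrt{(2d\log(2m_{\text{c}}) + \log(2/\delta))/m_{\text{c}}}$ term, governed by the target sample size $m_{\text{c}}$. The remaining concentration slack, arising from jointly estimating the empirical discrepancy over the union of the two samples of sizes $n$ and $m_{\text{c}}$, is collected into $\zeta$, whose $\sqrt{(1/n + 1/m_{\text{c}})(\cdot)}$ shape is precisely what a two-sample Hoeffding/VC bound over $n + m_{\text{c}}$ points produces, with the additive $M$ absorbing the worst-case loss bound.

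The main obstacle, and the genuinely non-standard step, is relating the cross-distribution risk gap $| R_{\mathbb{P}_{\text{in}}}(f_{\*w}) - R_{\mathbb{P}_{\text{out}}^{\text{covariate}}}(f_{\*w}) |$ to the gradient-based quantity $d_{\*w}^{\ell}(\mathcal{S}^{\text{in}},\mathcal{S}^{\text{c}}_{\text{selected}}) = \| \nabla R_{\mathcal{S}^{\text{in}}}(f_{\*w},\widehat{f}) - \nabla R_{\mathcal{S}^{\text{c}}_{\text{selected}}}(f_{\*w},\widehat{f}) \|_2$, which is defined with respect to the self-labeling classifier $\widehat{f}$ rather than the true labels. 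I would handle this with a first-order (mean-value) expansion of the loss around the self-consistent prediction $\widehat{f}$: express the per-distribution risk difference as an inner product between a parameter displacement and the difference of the two self-labeled gradients, bound it via Cauchy--Schwarz to surface $d_{\*w}^{\ell}$, and take the supremum over $\*w \in \mathcal{W}$ so the bound holds uniformly for both $\widehat{\*w}$ and $\*w^{\ast}$. The delicate part is that the two risks use different labels (true labels versus one-hot projections), so I expect a supporting lemma, deferred to the appendix, quantifying the gap between $R_{\mathbb{P}}(f_{\*w})$ and $R_{\mathbb{P}}(f_{\*w},\widehat{f})$ in terms of the classifier's own confidence, with the residual absorbed into $M$ and $\gamma$. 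Assembling the triangle-inequality chain, the uniform-convergence slack, and this gradient-discrepancy lemma then yields the stated bound.
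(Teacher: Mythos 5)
Your proposal is correct and, at the level of architecture, coincides with the paper's proof: your decomposition through an ``intermediate ideal joint hypothesis'' is exactly the paper's Lemma~\ref{lemma:target-error-weighted} (whose formal $\gamma$ is $\min_{\mathbf{w}\in\mathcal{W}}\{R_{\mathbb{P}_{\text{in}}}(f_{\mathbf{w}})+R_{\mathbb{P}_{\text{out}}^{\text{covariate}}}(f_{\mathbf{w}})\}$, i.e.\ the ideal-joint-hypothesis error); the $4\sqrt{(2d\log(2m_{\text{c}})+\log\frac{2}{\delta})/m_{\text{c}}}$ term comes from the Kifer-style VC bound (Lemma~\ref{lem:relativized-discrepancy}) relating empirical and population $\mathcal{W}\triangle\mathcal{W}$-distances; the $\sqrt{(1/n+1/m_{\text{c}})(\cdot)}$ part of $\zeta$ is the two-sample Hoeffding bound (Lemma~\ref{lemma:hoeffding}); and ERM optimality of $\widehat{\mathbf{w}}$ is chained exactly as you describe. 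The one place you genuinely diverge is the step you yourself flag as delicate: how the gradient discrepancy $d_{\mathbf{w}}^{\ell}$ enters. The paper's Step 3 is far cruder than your mean-value/Cauchy--Schwarz plan: it simply adds and subtracts the self-labeled gradient terms inside the empirical $\mathcal{W}\triangle\mathcal{W}$-distance, applies the triangle inequality, and bounds the two leftover true-label risk terms by $M$ each via Proposition~\ref{P1} --- which is precisely why $M$ sits additively inside $\zeta$. Consequently, in the paper's argument the gradient term does no logical work (the bound would remain valid, indeed tighter, if it were deleted), whereas your route would make $d_{\mathbf{w}}^{\ell}$ genuinely control the cross-distribution gap. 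If you carry your route out, note two caveats: a risk difference between two distributions at a \emph{fixed} $\mathbf{w}$ is not an inner product with any displacement, so you must anchor the mean-value expansion of $\Delta(\mathbf{w})=R_{\mathcal{S}^{\text{in}}}(f_{\mathbf{w}},\widehat{f})-R_{\mathcal{S}^{\text{c}}_{\text{selected}}}(f_{\mathbf{w}},\widehat{f})$ at the center $\mathbf{w}_0$ of the parameter ball in Assumption~\ref{Ass1}, which yields the radius $r_1$ rather than the stated constant $2$ in front of $\sup_{\mathbf{w}}d_{\mathbf{w}}^{\ell}$ plus a residual $|\Delta(\mathbf{w}_0)|$ term; and your bridging lemma between true-labeled and self-labeled risks ends up absorbed into $M$ anyway, exactly as in the paper. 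Since the theorem is informal and suppresses constants, your version delivers a bound of the same shape; it is simply doing more work than the paper's own proof actually does.
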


\paragraph{Practical implications.} Theorem~\ref{the:main} states that the generalization error of the multi-class classifier is upper bounded. If the sizes of the labeled ID $n$ and the selected covariate OOD data $m_{c}$ are relatively large, the ID loss is small, and the optimal risk on covariate OOD $R_{\mathbb{P}_{\text{out}}^{\text{covariate}}}(f_{\*w^{\ast}})$ and ID $\gamma$ is small, then the upper bound will mainly depend on the gradient discrepancy between the  ID and covariate OOD data selected by active learning. Notably, the bound synergistically aligns with our gradient-based score (Equation~\ref{eq:score}). Empirically, we verify these conditions of Theorem~\ref{the:main} and our assumptions in Appendix~\ref{sec:empirival_veri}, which holds in practice.

\section{Experiments}
\label{sec:experiments}

\subsection{Settings}\label{exp:setup}

\paragraph{Datasets and benchmarks.} Following the setup in~\cite{bai2023feed}, we employ CIFAR-10~\citep{krizhevsky2009learning} as $\mathbb{P}_{\text{in}}$ and CIFAR-10-C~\citep{hendrycks2018benchmarking} with Gaussian additive noise as the $\mathbb{P}_{\text{out}}^{\text{covariate}}$. For $\mathbb{P}_{\text{out}}^{\text{semantic}}$, we leverage SVHN \cite{netzer2011reading}, Textures \cite{cimpoi2014describing}, Places365 \cite{zhou2017places}, and LSUN~\cite{yu2015lsun}. {We divide CIFAR-10 training set into 50\% labeled as ID and 50\% unlabeled. And we mix unlabeled CIFAR-10, CIFAR-10-C, and semantic OOD data to generate the wild dataset}. To simulate the wild distribution $\mathbb{P}_{\text{wild}}$, we adopt the same mixture ratio as in SCONE~\citep{bai2023feed}, where $\pi_c=0.5$ and $\pi_s=0.1$. Detailed descriptions of the datasets and data mixture can be found in the Appendix~\ref{sec:app_dataset}. To demonstrate the adaptability and robustness of our proposed method, we extend the framework to more diverse settings and datasets. Additional results on other types of covariate shifts can be found in Appendix~\ref{sec:app_pacs}.

\paragraph{Experimental details.} To ensure a fair comparison with prior works~\citep{bai2023feed,liu2020energy, katz2022training}, we adopt Wide ResNet with 40 layers and a widen factor of 2~\citep{zagoruyko2016wide}.
We use stochastic gradient descent with Nesterov momentum~\citep{duchi2011adaptive}, with weight decay 0.0005 and momentum 0.09. The model is initialized with a pre-trained network on CIFAR-10, and then
trained for 100 epochs using our objective in Equation~\ref{eq:objective}, with $\alpha=10$. We use a batch size of 128 and an initial learning rate of 0.1 with cosine learning rate decay. We default $k$ to $1000$ and provide analysis of different labeling budgets $k \in \{100, 500, 1000, 2000\}$ in Section~\ref{sec:ablations}. { In our experiment, the output of ${g}_{\boldsymbol{\theta}}$ is utilized as the score for OOD detection.} {In practice, we find that using one round of human feedback is sufficient to achieve strong performance.} 
Our implementation is based on PyTorch 1.8.1.  All experiments are performed using NVIDIA GeForce RTX 2080 Ti.

\paragraph{Evaluation metrics.} We report the accuracy of the ID and covariate OOD data, to measure the classification and OOD generalization performance. In addition, we report false positive rate (FPR) and AUROC for the OOD detection performance. The threshold for the OOD detector is selected based on the ID data when 95\% of ID test data points are declared as ID.

\subsection{Main Results}\label{exp:results}

\textbf{Competitive performance on both OOD detection and generalization tasks.} As shown in Table~\ref{tab:ood}, our approach achieves strong performance for both OOD generalization and OOD detection tasks jointly. For a comprehensive evaluation, we compare our method with three categories of methods: \textbf{(1)} methods developed specifically for OOD detection, \textbf{(2)} methods developed specifically for OOD generalization, and \textbf{(3)} methods that are trained with wild data like ours. We discuss them below.

First, we observe that our approach achieves superior performance compared to OOD detection baselines, including MSP~\cite{hendrycks2016baseline}, ODIN~\cite{liang2017enhancing}, 
Energy~\cite{liu2020energy}, 
Mahalanobis~\cite{lee2018simple}, 
ViM~\cite{wang2022vim}, KNN~\cite{sun2022out}, and latest baseline ASH~\cite{andrija2023extremely}.
Methods tailored for OOD detection tend to capture the domain-variant information and struggle with the covariate distribution shift, resulting in suboptimal OOD accuracy. 
% We deployed the same model trained with CE loss on ID data, thus presenting the same OOD accuracy on covariate OOD data. 
For example, our method achieves near-perfect FPR95 (0.12\%), when evaluating against SVHN as semantic OOD.  Secondly, while approaches for OOD generalization, containing IRM~\cite{arjovsky2019invariant}, ERM, Mixup~\cite{zhang2017mixup}, VREx~\cite{krueger2021out}, EQRM~\cite{cian2022probable}, and latest baseline SharpDRO~\cite{zhuo2023robust}, demonstrate improved OOD accuracy, they cannot effectively distinguish between ID data and semantic OOD data, leading to poor OOD detection performance. Lastly, closest to our setting, we compare with strong baselines trained with wild data, namely Outlier Exposure~\citep{hendrycks2018deep}, Energy-regularized learning~\citep{liu2020energy}, WOODS~\citep{katz2022training}, and SCONE~\citep{bai2023feed}.
These methods emerge as robust OOD detectors, yet display a notable decline in OOD generalization (except for SCONE). 
 In contrast, our method demonstrates consistently better results in terms of both OOD generalization and detection performance. Notably, our method even surpasses the current state-of-the-art method SCONE by 32.24\% in terms of FPR95 on the Texture OOD dataset, and simultaneously improves the OOD accuracy by 4.75\% on CIFAR-10-C.

\begin{table*}[t]
\centering
\caption{\small \textbf{Main results}: comparison with competitive OOD generalization and  detection methods on CIFAR-10. Results for LSUN-R and Texture datasets are in  Appendix~\ref{app:additional_ood}. *Since all the OOD detection methods use the same model trained with the CE loss on $\Pin$, they display the same ID and OOD accuracy on CIFAR-10-C. We report the average and std of our method based on \textbf{3 independent runs}. $\pm x$ denotes the rounded standard error.
}
\scalebox{0.53}{
\begin{tabular}{lcccc|cccc|cccc}
\toprule
\multirow{2}[2]{*}{\textbf{Method}} & \multicolumn{4}{c}{{SVHN $\mathbb{P}_\text{out}^\text{semantic}$, CIFAR-10-C $\mathbb{P}_\text{out}^\text{covariate}$}} & \multicolumn{4}{c}{{LSUN-C $\mathbb{P}_\text{out}^\text{semantic}$, CIFAR-10-C $\mathbb{P}_\text{out}^\text{covariate}$}} & \multicolumn{4}{c}{{Texture $\mathbb{P}_\text{out}^\text{semantic}$, CIFAR-10-C $\mathbb{P}_\text{out}^\text{covariate}$}}  \\
 & \textbf{OOD Acc.}$\uparrow$  & \textbf{ID Acc.}$\uparrow$ & \textbf{FPR}$\downarrow$ & \textbf{AUROC}$\uparrow$ & \textbf{OOD Acc.}$\uparrow$ & \textbf{ID Acc.}$\uparrow$ & \textbf{FPR}$\downarrow$ & \textbf{AUROC}$\uparrow$ &  \textbf{OOD Acc.}$\uparrow$ & \textbf{ID Acc.}$\uparrow$ & \textbf{FPR}$\downarrow$ & \textbf{AUROC}$\uparrow$ \\
\midrule
\emph{OOD detection}\\
\textbf{MSP}  & 75.05* & 94.84* & 48.49 & 91.89 & 75.05 & 94.84 & 30.80 & 95.65 & 75.05 & 94.84 & 59.28 & 88.50 \\
\textbf{ODIN}  & 75.05 &  94.84 & 33.35 & 91.96 & 75.05 &  94.84  & 15.52 & 97.04 & 75.05 &  94.84  & 49.12 & 84.97 \\
\textbf{Energy}  & 75.05 &  94.84  & 35.59 & 90.96 & 75.05 &  94.84  & 8.26 & 98.35 & 75.05 &  94.84  & 52.79 & 85.22 \\
\textbf{Mahalanobis}  & 75.05 &  94.84  & 12.89 & 97.62 & 75.05 &  94.84 & 39.22 & 94.15 & 75.05 &  94.84 & 15.00 & 97.33 \\
\textbf{ViM} & 75.05 & 94.84 & 21.95 & 95.48 & 75.05 & 94.84 & 5.90 & 98.82 & 75.05 & 94.84 & 29.35 & 93.70 \\
\textbf{KNN} & 75.05 & 94.84 & 28.92 & 95.71 & 75.05 & 94.84 & 28.08 & 95.33 & 75.05 & 94.84 & 39.50 & 92.73 \\
    \textbf{ASH} & 75.05 & 94.84 & 40.76 & 90.16 & 75.05 & 94.84 & 2.39 & 99.35 & 75.05 & {94.84} & 53.37 & 85.63 \\
\midrule
\emph{OOD generalization}\\
\textbf{ERM } & 75.05 &  94.84  & 35.59 & 90.96 & 75.05 &  94.84  & 8.26 & 98.35 & 75.05 & 94.84 & 52.79 & 85.22 \\
\textbf{Mixup }  & 79.17 & 93.30 & 97.33 & 18.78 & 79.17 & 93.30 & 52.10 & 76.66 & 79.17 & 93.30 & 58.24 & 75.70 \\
\textbf{IRM }  & 77.92 & 90.85 & 63.65 & 90.70 & 77.92 & 90.85 & 36.67 & 94.22 & 77.92 & 90.85 & 59.42 & 87.81 \\
\textbf{VREx }  & 76.90 & 91.35 & 55.92 & 91.22 & 76.90 & 91.35 & 51.50 & 91.56 & 76.90 & 91.35 & 65.45 & 85.46 \\
    \textbf{EQRM} & 75.71 & 92.93 & 51.86 & 90.92 & 75.71 & 92.93 & 21.53 & 96.49 & 75.71 & 92.93 & 57.18 & 89.11  \\ 
    \textbf{SharpDRO} & 79.03 & \textbf{94.91} & 21.24 & 96.14 & 79.03 & 94.91 & 5.67 & 98.71 & 79.03 & \textbf{94.91} & 42.94 & 89.99 \\ 
\midrule
\emph{Learning w. $\Pwild$}\\
\textbf{OE}         & 37.61 & 94.68 & 0.84 & 99.80 & 41.37 & 93.99 & 3.07 & 99.26  & 44.71 & 92.84 & 29.36 & 93.93    \\
\textbf{Energy (w. outlier)} & 20.74 & 90.22 & 0.86 & 99.81 & 32.55 & 92.97 & 2.33 & 99.93  & 49.34 & 94.68 & 16.42 & 96.46    \\
\textbf{Woods} & 52.76  & 94.86 & 2.11 & 99.52 & 76.90 & \textbf{95.02} & 1.80 & 99.56  & 83.14 & 94.49 & 39.10 & 90.45   \\
\textbf{Scone}  & 84.69 & 94.65  & 10.86 & 97.84  & 84.58 & 93.73  & 10.23 & 98.02 & 85.56 & 93.97 & 37.15 & 90.91  \\
\rowcolor{mygray}
\rowcolor{mygray}
\textbf{Ours} & \textbf{88.26}$_{\pm 0.07}$ & 94.68$_{\pm 0.07}$ & \textbf{0.12}$_{\pm 0.00}$ & \textbf{99.98}$_{\pm 0.00}$
& \textbf{90.63}$_{\pm 0.02}$& 94.33$_{\pm 0.01}$ & \textbf{0.07}$_{\pm 0.00}$ & \textbf{99.97}$_{\pm 0.00}$ 
&  \textbf{90.31}$_{\pm 0.02}$ & 94.33$_{\pm 0.03}$ &  \textbf{4.91}$_{\pm 0.03}$ & \textbf{98.28}$_{\pm 0.01}$ \\
\bottomrule
\end{tabular}%
}
\label{tab:ood}%
\end{table*}%

\begin{figure*}[t]
\centering
\begin{subfigure}[b]{0.25\textwidth}
    \includegraphics[width=\textwidth]{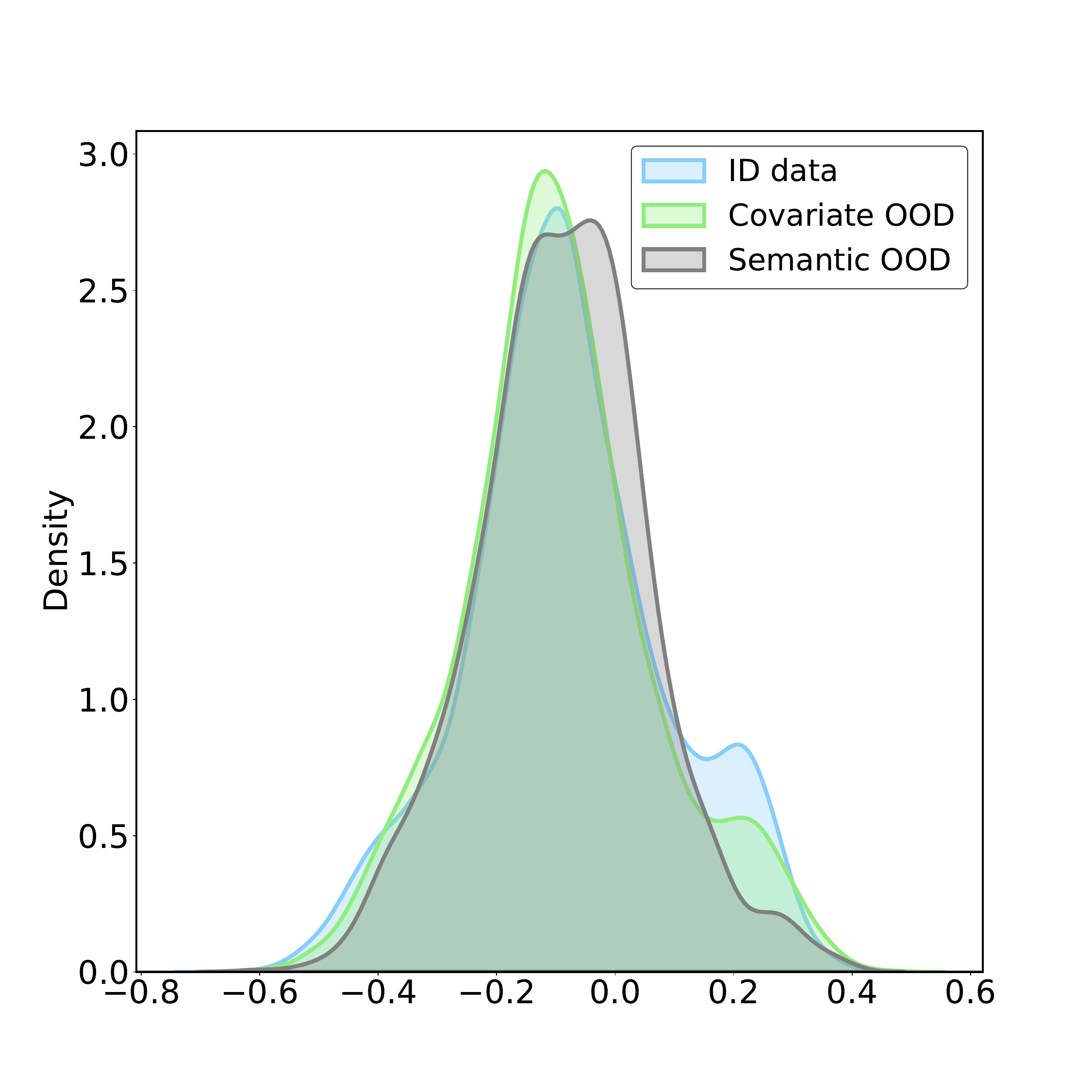}
    \caption{Scores of ERM}
    \label{fig:kde-erm}
\end{subfigure}
\begin{subfigure}[b]{0.25\textwidth}
    \includegraphics[width=\textwidth]{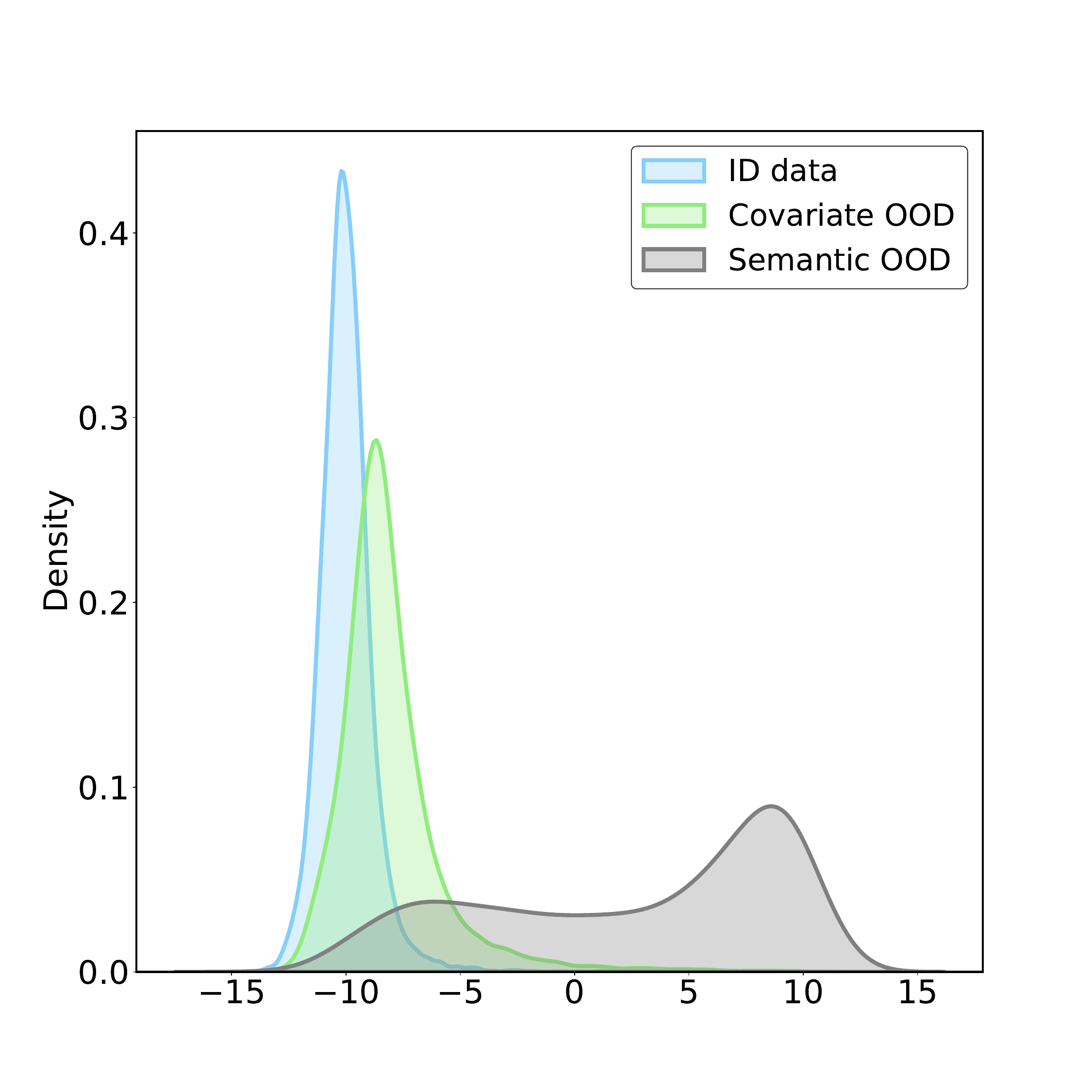}
    \caption{Scores of Ours}
    \label{fig:kde-ours}
\end{subfigure}
\begin{subfigure}[b]{0.223\textwidth}
    \includegraphics[width=\textwidth]{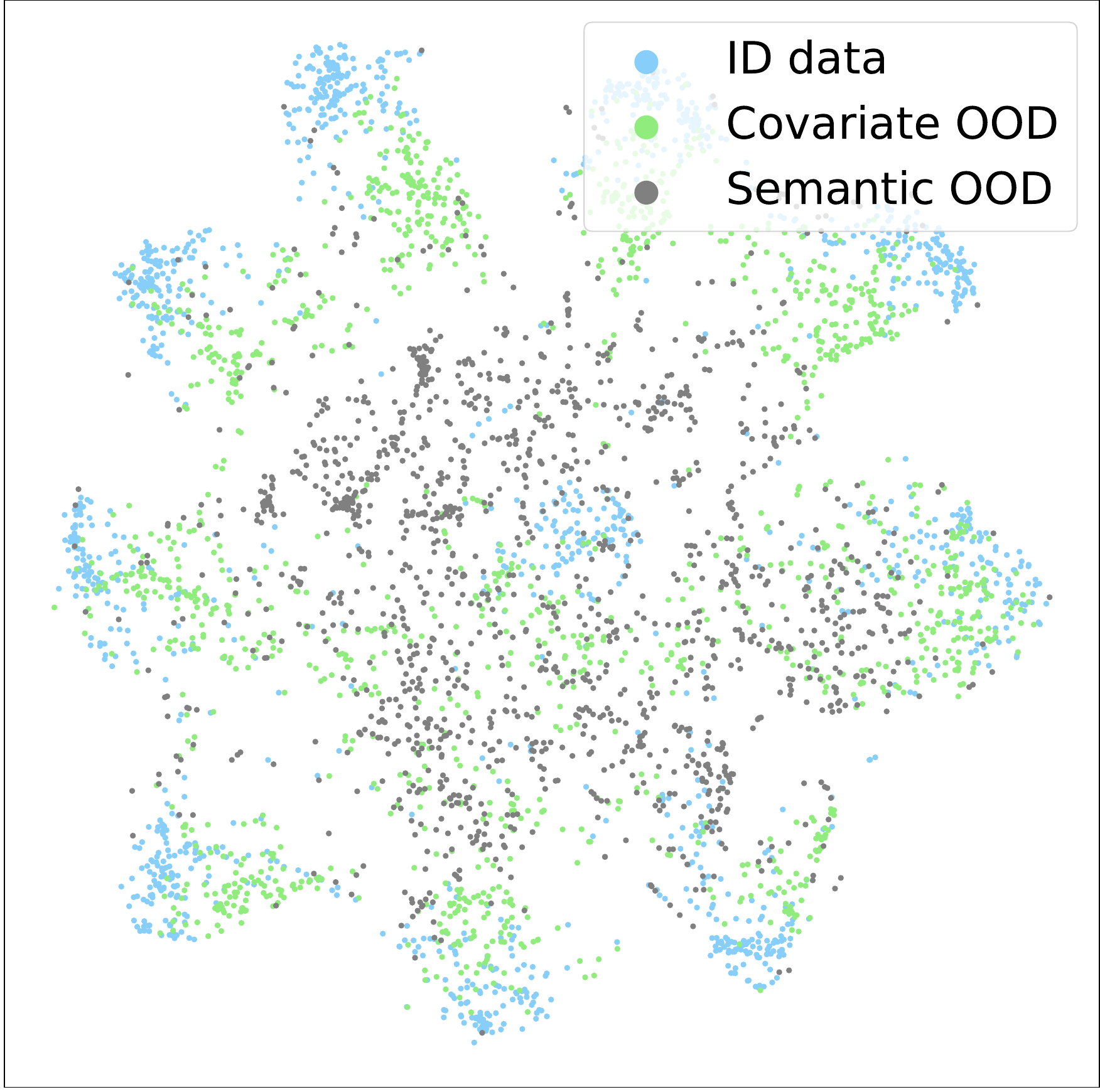}
    \caption{T-SNE of ERM}
    \label{fig:tsne-erm}
\end{subfigure}
\hspace{0.3mm}
\begin{subfigure}[b]{0.223\textwidth}
    \includegraphics[width=\textwidth]{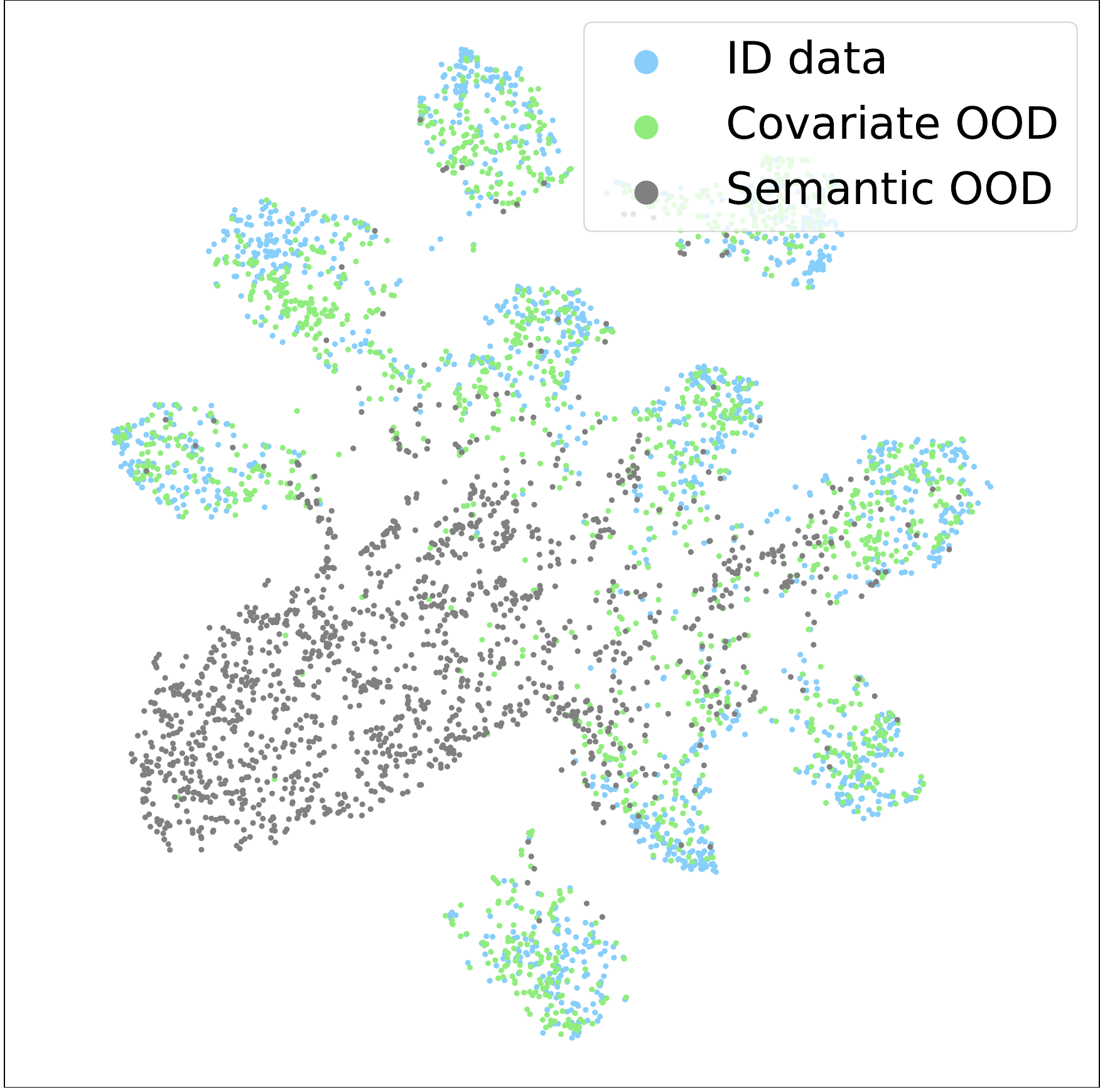}
    \caption{T-SNE of Ours}
    \label{fig:tsne-ours}
\end{subfigure}

\caption{\small
(a)-(b) Score distributions for ERM vs. our method. Different colors represent the different types of test data: CIFAR-10 as $\Pin$ (\textcolor{blue}{blue}), CIFAR-10-C as $\mathbb{P}_\text{out}^\text{covariate}$ (\textcolor{green}{green}), and Textures as $\mathbb{P}_\text{out}^\text{semantic}$ (\textcolor{gray}{gray}). (c)-(d): T-SNE visualization of the image embeddings using ERM vs. our method.
}

\label{fig:tsne}
\end{figure*}

\textbf{Additional results on PACS benchmark.}
In Table~\ref{tab:pacs}, we report results on the PACS dataset~\cite{li2017deeper} from DomainBed. We compare our approach with various common OOD generalization baselines, including 
IRM~\citep{arjovsky2019invariant}, 
DANN~\citep{ganin2016domain}, 
CDANN~\citep{li2018deep}, 
GroupDRO~\citep{sagawa2020distributionally},
MTL~\citep{blanchard2021domain},
I-Mixup~\citep{wang2020heterogeneous}, 
MMD~\citep{li2018domain}, 
VREx~\citep{krueger2021out}, 
MLDG~\citep{li2018learning}, 
ARM~\citep{zhang2020adaptive}, 
RSC~\citep{huang2020self},
Mixstyle~\citep{zhou2021domain},
ERM~\citep{vapnik1999overview}, 
CORAL~\citep{sun2016deep},
SagNet~\citep{nam2021reducing},
SelfReg~\citep{kim2021selfreg},
GVRT~\citep{min2022grounding},
and the latest baseline VNE~\citep{kim2023vne}.
Our method achieves an average accuracy of $91.3\%$, which outperforms these OOD generalization baselines.

\begin{table}[t]
\centering
\scalebox{0.95}{\begin{tabular}{lccccc}
\toprule
\textbf{Algorithm}  & \textbf{Art painting} & \textbf{Cartoon} & \textbf{Photo} & \textbf{Sketch} & \textbf{Avg (\%)}\\
\midrule
\textbf{IRM}~\citep{arjovsky2019invariant}  & 84.8 & 76.4 & 96.7 & 76.1 & 83.5  \\
\textbf{DANN}~\citep{ganin2016domain}  & 86.4 & 77.4 & 97.3 & 73.5 & 83.7   \\
\textbf{CDANN}~\citep{li2018deep}  & 84.6 & 75.5 & 96.8 & 73.5 & 82.6  \\
\textbf{GroupDRO}~\citep{sagawa2020distributionally}  & 83.5 & 79.1 & 96.7 & 78.3 & 84.4 \\
\textbf{MTL}~\citep{blanchard2021domain}  & 87.5 & 77.1 & 96.4 & 77.3 & 84.6 \\
\textbf{I-Mixup}~\citep{wang2020heterogeneous} & 86.1 & 78.9 & 97.6 & 75.8 & 84.6  \\
\textbf{MMD}~\citep{li2018domain}   & 86.1 & 79.4 & 96.6 & 76.5 & 84.7  \\
\textbf{VREx}~\citep{krueger2021out}   & 86.0 & 79.1 & 96.9 & 77.7 & 84.9  \\
\textbf{MLDG}~\citep{li2018learning}  & 85.5 & 80.1 & 97.4 & 76.6 & 84.9  \\
\textbf{ARM}~\citep{zhang2020adaptive} & 86.8 & 76.8 & 97.4 & 79.3 & 85.1 \\
\textbf{RSC}~\citep{huang2020self} & 85.4 & 79.7 & 97.6 & 78.2 & 85.2 \\
\textbf{Mixstyle}~\citep{zhou2021domain}  & 86.8 & 79.0 & 96.6 & 78.5 & 85.2 \\
\textbf{ERM}~\citep{vapnik1999overview}  & 84.7 & 80.8 & 97.2 & 79.3 & 85.5  \\
\textbf{CORAL}~\citep{sun2016deep}  & 88.3 & 80.0 & 97.5 & 78.8 & 86.2 \\
\textbf{SagNet}~\citep{nam2021reducing} & 87.4 & 80.7 & 97.1 & 80.0 & 86.3  \\
\textbf{SelfReg}~\citep{kim2021selfreg} & 87.9 & 79.4 & 96.8 & 78.3 & 85.6 \\
\textbf{GVRT}~\cite{min2022grounding} & 87.9 & 78.4 & 98.2 & 75.7 & 85.1 \\
\textbf{VNE}~\citep{kim2023vne} & \textbf{88.6} & 79.9 & 96.7 & 82.3 & 86.9 \\
\rowcolor{mygray}
\textbf{Ours} & 88.1 & \textbf{87.4} & \textbf{98.5} & \textbf{91.3} & \textbf{91.3} \\
\bottomrule
\end{tabular}}         
%\vspace{-0.15cm}
\caption[]{\small Comparison with domain generalization methods on the PACS benchmark. All methods are trained on ResNet-50. The model selection is based on a training domain validation set.
}
% \vspace{-0.9em}
\label{tab:pacs}
\end{table}

\paragraph{Visualization of OOD score distributions.}
Figure~\ref{fig:tsne} (a) and (b) visualize the score distributions for ERM (without human feedback) vs. our method. The OOD score distributions between $\mathbb{P}_\text{in}$ and $\mathbb{P}_\text{out}^\text{semantic}$ are more clearly differentiated using our method. This separation leads to an improvement in OOD detection performance. The enhanced separation can be attributed to the effectiveness of the OOD learning with human feedback framework in recognizing semantic-shifted OOD data.

\paragraph{Visualization of feature embeddings.}
Figure~\ref{fig:tsne} (c) and (d) present t-SNE visualizations~\cite{van2008visualizing} of feature embeddings on the test data. The \textcolor{blue}{blue} points represent the test ID data (CIFAR-10), \textcolor{green}{green} points denote test samples from CIFAR-10-C, and \textcolor{gray}{gray} points are from the Texture dataset. This visualization indicates that embeddings of $\mathbb{P}_\text{in}$ (CIFAR) and $\mathbb{P}_\text{out}^\text{covariate}$ (CIFAR-C) are more closely aligned using our method, which contributes to enhanced OOD generalization performance.

% \vspace{-0.2cm}
\subsection{In-Depth Analysis}
\label{sec:ablations}

\begin{wraptable}{r}{0.5\textwidth}
\centering
\vspace{-0.5cm}
\caption{\small Ablation on labeling budget $k$. We train on CIFAR-10 as ID, using wild data with $\pi_c=0.5$ (CIFAR-10-C) and $\pi_s=0.1$ (Texture).}
\scalebox{0.7}{
\begin{tabular}{c|cccc}
\toprule
\textbf{Budget $k$}  
& \textbf{OOD Acc.}$\uparrow$ & \textbf{ID Acc.}$\uparrow$ &\textbf{FPR}$\downarrow$ & \textbf{AUROC}$\uparrow$   \\
\midrule
100 & 86.62 & 95.03 & 22.16 & 91.34  \\
500 & 89.96 & 94.70 & 7.50 & 97.45  \\
1000 & 90.31 &  94.33 &  4.91 &  98.28 \\
2000 & 91.09 & 94.42 & 3.60 & 98.90  \\
\bottomrule
\end{tabular}%
}
\vspace{-0.4cm}
\label{tab:budget}%
\end{wraptable}%

\paragraph{Impact of labeling budget $k$.} The budget $k$ is central to our OOD learning with human feedback framework. In table~\ref{tab:budget}, we conduct an ablation by varying $k \in \{100, 500, 1000, 2000 \}$. We observe that the performance of OOD generalization and OOD detection both increase with a larger number of annotation budgets. For example, The OOD accuracy improves from $86.62\%$ to $91.09\%$ when the budget changes from $k=100$ to $k=2000$. At the same time, the FPR95 reduces from $22.16\%$ to $3.60\%$. Interestingly, we do notice a marginal difference between $k=1000$ and $k=2000$, which suggests that our method suffices to achieve strong performance without excessive labeling budget.

\begin{wraptable}{r}{0.5\textwidth}
\centering
\vspace{-0.5cm}
\caption{\small Impact of sampling scores. We use budget $k=1000$ for all methods. We train on CIFAR-10 as ID, using wild data with $\pi_c=0.5$ (CIFAR-10-C) and $\pi_s=0.1$ (Texture).}
\scalebox{0.7}{
\begin{tabular}{c|cccc}
\toprule
\textbf{Sampling score} 
& \textbf{OOD Acc.}$\uparrow$ & \textbf{ID Acc.}$\uparrow$ &\textbf{FPR}$\downarrow$ & \textbf{AUROC}$\uparrow$   \\
\midrule
Least confidence & 90.22 & 94.73 & 8.44 & 96.45  \\
Entropy  & 90.02 & 94.80 & 8.19 & 96.66  \\
Margin & 90.31 & 94.56 & 6.37 & 97.68 \\
BADGE & 88.68 & 94.56 & 8.77 & 96.39 \\
Random & 89.22 & \textbf{94.84} & 9.45 & 95.41 \\
Energy score & 88.75 & 94.78 & 10.89 & 95.19 \\
\rowcolor{mygray}
Gradient-based & \textbf{90.31} &  94.33 &  \textbf{4.91} &  \textbf{98.28} \\
\bottomrule
\end{tabular}%
}
\vspace{-0.2cm}
\label{tab:scores}%
\end{wraptable}%
\paragraph{Impact of different sampling scores.} Our main results are based on the gradient-based sampling score (\emph{c.f.} Section~\ref{sec:selection}). Here
we provide additional comparison using different sampling scores, including least-confidence sampling~\cite{wang2014new}, entropy-based sampling~\cite{wang2014new}, margin-based sampling~\cite{roth2006margin}, energy score~\cite{liu2020energy}, BADGE~\cite{ash2019deep}, and random sampling. Detailed description for each method is provided in Appendix~\ref{sec:app_meth}. We observe that the gradient-based score demonstrates overall strong performance in terms of OOD generalization and detection.

\paragraph{Impact of different sampling strategies.}\label{exp:sampling}
In Table~\ref{tab:select}, we compare the performance of using three different sampling strategies (1) top-$k$ sampling, (2) near-boundary sampling, and (3) mixed sampling. For all three strategies, we employ the same labeling budget $k=100$ and the same gradient-based scoring function (\emph{c.f.} Section~\ref{sec:selection}). We observe that the top-$k$ sampling achieves the best OOD generalization performance. This is because the selected samples are furthest away from the ID data, presenting challenging cases of covariate-shifted OOD data. By labeling and learning from these hard cases, the multi-class classifier acquires a stronger generalization to OOD data. 
Moreover, near-boundary sampling displays the lowest performance in both OOD generalization and OOD detection. To reason for this, we provide the number of samples (among 100) belonging to ID, covariate OOD, and semantic OOD respectively. As seen in Table~\ref{tab:select}, the majority of the samples appear to be either ID or covariate OOD (easy cases), whereas only 6 out of 100 samples are semantic OOD. As a result, this sampling strategy does not provide sufficient informative samples needed for the OOD detector. Lastly, the mixed sampling strategy achieves performance somewhere in between top-$k$ sampling and near-boundary sampling, which aligns with our expectations.

\begin{table*}[h]
\centering
\vspace{-0.05cm}
\caption{\small Impact of sampling strategy ($k=100$). We train on CIFAR-10 as ID, using wild data with $\pi_c=0.5$ (CIFAR-10-C) and $\pi_s=0.1$ (Texture).}
\vspace{-0.2cm}
\scalebox{0.75}{
\begin{tabular}{c|cccc|ccc}
\toprule
\textbf{Sampling Strategies}  
& \textbf{OOD Acc.}$\uparrow$ & \textbf{ID Acc.}$\uparrow$ &\textbf{FPR}$\downarrow$ & \textbf{AUROC}$\uparrow$ & \textbf{$\#$ID} & \textbf{$\#$Covariate OOD} & \textbf{$\#$Semantic OOD}  \\
\midrule
{Top-$k$ sampling} & 86.62 & 95.03 & 22.16 & 91.34 & 0 & 57 & 43 \\
{Near-boundary sampling} & 85.12 & 95.18 & 41.72 & 76.56 & 44 & 50 & 6 \\
{Mixed sampling} & 85.36 & 95.10 & 36.24 & 85.37 & 25 & 51 & 24 \\
\bottomrule
\end{tabular}%
}
\vspace{-1em}
\label{tab:select}%
\end{table*}%

\section{Related Works}

\textbf{Out-of-distribution generalization} is a crucial problem in machine learning when training and test data are sampled from different distributions. Compared to domain adaptation task~\cite{you2019universal, kumar2020understanding, wang2022continual, prabhu2021active, su2020active, kothandaraman2023salad}, OOD generalization is more challenging, as it focuses on adapting to \emph{unseen} covariate-shifted data without access to any sample from the target domain~\cite{gulrajani2020search, bai2021decaug, koh2021wilds, ye2022ood, cho2023promptstyler}. 
Existing theoretical work on domain adaptation~\cite{mansour2009domain, blitzer2007learning, ben2006analysis, ben2010theory, gui2024active} discusses the generalization error bound with access to target domain data. Our analysis presents several key distinctions: (1) Our focus is on the wild setting, necessitating an additional step of selection and human annotation to acquire selected covariate OOD data for retraining; (2) Our OOD generalization error bound differs in that it is based on a gradient-based discrepancy between ID and OOD data. This  diverges from classical domain adaptation literature and synergistically aligns with our gradient-based sampling score.

A prevalent approach in the OOD generalization area is to learn a domain-invariant data representation across training domains. This involves various strategies like invariant risk minimization~\cite{arjovsky2019invariant, ahuja2020invariant, krueger2021out, eastwood2022probable}, robust optimization~\cite{sagawa2020distributionally, dai2023moderately, huang2023robust}, domain adversarial learning~\cite{li2018domain, wang2022improving, dayal2023madg}, meta-learning~\cite{li2018learning, zhang2021adaptive}, and gradient alignment~\cite{shi2021gradient, rame2022fishr, guo2023out}. Some OOD algorithms do not require multiple training domains~\cite{tong2023distribution}. Other approaches include model ensembles~\cite{chen2023explore, rame2023model}, graph learning~\cite{gui2023joint, yuan2023environment}, and test-time adaptation~\cite{chen2023coda, park2023test, samadh2023align, chen2023improved}.  SCONE~\cite{bai2023feed} aims to enhance OOD robustness and detection by utilizing wild data from the open world. Building on SCONE's problem setting, we leverage human feedback to train a robust classifier and OOD detector, supported by theoretical analysis.

\textbf{Out-of-distribution detection} has garnered increasing interest in recent years~\cite{yang2021generalized}. 
Recent methods can be broadly classified into post hoc and regularized-based algorithms.
Post hoc methods, which include confidence-based methods~\cite{hendrycks2016baseline, liang2018enhancing}, energy-based scores~\cite{liu2020energy, wang2021can, sun2021react, djurisic2023extremely,zhang2023outofdistribution,lafon2023hybrid}, gradient-based score~\cite{huang2021importance,behpour2023gradorth}, Bayesian approaches~\cite{gal2016dropout, maddox2019simple, kristiadi2020being}, and distance-based methods~\cite{lee2018simple, sun2022out, du2022siren, ming2022exploit}, perform OOD detection by devising OOD scores at test time. On the other hand, another line of work addresses OOD detection through training-time regularization~\cite{hendrycks2018deep, hein2019whyrelu,  ming2022poem, wang2023learning}, which typically relies on a clean set of auxiliary semantic OOD data.
 WOODS~\cite{katz2022training} and SAL~\cite{du2024sal} address this issue by utilizing wild mixture data, comprising both unlabeled ID and semantic OOD data. While SAL also employs a gradient-based score, it does not consider OOD generalization or leveraging human feedback, which is our main focus. 
Our work builds upon the setting in SCONE~\cite{bai2023feed} and introduces a novel OOD learning with human feedback framework aimed at enhancing both OOD generalization and detection jointly.

\textbf{Learning with human feedback} emphasizes the selection of the most informative data points for labeling~\cite{cohn1994improving, balcan2006agnostic, settles2009active, wang2014new, ren2021survey, karzand2020maximin}. Well-known sampling strategies include disagreement-based sampling, diversity sampling, and uncertainty sampling. Disagreement-based sampling~\cite{seung1992query, hanneke2014theory, zhu2022active} focuses on selecting data points that elicit disagreement among multiple models. Diversity sampling, as explored in~\cite{du2015exploring, zhdanov2019diverse, citovsky2021batch}, aims to select data points that are both diverse and representative of the data's overall distribution. Uncertainty sampling~\cite{lewis1995sequential, scheffer2001active, shannon2001mathematical, lu2016online, wang2014new, ducoffe2018adversarial, beluch2018power} seeks to identify data points where model confidence is lowest, thus reducing uncertainty. More advanced methods~\cite{ash2019deep, wang2022deep, elenter2022lagrangian, mohamadi2022making} incorporate a mix of uncertainty and diversity sampling techniques.
Another research direction involves deep active learning under data imbalance~\cite{kothawade2021similar, emam2021active, zhang2022galaxy, coleman2022similarity, zhang2023algorithm, kothawade2022active, aggarwal2020active}. The work of~\cite{das2023accelerating, deng2023counterfactual, zhan2023pareto, shayovitz2024deep, benkert2022forgetful} considers distribution shifts in the context of active learning. 
However, previous approaches do not consider OOD robustness and the challenges posed by realistic scenarios involving wild data.
In our work, we introduce a novel framework tailored for both OOD generalization and detection challenges. This framework is further supported by a theoretical justification of our learning approach.

\section{Conclusion}

This paper introduces a new framework leveraging human feedback for both OOD generalization and OOD detection. Our framework tackles the fundamental challenge of leveraging the wild data---the lack of supervision for samples from the wild distribution. Specifically, we employ a gradient-based sampling score to selectively label informative OOD samples from the wild
data distribution and then train a robust multi-class classifier and
an OOD detector. Importantly, our algorithm only requires a small annotation budget and performs competitively compared to various baselines, which offers
practical advantages. We further provide a theoretical analysis of the learnability of the classifier.  We
hope our work will inspire future research on both empirical and theoretical
understanding of OOD generalization and detection in a synergistic way.

\section*{Acknowledgement}
We thank Zhen Fang and Ting Cai for the helpful discussion and feedback on our work. Research is supported by the Office of Naval Research under grant number N00014-23-1-2643, AFOSR Young Investigator Program under award number FA9550-23-1-0184, and National Science Foundation (NSF) Award No. IIS-2237037 \& IIS-2331669. Xuefeng Du is supported by the Jane Street graduate fellowship program.

 % 

%\section*{References}

\bibliography{example_paper}

\begin{thebibliography}{100}

\bibitem{yang2021generalized}
Jingkang Yang, Kaiyang Zhou, Yixuan Li, and Ziwei Liu.
\newblock Generalized out-of-distribution detection: A survey.
\newblock {\em arXiv preprint arXiv:2110.11334}, 2021.

\bibitem{chapaneri2022covariate}
Santosh Chapaneri and Deepak Jayaswal.
\newblock Covariate shift in machine learning.
\newblock In {\em Handbook of Research on Machine Learning}, pages 87--119.
  Apple Academic Press, 2022.

\bibitem{zhou2022domain}
Kaiyang Zhou, Ziwei Liu, Yu~Qiao, Tao Xiang, and Chen~Change Loy.
\newblock Domain generalization: A survey.
\newblock {\em IEEE Transactions on Pattern Analysis and Machine Intelligence},
  2022.

\bibitem{koh2021wilds}
Pang~Wei Koh, Shiori Sagawa, Henrik Marklund, Sang~Michael Xie, Marvin Zhang,
  Akshay Balsubramani, Weihua Hu, Michihiro Yasunaga, Richard~Lanas Phillips,
  Irena Gao, et~al.
\newblock Wilds: A benchmark of in-the-wild distribution shifts.
\newblock In {\em International Conference on Machine Learning}, pages
  5637--5664. PMLR, 2021.

\bibitem{ye2022ood}
Nanyang Ye, Kaican Li, Haoyue Bai, Runpeng Yu, Lanqing Hong, Fengwei Zhou,
  Zhenguo Li, and Jun Zhu.
\newblock Ood-bench: Quantifying and understanding two dimensions of
  out-of-distribution generalization.
\newblock In {\em Proceedings of the IEEE/CVF Conference on Computer Vision and
  Pattern Recognition}, pages 7947--7958, 2022.

\bibitem{bai2023feed}
Haoyue Bai, Gregory Canal, Xuefeng Du, Jeongyeol Kwon, Robert~D Nowak, and
  Yixuan Li.
\newblock Feed two birds with one scone: Exploiting wild data for both
  out-of-distribution generalization and detection.
\newblock In {\em International Conference on Machine Learning}, pages
  1454--1471. PMLR, 2023.

\bibitem{hotelling1933analysis}
Harold Hotelling.
\newblock Analysis of a complex of statistical variables into principal
  components.
\newblock {\em Journal of educational psychology}, 24(6):417, 1933.

\bibitem{krizhevsky2009learning}
Alex Krizhevsky, Geoffrey Hinton, et~al.
\newblock Learning multiple layers of features from tiny images.
\newblock 2009.

\bibitem{hendrycks2018benchmarking}
Dan Hendrycks and Thomas~G Dietterich.
\newblock Benchmarking neural network robustness to common corruptions and
  surface variations.
\newblock {\em arXiv preprint arXiv:1807.01697}, 2018.

\bibitem{netzer2011reading}
Yuval Netzer, Tao Wang, Adam Coates, Alessandro Bissacco, Bo~Wu, and Andrew~Y
  Ng.
\newblock Reading digits in natural images with unsupervised feature learning.
\newblock 2011.

\bibitem{cimpoi2014describing}
Mircea Cimpoi, Subhransu Maji, Iasonas Kokkinos, Sammy Mohamed, and Andrea
  Vedaldi.
\newblock Describing textures in the wild.
\newblock In {\em Proceedings of the IEEE conference on computer vision and
  pattern recognition}, pages 3606--3613, 2014.

\bibitem{zhou2017places}
Bolei Zhou, Agata Lapedriza, Aditya Khosla, Aude Oliva, and Antonio Torralba.
\newblock Places: A 10 million image database for scene recognition.
\newblock {\em IEEE transactions on pattern analysis and machine intelligence},
  40(6):1452--1464, 2017.

\bibitem{yu2015lsun}
Fisher Yu, Ari Seff, Yinda Zhang, Shuran Song, Thomas Funkhouser, and Jianxiong
  Xiao.
\newblock Lsun: Construction of a large-scale image dataset using deep learning
  with humans in the loop.
\newblock {\em arXiv preprint arXiv:1506.03365}, 2015.

\bibitem{liu2020energy}
Weitang Liu, Xiaoyun Wang, John Owens, and Yixuan Li.
\newblock Energy-based out-of-distribution detection.
\newblock {\em Advances in neural information processing systems},
  33:21464--21475, 2020.

\bibitem{katz2022training}
Julian Katz-Samuels, Julia~B Nakhleh, Robert Nowak, and Yixuan Li.
\newblock Training ood detectors in their natural habitats.
\newblock In {\em International Conference on Machine Learning}, pages
  10848--10865. PMLR, 2022.

\bibitem{zagoruyko2016wide}
Sergey Zagoruyko and Nikos Komodakis.
\newblock Wide residual networks.
\newblock {\em arXiv preprint arXiv:1605.07146}, 2016.

\bibitem{duchi2011adaptive}
John Duchi, Elad Hazan, and Yoram Singer.
\newblock Adaptive subgradient methods for online learning and stochastic
  optimization.
\newblock {\em Journal of machine learning research}, 12(7), 2011.

\bibitem{hendrycks2016baseline}
Dan Hendrycks and Kevin Gimpel.
\newblock A baseline for detecting misclassified and out-of-distribution
  examples in neural networks.
\newblock {\em arXiv preprint arXiv:1610.02136}, 2016.

\bibitem{liang2017enhancing}
Shiyu Liang, Yixuan Li, and R~Srikant.
\newblock Enhancing the reliability of out-of-distribution image detection in
  neural networks.
\newblock In {\em International Conference on Learning Representations}, 2018.

\bibitem{lee2018simple}
Kimin Lee, Kibok Lee, Honglak Lee, and Jinwoo Shin.
\newblock A simple unified framework for detecting out-of-distribution samples
  and adversarial attacks.
\newblock {\em Advances in neural information processing systems}, 31, 2018.

\bibitem{wang2022vim}
Haoqi Wang, Zhizhong Li, Litong Feng, and Wayne Zhang.
\newblock Vim: Out-of-distribution with virtual-logit matching.
\newblock In {\em Proceedings of the IEEE/CVF conference on computer vision and
  pattern recognition}, pages 4921--4930, 2022.

\bibitem{sun2022out}
Yiyou Sun, Yifei Ming, Xiaojin Zhu, and Yixuan Li.
\newblock Out-of-distribution detection with deep nearest neighbors.
\newblock In {\em International Conference on Machine Learning}, pages
  20827--20840. PMLR, 2022.

\bibitem{andrija2023extremely}
Andrija Djurisic, Nebojsa Bozanic, Arjun Ashok, and Rosanne Liu.
\newblock Extremely simple activation shaping for out-of-distribution
  detection.
\newblock In {\em {ICLR}}. OpenReview.net, 2023.

\bibitem{arjovsky2019invariant}
Martin Arjovsky, L{\'e}on Bottou, Ishaan Gulrajani, and David Lopez-Paz.
\newblock Invariant risk minimization.
\newblock {\em arXiv preprint arXiv:1907.02893}, 2019.

\bibitem{zhang2017mixup}
Hongyi Zhang, Moustapha Cisse, Yann~N Dauphin, and David Lopez-Paz.
\newblock mixup: Beyond empirical risk minimization.
\newblock In {\em International Conference on Learning Representations}, 2018.

\bibitem{krueger2021out}
David Krueger, Ethan Caballero, Joern-Henrik Jacobsen, Amy Zhang, Jonathan
  Binas, Dinghuai Zhang, Remi Le~Priol, and Aaron Courville.
\newblock Out-of-distribution generalization via risk extrapolation (rex).
\newblock In {\em International Conference on Machine Learning}, pages
  5815--5826. PMLR, 2021.

\bibitem{cian2022probable}
Cian Eastwood, Alexander Robey, Shashank Singh, Julius von K{\"{u}}gelgen,
  Hamed Hassani, George~J. Pappas, and Bernhard Sch{\"{o}}lkopf.
\newblock Probable domain generalization via quantile risk minimization.
\newblock In {\em NeurIPS}, 2022.

\bibitem{zhuo2023robust}
Zhuo Huang, Miaoxi Zhu, Xiaobo Xia, Li~Shen, Jun Yu, Chen Gong, Bo~Han, Bo~Du,
  and Tongliang Liu.
\newblock Robust generalization against photon-limited corruptions via
  worst-case sharpness minimization.
\newblock In {\em {CVPR}}, pages 16175--16185. {IEEE}, 2023.

\bibitem{hendrycks2018deep}
Dan Hendrycks, Mantas Mazeika, and Thomas Dietterich.
\newblock Deep anomaly detection with outlier exposure.
\newblock {\em arXiv preprint arXiv:1812.04606}, 2018.

\bibitem{li2017deeper}
Da~Li, Yongxin Yang, Yi-Zhe Song, and Timothy~M Hospedales.
\newblock Deeper, broader and artier domain generalization.
\newblock In {\em Proceedings of the IEEE international conference on computer
  vision}, pages 5542--5550, 2017.

\bibitem{ganin2016domain}
Yaroslav Ganin, Evgeniya Ustinova, Hana Ajakan, Pascal Germain, Hugo
  Larochelle, Fran{\c{c}}ois Laviolette, Mario Marchand, and Victor Lempitsky.
\newblock Domain-adversarial training of neural networks.
\newblock {\em The Journal of Machine Learning Research}, 17(1):2096--2030,
  2016.

\bibitem{li2018deep}
Ya~Li, Xinmei Tian, Mingming Gong, Yajing Liu, Tongliang Liu, Kun Zhang, and
  Dacheng Tao.
\newblock Deep domain generalization via conditional invariant adversarial
  networks.
\newblock In {\em European Conference on Computer Vision}, pages 624--639,
  2018.

\bibitem{sagawa2020distributionally}
Shiori Sagawa, Pang~Wei Koh, Tatsunori~B. Hashimoto, and Percy Liang.
\newblock Distributionally robust neural networks for group shifts: On the
  importance of regularization for worst-case generalization.
\newblock In {\em International Conference on Learning Representations}, 2020.

\bibitem{blanchard2021domain}
Gilles Blanchard, Aniket~Anand Deshmukh, {\"U}run Dogan, Gyemin Lee, and
  Clayton Scott.
\newblock Domain generalization by marginal transfer learning.
\newblock {\em The Journal of Machine Learning Research}, 22(1):46--100, 2021.

\bibitem{wang2020heterogeneous}
Yufei Wang, Haoliang Li, and Alex~C Kot.
\newblock Heterogeneous domain generalization via domain mixup.
\newblock In {\em IEEE International Conference on Acoustics, Speech and Signal
  Processing}, pages 3622--3626, 2020.

\bibitem{li2018domain}
Haoliang Li, Sinno~Jialin Pan, Shiqi Wang, and Alex~C Kot.
\newblock Domain generalization with adversarial feature learning.
\newblock In {\em Proceedings of the IEEE conference on computer vision and
  pattern recognition}, pages 5400--5409, 2018.

\bibitem{li2018learning}
Da~Li, Yongxin Yang, Yi-Zhe Song, and Timothy Hospedales.
\newblock Learning to generalize: Meta-learning for domain generalization.
\newblock In {\em Proceedings of the AAAI conference on artificial
  intelligence}, volume~32, 2018.

\bibitem{zhang2020adaptive}
Marvin~Mengxin Zhang, Henrik Marklund, Nikita Dhawan, Abhishek Gupta, Sergey
  Levine, and Chelsea Finn.
\newblock Adaptive risk minimization: Learning to adapt to domain shift.
\newblock In {\em Advances in Neural Information Processing Systems}, 2021.

\bibitem{huang2020self}
Zeyi Huang, Haohan Wang, Eric~P Xing, and Dong Huang.
\newblock Self-challenging improves cross-domain generalization.
\newblock In {\em European Conference on Computer Vision}, pages 124--140,
  2020.

\bibitem{zhou2021domain}
Kaiyang Zhou, Yongxin Yang, Yu~Qiao, and Tao Xiang.
\newblock Domain generalization with mixstyle.
\newblock In {\em International Conference on Learning Representations}, 2021.

\bibitem{vapnik1999overview}
Vladimir~N Vapnik.
\newblock An overview of statistical learning theory.
\newblock {\em IEEE Transactions on Neural Networks}, 10(5):988--999, 1999.

\bibitem{sun2016deep}
Baochen Sun and Kate Saenko.
\newblock Deep coral: Correlation alignment for deep domain adaptation.
\newblock In {\em European Conference on Computer Vision}, pages 443--450,
  2016.

\bibitem{nam2021reducing}
Hyeonseob Nam, HyunJae Lee, Jongchan Park, Wonjun Yoon, and Donggeun Yoo.
\newblock Reducing domain gap by reducing style bias.
\newblock In {\em IEEE Conference on Computer Vision and Pattern Recognition},
  pages 8690--8699, 2021.

\bibitem{kim2021selfreg}
Daehee Kim, Youngjun Yoo, Seunghyun Park, Jinkyu Kim, and Jaekoo Lee.
\newblock Selfreg: Self-supervised contrastive regularization for domain
  generalization.
\newblock In {\em IEEE International Conference on Computer Vision}, pages
  9619--9628, 2021.

\bibitem{min2022grounding}
Seonwoo Min, Nokyung Park, Siwon Kim, Seunghyun Park, and Jinkyu Kim.
\newblock Grounding visual representations with texts for domain
  generalization.
\newblock In {\em European Conference on Computer Vision}, pages 37--53.
  Springer, 2022.

\bibitem{kim2023vne}
Jaeill Kim, Suhyun Kang, Duhun Hwang, Jungwook Shin, and Wonjong Rhee.
\newblock Vne: An effective method for improving deep representation by
  manipulating eigenvalue distribution.
\newblock In {\em Proceedings of the IEEE/CVF Conference on Computer Vision and
  Pattern Recognition}, pages 3799--3810, 2023.

\bibitem{van2008visualizing}
Laurens Van~der Maaten and Geoffrey Hinton.
\newblock Visualizing data using t-sne.
\newblock {\em Journal of machine learning research}, 9(11), 2008.

\bibitem{wang2014new}
Dan Wang and Yi~Shang.
\newblock A new active labeling method for deep learning.
\newblock In {\em 2014 International joint conference on neural networks
  (IJCNN)}, pages 112--119. IEEE, 2014.

\bibitem{roth2006margin}
Dan Roth and Kevin Small.
\newblock Margin-based active learning for structured output spaces.
\newblock In {\em Machine Learning: ECML 2006: 17th European Conference on
  Machine Learning Berlin, Germany, September 18-22, 2006 Proceedings 17},
  pages 413--424. Springer, 2006.

\bibitem{ash2019deep}
Jordan~T Ash, Chicheng Zhang, Akshay Krishnamurthy, John Langford, and Alekh
  Agarwal.
\newblock Deep batch active learning by diverse, uncertain gradient lower
  bounds.
\newblock In {\em International Conference on Learning Representations}, 2019.

\bibitem{you2019universal}
Kaichao You, Mingsheng Long, Zhangjie Cao, Jianmin Wang, and Michael~I Jordan.
\newblock Universal domain adaptation.
\newblock In {\em Proceedings of the IEEE/CVF conference on computer vision and
  pattern recognition}, pages 2720--2729, 2019.

\bibitem{kumar2020understanding}
Ananya Kumar, Tengyu Ma, and Percy Liang.
\newblock Understanding self-training for gradual domain adaptation.
\newblock In {\em International Conference on Machine Learning}, pages
  5468--5479. PMLR, 2020.

\bibitem{wang2022continual}
Qin Wang, Olga Fink, Luc Van~Gool, and Dengxin Dai.
\newblock Continual test-time domain adaptation.
\newblock In {\em Proceedings of the IEEE/CVF Conference on Computer Vision and
  Pattern Recognition}, pages 7201--7211, 2022.

\bibitem{prabhu2021active}
Viraj Prabhu, Arjun Chandrasekaran, Kate Saenko, and Judy Hoffman.
\newblock Active domain adaptation via clustering uncertainty-weighted
  embeddings.
\newblock In {\em Proceedings of the IEEE/CVF international conference on
  computer vision}, pages 8505--8514, 2021.

\bibitem{su2020active}
Jong-Chyi Su, Yi-Hsuan Tsai, Kihyuk Sohn, Buyu Liu, Subhransu Maji, and
  Manmohan Chandraker.
\newblock Active adversarial domain adaptation.
\newblock In {\em Proceedings of the IEEE/CVF Winter Conference on Applications
  of Computer Vision}, pages 739--748, 2020.

\bibitem{kothandaraman2023salad}
Divya Kothandaraman, Sumit Shekhar, Abhilasha Sancheti, Manoj Ghuhan, Tripti
  Shukla, and Dinesh Manocha.
\newblock Salad: Source-free active label-agnostic domain adaptation for
  classification, segmentation and detection.
\newblock In {\em Proceedings of the IEEE/CVF Winter Conference on Applications
  of Computer Vision}, pages 382--391, 2023.

\bibitem{gulrajani2020search}
Ishaan Gulrajani and David Lopez-Paz.
\newblock In search of lost domain generalization.
\newblock {\em arXiv preprint arXiv:2007.01434}, 2020.

\bibitem{bai2021decaug}
Haoyue Bai, Rui Sun, Lanqing Hong, Fengwei Zhou, Nanyang Ye, Han-Jia Ye,
  S-H~Gary Chan, and Zhenguo Li.
\newblock Decaug: Out-of-distribution generalization via decomposed feature
  representation and semantic augmentation.
\newblock In {\em Proceedings of the AAAI Conference on Artificial
  Intelligence}, volume~35, pages 6705--6713, 2021.

\bibitem{cho2023promptstyler}
Junhyeong Cho, Gilhyun Nam, Sungyeon Kim, Hunmin Yang, and Suha Kwak.
\newblock Promptstyler: Prompt-driven style generation for source-free domain
  generalization.
\newblock In {\em Proceedings of the IEEE/CVF International Conference on
  Computer Vision}, pages 15702--15712, 2023.

\bibitem{mansour2009domain}
Yishay Mansour, Mehryar Mohri, and Afshin Rostamizadeh.
\newblock Domain adaptation: Learning bounds and algorithms.
\newblock {\em arXiv preprint arXiv:0902.3430}, 2009.

\bibitem{blitzer2007learning}
John Blitzer, Koby Crammer, Alex Kulesza, Fernando Pereira, and Jennifer
  Wortman.
\newblock Learning bounds for domain adaptation.
\newblock {\em Advances in neural information processing systems}, 20, 2007.

\bibitem{ben2006analysis}
Shai Ben-David, John Blitzer, Koby Crammer, and Fernando Pereira.
\newblock Analysis of representations for domain adaptation.
\newblock {\em Advances in neural information processing systems}, 19, 2006.

\bibitem{ben2010theory}
Shai Ben-David, John Blitzer, Koby Crammer, Alex Kulesza, Fernando Pereira, and
  Jennifer~Wortman Vaughan.
\newblock A theory of learning from different domains.
\newblock {\em Machine learning}, 79:151--175, 2010.

\bibitem{gui2024active}
Shurui Gui, Xiner Li, and Shuiwang Ji.
\newblock Active test-time adaptation: Theoretical analyses and an algorithm.
\newblock {\em arXiv preprint arXiv:2404.05094}, 2024.

\bibitem{ahuja2020invariant}
Kartik Ahuja, Karthikeyan Shanmugam, Kush Varshney, and Amit Dhurandhar.
\newblock Invariant risk minimization games.
\newblock In {\em International Conference on Machine Learning}, pages
  145--155. PMLR, 2020.

\bibitem{eastwood2022probable}
Cian Eastwood, Alexander Robey, Shashank Singh, Julius Von~K{\"u}gelgen, Hamed
  Hassani, George~J Pappas, and Bernhard Sch{\"o}lkopf.
\newblock Probable domain generalization via quantile risk minimization.
\newblock {\em Advances in Neural Information Processing Systems},
  35:17340--17358, 2022.

\bibitem{dai2023moderately}
Rui Dai, Yonggang Zhang, Zhen Fang, Bo~Han, and Xinmei Tian.
\newblock Moderately distributional exploration for domain generalization.
\newblock 2023.

\bibitem{huang2023robust}
Zhuo Huang, Miaoxi Zhu, Xiaobo Xia, Li~Shen, Jun Yu, Chen Gong, Bo~Han, Bo~Du,
  and Tongliang Liu.
\newblock Robust generalization against photon-limited corruptions via
  worst-case sharpness minimization.
\newblock In {\em Proceedings of the IEEE/CVF Conference on Computer Vision and
  Pattern Recognition}, pages 16175--16185, 2023.

\bibitem{wang2022improving}
Qixun Wang, Yifei Wang, Hong Zhu, and Yisen Wang.
\newblock Improving out-of-distribution generalization by adversarial training
  with structured priors.
\newblock {\em Advances in Neural Information Processing Systems},
  35:27140--27152, 2022.

\bibitem{dayal2023madg}
Aveen Dayal, KB~Vimal, Linga~Reddy Cenkeramaddi, C~Krishna Mohan, Abhinav
  Kumar, and Vineeth~N Balasubramanian.
\newblock Madg: Margin-based adversarial learning for domain generalization.
\newblock In {\em Thirty-seventh Conference on Neural Information Processing
  Systems}, 2023.

\bibitem{zhang2021adaptive}
Marvin Zhang, Henrik Marklund, Nikita Dhawan, Abhishek Gupta, Sergey Levine,
  and Chelsea Finn.
\newblock Adaptive risk minimization: Learning to adapt to domain shift.
\newblock {\em Advances in Neural Information Processing Systems},
  34:23664--23678, 2021.

\bibitem{shi2021gradient}
Yuge Shi, Jeffrey Seely, Philip Torr, N~Siddharth, Awni Hannun, Nicolas
  Usunier, and Gabriel Synnaeve.
\newblock Gradient matching for domain generalization.
\newblock In {\em International Conference on Learning Representations}, 2021.

\bibitem{rame2022fishr}
Alexandre Rame, Corentin Dancette, and Matthieu Cord.
\newblock Fishr: Invariant gradient variances for out-of-distribution
  generalization.
\newblock In {\em International Conference on Machine Learning}, pages
  18347--18377. PMLR, 2022.

\bibitem{guo2023out}
Yaming Guo, Kai Guo, Xiaofeng Cao, Tieru Wu, and Yi~Chang.
\newblock Out-of-distribution generalization of federated learning via implicit
  invariant relationships.
\newblock In {\em International Conference on Machine Learning}, pages
  11905--11933. PMLR, 2023.

\bibitem{tong2023distribution}
Peifeng Tong, Wu~Su, He~Li, Jialin Ding, Zhan Haoxiang, and Song~Xi Chen.
\newblock Distribution free domain generalization.
\newblock In {\em International Conference on Machine Learning}, pages
  34369--34378. PMLR, 2023.

\bibitem{chen2023explore}
Yimeng Chen, Tianyang Hu, Fengwei Zhou, Zhenguo Li, and Zhi-Ming Ma.
\newblock Explore and exploit the diverse knowledge in model zoo for domain
  generalization.
\newblock In {\em International Conference on Machine Learning}, pages
  4623--4640. PMLR, 2023.

\bibitem{rame2023model}
Alexandre Ram{\'e}, Kartik Ahuja, Jianyu Zhang, Matthieu Cord, L{\'e}on Bottou,
  and David Lopez-Paz.
\newblock Model ratatouille: Recycling diverse models for out-of-distribution
  generalization.
\newblock In {\em International Conference on Machine Learning}, pages
  28656--28679. PMLR, 2023.

\bibitem{gui2023joint}
Shurui Gui, Meng Liu, Xiner Li, Youzhi Luo, and Shuiwang Ji.
\newblock Joint learning of label and environment causal independence for graph
  out-of-distribution generalization.
\newblock 2023.

\bibitem{yuan2023environment}
Haonan Yuan, Qingyun Sun, Xingcheng Fu, Ziwei Zhang, Cheng Ji, Hao Peng, and
  Jianxin Li.
\newblock Environment-aware dynamic graph learning for out-of-distribution
  generalization.
\newblock In {\em Thirty-seventh Conference on Neural Information Processing
  Systems}, 2023.

\bibitem{chen2023coda}
Chaoqi Chen, Luyao Tang, Yue Huang, Xiaoguang Han, and Yizhou Yu.
\newblock Coda: Generalizing to open and unseen domains with compaction and
  disambiguation.
\newblock In {\em Thirty-seventh Conference on Neural Information Processing
  Systems}, 2023.

\bibitem{park2023test}
Jungwuk Park, Dong-Jun Han, Soyeong Kim, and Jaekyun Moon.
\newblock Test-time style shifting: Handling arbitrary styles in domain
  generalization.
\newblock 2023.

\bibitem{samadh2023align}
Jameel Hassan~Abdul Samadh, Hanan Gani, Noor~Hazim Hussein, Muhammad~Uzair
  Khattak, Muzammal Naseer, Fahad Khan, and Salman Khan.
\newblock Align your prompts: Test-time prompting with distribution alignment
  for zero-shot generalization.
\newblock In {\em Thirty-seventh Conference on Neural Information Processing
  Systems}, 2023.

\bibitem{chen2023improved}
Liang Chen, Yong Zhang, Yibing Song, Ying Shan, and Lingqiao Liu.
\newblock Improved test-time adaptation for domain generalization.
\newblock In {\em Proceedings of the IEEE/CVF Conference on Computer Vision and
  Pattern Recognition}, pages 24172--24182, 2023.

\bibitem{liang2018enhancing}
Shiyu Liang, Yixuan Li, and R~Srikant.
\newblock Enhancing the reliability of out-of-distribution image detection in
  neural networks.
\newblock In {\em International Conference on Learning Representations}, 2018.

\bibitem{wang2021can}
Haoran Wang, Weitang Liu, Alex Bocchieri, and Yixuan Li.
\newblock Can multi-label classification networks know what they don’t know?
\newblock {\em Advances in Neural Information Processing Systems},
  34:29074--29087, 2021.

\bibitem{sun2021react}
Yiyou Sun, Chuan Guo, and Yixuan Li.
\newblock React: Out-of-distribution detection with rectified activations.
\newblock {\em Advances in Neural Information Processing Systems}, 34:144--157,
  2021.

\bibitem{djurisic2023extremely}
Andrija Djurisic, Nebojsa Bozanic, Arjun Ashok, and Rosanne Liu.
\newblock Extremely simple activation shaping for out-of-distribution
  detection.
\newblock In {\em The Eleventh International Conference on Learning
  Representations}, 2023.

\bibitem{zhang2023outofdistribution}
Jinsong Zhang, Qiang Fu, Xu~Chen, Lun Du, Zelin Li, Gang Wang, xiaoguang Liu,
  Shi Han, and Dongmei Zhang.
\newblock Out-of-distribution detection based on in-distribution data patterns
  memorization with modern hopfield energy.
\newblock In {\em The Eleventh International Conference on Learning
  Representations}, 2023.

\bibitem{lafon2023hybrid}
Marc Lafon, Elias Ramzi, Cl{\'e}ment Rambour, and Nicolas Thome.
\newblock Hybrid energy based model in the feature space for
  out-of-distribution detection.
\newblock In {\em International Conference on Machine Learning}, 2023.

\bibitem{huang2021importance}
Rui Huang, Andrew Geng, and Yixuan Li.
\newblock On the importance of gradients for detecting distributional shifts in
  the wild.
\newblock {\em Advances in Neural Information Processing Systems}, 34:677--689,
  2021.

\bibitem{behpour2023gradorth}
Sima Behpour, Thang Doan, Xin Li, Wenbin He, Liang Gou, and Liu Ren.
\newblock Gradorth: A simple yet efficient out-of-distribution detection with
  orthogonal projection of gradients.
\newblock In {\em Advances in Neural Information Processing Systems}, 2023.

\bibitem{gal2016dropout}
Yarin Gal and Zoubin Ghahramani.
\newblock Dropout as a bayesian approximation: Representing model uncertainty
  in deep learning.
\newblock In {\em international conference on machine learning}, pages
  1050--1059. PMLR, 2016.

\bibitem{maddox2019simple}
Wesley~J Maddox, Pavel Izmailov, Timur Garipov, Dmitry~P Vetrov, and
  Andrew~Gordon Wilson.
\newblock A simple baseline for bayesian uncertainty in deep learning.
\newblock {\em Advances in neural information processing systems}, 32, 2019.

\bibitem{kristiadi2020being}
Agustinus Kristiadi, Matthias Hein, and Philipp Hennig.
\newblock Being bayesian, even just a bit, fixes overconfidence in relu
  networks.
\newblock In {\em International conference on machine learning}, pages
  5436--5446. PMLR, 2020.

\bibitem{du2022siren}
Xuefeng Du, Gabriel Gozum, Yifei Ming, and Yixuan Li.
\newblock Siren: Shaping representations for detecting out-of-distribution
  objects.
\newblock {\em Advances in Neural Information Processing Systems},
  35:20434--20449, 2022.

\bibitem{ming2022exploit}
Yifei Ming, Yiyou Sun, Ousmane Dia, and Yixuan Li.
\newblock How to exploit hyperspherical embeddings for out-of-distribution
  detection?
\newblock In {\em The Eleventh International Conference on Learning
  Representations}, 2022.

\bibitem{hein2019whyrelu}
Matthias Hein, Maksym Andriushchenko, and Julian Bitterwolf.
\newblock Why relu networks yield high-confidence predictions far away from the
  training data and how to mitigate the problem.
\newblock In {\em 2019 IEEE/CVF Conference on Computer Vision and Pattern
  Recognition (CVPR)}, 2019.

\bibitem{ming2022poem}
Yifei Ming, Ying Fan, and Yixuan Li.
\newblock Poem: Out-of-distribution detection with posterior sampling.
\newblock In {\em International Conference on Machine Learning}, pages
  15650--15665. PMLR, 2022.

\bibitem{wang2023learning}
Qizhou Wang, Zhen Fang, Yonggang Zhang, Feng Liu, Yixuan Li, and Bo~Han.
\newblock Learning to augment distributions for out-of-distribution detection.
\newblock In {\em Thirty-seventh Conference on Neural Information Processing
  Systems}, 2023.

\bibitem{du2024sal}
Xuefeng Du, Zhen Fang, Ilias Diakonikolas, and Yixuan Li.
\newblock How does unlabeled data provably help out-of-distribution detection?
\newblock In {\em Proceedings of the International Conference on Learning
  Representations}, 2024.

\bibitem{cohn1994improving}
David Cohn, Les Atlas, and Richard Ladner.
\newblock Improving generalization with active learning.
\newblock {\em Machine learning}, 15:201--221, 1994.

\bibitem{balcan2006agnostic}
Maria-Florina Balcan, Alina Beygelzimer, and John Langford.
\newblock Agnostic active learning.
\newblock In {\em Proceedings of the 23rd international conference on Machine
  learning}, pages 65--72, 2006.

\bibitem{settles2009active}
Burr Settles.
\newblock Active learning literature survey.
\newblock 2009.

\bibitem{ren2021survey}
Pengzhen Ren, Yun Xiao, Xiaojun Chang, Po-Yao Huang, Zhihui Li, Brij~B Gupta,
  Xiaojiang Chen, and Xin Wang.
\newblock A survey of deep active learning.
\newblock {\em ACM computing surveys (CSUR)}, 54(9):1--40, 2021.

\bibitem{karzand2020maximin}
Mina Karzand and Robert~D Nowak.
\newblock Maximin active learning in overparameterized model classes.
\newblock {\em IEEE Journal on Selected Areas in Information Theory},
  1(1):167--177, 2020.

\bibitem{seung1992query}
H~Sebastian Seung, Manfred Opper, and Haim Sompolinsky.
\newblock Query by committee.
\newblock In {\em Proceedings of the fifth annual workshop on Computational
  learning theory}, pages 287--294, 1992.

\bibitem{hanneke2014theory}
Steve Hanneke et~al.
\newblock Theory of disagreement-based active learning.
\newblock {\em Foundations and Trends{\textregistered} in Machine Learning},
  7(2-3):131--309, 2014.

\bibitem{zhu2022active}
Yinglun Zhu and Robert Nowak.
\newblock Active learning with neural networks: Insights from nonparametric
  statistics.
\newblock {\em Advances in Neural Information Processing Systems}, 35:142--155,
  2022.

\bibitem{du2015exploring}
Bo~Du, Zengmao Wang, Lefei Zhang, Liangpei Zhang, Wei Liu, Jialie Shen, and
  Dacheng Tao.
\newblock Exploring representativeness and informativeness for active learning.
\newblock {\em IEEE transactions on cybernetics}, 47(1):14--26, 2015.

\bibitem{zhdanov2019diverse}
Fedor Zhdanov.
\newblock Diverse mini-batch active learning.
\newblock {\em arXiv preprint arXiv:1901.05954}, 2019.

\bibitem{citovsky2021batch}
Gui Citovsky, Giulia DeSalvo, Claudio Gentile, Lazaros Karydas, Anand
  Rajagopalan, Afshin Rostamizadeh, and Sanjiv Kumar.
\newblock Batch active learning at scale.
\newblock {\em Advances in Neural Information Processing Systems},
  34:11933--11944, 2021.

\bibitem{lewis1995sequential}
David~D Lewis.
\newblock A sequential algorithm for training text classifiers: Corrigendum and
  additional data.
\newblock In {\em Acm Sigir Forum}, volume~29, pages 13--19. ACM New York, NY,
  USA, 1995.

\bibitem{scheffer2001active}
Tobias Scheffer, Christian Decomain, and Stefan Wrobel.
\newblock Active hidden markov models for information extraction.
\newblock In {\em International symposium on intelligent data analysis}, pages
  309--318. Springer, 2001.

\bibitem{shannon2001mathematical}
Claude~Elwood Shannon.
\newblock A mathematical theory of communication.
\newblock {\em ACM SIGMOBILE mobile computing and communications review},
  5(1):3--55, 2001.

\bibitem{lu2016online}
Jing Lu, Peilin Zhao, and Steven~CH Hoi.
\newblock Online passive-aggressive active learning.
\newblock {\em Machine Learning}, 103:141--183, 2016.

\bibitem{ducoffe2018adversarial}
Melanie Ducoffe and Frederic Precioso.
\newblock Adversarial active learning for deep networks: a margin based
  approach.
\newblock {\em arXiv preprint arXiv:1802.09841}, 2018.

\bibitem{beluch2018power}
William~H Beluch, Tim Genewein, Andreas N{\"u}rnberger, and Jan~M K{\"o}hler.
\newblock The power of ensembles for active learning in image classification.
\newblock In {\em Proceedings of the IEEE conference on computer vision and
  pattern recognition}, pages 9368--9377, 2018.

\bibitem{wang2022deep}
Haonan Wang, Wei Huang, Ziwei Wu, Hanghang Tong, Andrew~J Margenot, and Jingrui
  He.
\newblock Deep active learning by leveraging training dynamics.
\newblock {\em Advances in Neural Information Processing Systems},
  35:25171--25184, 2022.

\bibitem{elenter2022lagrangian}
Juan Elenter, Navid NaderiAlizadeh, and Alejandro Ribeiro.
\newblock A lagrangian duality approach to active learning.
\newblock {\em Advances in Neural Information Processing Systems},
  35:37575--37589, 2022.

\bibitem{mohamadi2022making}
Mohamad~Amin Mohamadi, Wonho Bae, and Danica~J Sutherland.
\newblock Making look-ahead active learning strategies feasible with neural
  tangent kernels.
\newblock {\em Advances in Neural Information Processing Systems},
  35:12542--12553, 2022.

\bibitem{kothawade2021similar}
Suraj Kothawade, Nathan Beck, Krishnateja Killamsetty, and Rishabh Iyer.
\newblock Similar: Submodular information measures based active learning in
  realistic scenarios.
\newblock {\em Advances in Neural Information Processing Systems},
  34:18685--18697, 2021.

\bibitem{emam2021active}
Zeyad Ali~Sami Emam, Hong-Min Chu, Ping-Yeh Chiang, Wojciech Czaja, Richard
  Leapman, Micah Goldblum, and Tom Goldstein.
\newblock Active learning at the imagenet scale.
\newblock {\em arXiv preprint arXiv:2111.12880}, 2021.

\bibitem{zhang2022galaxy}
Jifan Zhang, Julian Katz-Samuels, and Robert Nowak.
\newblock Galaxy: Graph-based active learning at the extreme.
\newblock In {\em International Conference on Machine Learning}, pages
  26223--26238. PMLR, 2022.

\bibitem{coleman2022similarity}
Cody Coleman, Edward Chou, Julian Katz-Samuels, Sean Culatana, Peter Bailis,
  Alexander~C Berg, Robert Nowak, Roshan Sumbaly, Matei Zaharia, and I~Zeki
  Yalniz.
\newblock Similarity search for efficient active learning and search of rare
  concepts.
\newblock In {\em Proceedings of the AAAI Conference on Artificial
  Intelligence}, volume~36, pages 6402--6410, 2022.

\bibitem{zhang2023algorithm}
Jifan Zhang, Shuai Shao, Saurabh Verma, and Robert Nowak.
\newblock Algorithm selection for deep active learning with imbalanced
  datasets.
\newblock {\em arXiv preprint arXiv:2302.07317}, 2023.

\bibitem{kothawade2022active}
Suraj Kothawade, Shivang Chopra, Saikat Ghosh, and Rishabh Iyer.
\newblock Active data discovery: Mining unknown data using submodular
  information measures.
\newblock {\em arXiv preprint arXiv:2206.08566}, 2022.

\bibitem{aggarwal2020active}
Umang Aggarwal, Adrian Popescu, and C{\'e}line Hudelot.
\newblock Active learning for imbalanced datasets.
\newblock In {\em Proceedings of the IEEE/CVF Winter Conference on Applications
  of Computer Vision}, pages 1428--1437, 2020.

\bibitem{das2023accelerating}
Arnav~Mohanty Das, Gantavya Bhatt, Megh~Manoj Bhalerao, Vianne~R Gao, Rui Yang,
  and Jeff Bilmes.
\newblock Accelerating batch active learning using continual learning
  techniques.
\newblock {\em Transactions on Machine Learning Research}, 2023.

\bibitem{deng2023counterfactual}
Xun Deng, Wenjie Wang, Fuli Feng, Hanwang Zhang, Xiangnan He, and Yong Liao.
\newblock Counterfactual active learning for out-of-distribution
  generalization.
\newblock In {\em Proceedings of the 61st Annual Meeting of the Association for
  Computational Linguistics (Volume 1: Long Papers)}, pages 11362--11377, 2023.

\bibitem{zhan2023pareto}
Xueying Zhan, Zeyu Dai, Qingzhong Wang, Qing Li, Haoyi Xiong, Dejing Dou, and
  Antoni~B Chan.
\newblock Pareto optimization for active learning under out-of-distribution
  data scenarios.
\newblock {\em Transactions on Machine Learning Research}, 2023.

\bibitem{shayovitz2024deep}
Shachar Shayovitz, Koby Bibas, and Meir Feder.
\newblock Deep individual active learning: Safeguarding against
  out-of-distribution challenges in neural networks.
\newblock {\em Entropy}, 26(2):129, 2024.

\bibitem{benkert2022forgetful}
Ryan Benkert, Mohit Prabhushankar, and Ghassan AlRegib.
\newblock Forgetful active learning with switch events: Efficient sampling for
  out-of-distribution data.
\newblock In {\em 2022 IEEE International Conference on Image Processing
  (ICIP)}, pages 2196--2200. IEEE, 2022.

\bibitem{kifer2004detecting}
Daniel Kifer, Shai Ben-David, and Johannes Gehrke.
\newblock Detecting change in data streams.
\newblock In {\em VLDB}, volume~4, pages 180--191. Toronto, Canada, 2004.

\bibitem{lei2021sharper}
Yunwen Lei and Yiming Ying.
\newblock Sharper generalization bounds for learning with gradient-dominated
  objective functions.
\newblock In {\em International Conference on Learning Representations}, 2021.

\end{thebibliography}
\bibliographystyle{unsrt}

\clearpage

%%%%%%%%%%%%%%%%%%%%%%%%%%%%%%%%%%%%%%%%%%%%%%%%%%%%%%%%%%%%

\appendix

\clearpage
\begin{center}
    \Large{\textbf{Out-of-Distribution Learning with Human Feedback (Appendix)}}
\end{center}

\section{Algorithm}
\label{app:algorithm}
\begin{algorithm}[!h]
\SetAlgoLined
\textbf{Input:} In-distribution labeled data $\mathcal{S}^{\text{in}} = \{(\*x_i, y_i)\}_{i=1}^n$. Unlabeled wild data $\mathcal{S}_{\text{wild}} = \{\tilde{\*x}_i\}_{i=1}^m$. 
\\
\textbf{Output:} Learned classifier $f_{\widehat{\mathbf{w}}}$ and OOD detector $g_{\widehat{\boldsymbol{\theta}}}$.\\
\textbf{Sample section}\\
1. Perform ERM on labeled ID data $\mathcal{S}^{\text{in}}$ and obtain learned weight parameter $\mathbf{w}_{\mathcal{S}^{\text{in}}}$ 
 according to Eq.~\eqref{eq:erm}. \\
2. Calculate the reference gradient $\bar{\nabla}$ on  $\mathcal{S}^{\text{in}}$ according to Eq.~\eqref{eq:reference_gradient}. \\
3. Generate predicted labels $\widehat{y}_{\tilde{\*x}_i}$ for $\tilde{\mathbf{\*x}}_i \in \mathcal{S}_{\text{wild}}$. \\
4. Calculate  gradient $\nabla \ell (h_{\mathbf{w}_{\mathcal{S}^{\text{in}}}}(\tilde{\*x}_i),\widehat{y}_{\tilde{\*x}_i})$. \\
5. Calculate the gradient matrix $\boldsymbol{G}$ by Eq.~\eqref{eq:gradient_matrix} and Compute the gradient-based score $\tau_i$ by Eq.~\eqref{eq:score}. \\
6. Given the gradient-based scores $\tau_i$ for each sample $\tilde{\*x}_i$, select a subset $k$
according to the three sampling strategies described in Sec.~\ref{sec:selection}.
\\
7.Annotate the selected $k$ samples with labels from: $\mathcal{Y} \cup \{ \perp \}$ to obtain $\mathcal{S}_{\text{selected}}^{\text{c}}$ and $\mathcal{S}_{\text{selected}}^{\text{s}}$, where $\{ \perp \}$ indicates the semantic OOD.  \\
\textbf{OOD learning with annotated samples}\\
8. Train the robust classifier using samples from $\mathcal{S}^{\text{in}}$ and covariate OOD $\mathcal{S}_{\text{selected}}^{\text{c}}$. Concurrently, train a binary OOD detector using semantic OOD $\mathcal{S}_{\text{selected}}^{\text{s}}$ and $\mathcal{S}^{\text{in}}$ by Eq.~\eqref{eq:objective}. \\
\caption{Out-of-Distribution Learning with Human Feedback 
}
   \label{alg:algo}
\end{algorithm}

\section{Detailed Description of Datasets}\label{sec:app_dataset}

\textbf{CIFAR-10} \cite{krizhevsky2009learning} contains $60,000$ color images with 10 classes. The training set has $50,000$ images and the test set has
$10,000$ images.

\textbf{CIFAR-10-C} is algorithmically generated, following the previous leterature~\cite{hendrycks2018benchmarking}. The corruptions include Gaussian noise, defocus blur, glass blur, impulse noise, shot noise, snow, and zoom blur.

\textbf{SVHN} \cite{netzer2011reading} is a real-world image dataset obtained from house numbers in Google Street View images, with 10 classes. This dataset contains $73,257$ samples for training and $26,032$ samples for testing.

\textbf{Places365} \cite{zhou2017places} contains scene photographs and diverse types of environments encountered in the world. The scene semantic categories consist of three macro-classes: Indoor, Nature, and Urban.

\textbf{LSUN-C} \cite{yu2015lsun} and \textbf{LSUN-R} \cite{yu2015lsun} are large-scale image datasets that are annotated using deep learning with humans in the loop. LSUN-C is a cropped version of LSUN and LSUN-R is a resized version of the LSUN dataset.

\textbf{Textures} \cite{cimpoi2014describing} refers to the Describable Textures Dataset, which contains images of patterns and textures. The subset we used has no overlap categories with the CIFAR dataset \cite{krizhevsky2009learning}.

\textbf{PACS}~\citep{li2017deeper} is commonly used in evaluating OOD generalization approaches.
This dataset consists of $9,991$ examples of resolution $224 \times 224$ and four domains with different image styles including photo, art painting, cartoon, and sketch with seven categories.

\paragraph{Details of data split for OOD datasets.} For datasets with standard train-test split (e.g., SVHN), we use the original test split for evaluation. For other OOD datasets (e.g., LSUN-C), we use $70\%$ of the data for creating the wild mixture training data as well as the mixture validation dataset. We use the remaining examples for test-time evaluation. For splitting training/validation, we use $30\%$ for validation and the remaining for training.

\section{Description of Sampling Methods}\label{sec:app_meth}

\textbf{Least confidence}~\cite{wang2014new}
is an uncertainty-based algorithm that selects data for which the most probable label possesses the lowest posterior probability. This method focuses on instances where the model's predictions are least certain.

\textbf{Margin sampling}~\cite{roth2006margin}
elects data points based on the multiclass margin, specifically targeting examples where the posterior probabilities of the two most likely labels are closely matched, indicating the minimal difference between them.

\textbf{Entropy}~\cite{wang2014new}
is an uncertainty-based algorithm that chooses data points by evaluating the entropy within the predictive class probability distribution of each example, aiming to maximize the overall predictive entropy. 

\textbf{Energy sampling}~\cite{liu2020energy} identifies data points using an energy score, a measure theoretically aligned with the probability density of the inputs.

\textbf{BADGE}~\cite{ash2019deep} samples groups of points that are both diverse and exhibit high magnitude in a hallucinated gradient space. This technique combines predictive uncertainty and sample diversity, enhancing the effectiveness of data selection in active learning.

\textbf{Random sampling} serves as a straightforward baseline method, involving the random selection of $k$ examples to query.

\section{Results on Additional OOD Datasets}\label{app:additional_ood}

In this section, we provide the main results on more OOD datasets including Places365 \cite{zhou2017places} and LSUN-R \cite{yu2015lsun}. As shown in Table~\ref{tab:ood-part2}. our proposed approach achieves overall strong performance in OOD generalization and OOD detection on these additional OOD datasets. Firstly, we compare our method with post hoc OOD detection methods such as MSP~\cite{hendrycks2016baseline}, ODIN~\cite{liang2017enhancing}, 
Energy~\cite{liu2020energy}, 
Mahalanobis~\cite{lee2018simple}, 
ViM~\cite{wang2022vim}, KNN~\cite{sun2022out}, and latest baseline ASH~\cite{andrija2023extremely}. These methods are all based on a model trained with cross-entropy loss, which suffers from limiting OOD generalization performance. However, our method achieves an improved OOD generalization performance (e.g., 91.08\% when the wild data is a mixture of CIFAR-10, CIFAR-10-C, and LSUN-R). 

Secondly, we also compare our method with common OOD generalization baseline methods including IRM~\cite{arjovsky2019invariant}, ERM, Mixup~\cite{zhang2017mixup}, VREx~\cite{krueger2021out}, EQRM~\cite{cian2022probable}, and latest baseline SharpDRO~\cite{zhuo2023robust}. Our approach consistently achieves better results compared to these OOD generalization baselines. Lastly, we compare our method with strong OOD baselines using $\mathbb{P}_\text{wild}$ such as Outlier Exposure~\citep{hendrycks2018deep}, Energy-regularized learning~\citep{liu2020energy}, WOODS~\citep{katz2022training}, and SCONE~\citep{bai2023feed}. Contrastly, our approach demonstrates strong performance on both OOD generalization and detection accuracy, which shows the effectiveness of our method for making use of the wild data.

\begin{table*}[t]
\centering
\caption{Additional results. Comparison with competitive OOD detection and OOD generalization methods on CIFAR-10. For experiments using $\mathbb{P}_{\mathrm{wild}}$, we use $\pi_s = 0.5$, $\pi_c = 0.1$. For each semantic OOD dataset, we create corresponding wild mixture distribution $\mathbb{P}_\text{wild} := (1-\pi_s-\pi_c) \mathbb{P}_\text{in} + \pi_s \mathbb{P}_\text{out}^\text{semantic} + \pi_c \mathbb{P}_\text{out}^\text{covariate}$ for training. We report the average and std of our method based on \textbf{3 independent runs}. $\pm x$ denotes the rounded standard error.}
\scalebox{0.72}{
\begin{tabular}{lcccc|cccc}
\toprule
\multirow{2}[2]{*}{\textbf{Model}} & \multicolumn{4}{c}{{Places365 $\mathbb{P}_\text{out}^\text{semantic}$, CIFAR-10-C $\mathbb{P}_\text{out}^\text{covariate}$}} & \multicolumn{4}{c}{{LSUN-R $\mathbb{P}_\text{out}^\text{semantic}$, CIFAR-10-C $\mathbb{P}_\text{out}^\text{covariate}$}}   \\
 & \textbf{OOD Acc.}$\uparrow$  & \textbf{ID Acc.}$\uparrow$ & \textbf{FPR}$\downarrow$ & \textbf{AUROC}$\uparrow$ & \textbf{OOD Acc.}$\uparrow$ & \textbf{ID Acc.}$\uparrow$ & \textbf{FPR}$\downarrow$ & \textbf{AUROC}$\uparrow$\\
\midrule
\emph{OOD detection}\\
\textbf{MSP} & 75.05 & 94.84 & 57.40 & 84.49 & 75.05 & 94.84 & 52.15 & 91.37 \\
\textbf{ODIN} & 75.05 & 94.84 & 57.40 & 84.49 & 75.05 & 94.84 & 26.62 & 94.57 \\
\textbf{Energy}  & 75.05 & 94.84 & 40.14 & 89.89 & 75.05 & 94.84 & 27.58 & 94.24 \\
\textbf{Mahalanobis}  & 75.05 & 94.84 & 68.57 & 84.61 & 75.05 & 94.84 & 42.62 & 93.23  \\
\textbf{ViM}  & 75.05 & 94.84 & 21.95 & \textbf{95.48} & 75.05 & 94.84 & 36.80 & 93.37 \\
\textbf{KNN} & 75.05 & 94.84 & 42.67 & 91.07 & 75.05 & 94.84 & 29.75 & 94.60 \\
\textbf{ASH}& 75.05 & 94.84 & 44.07 & 88.84& 75.05 & 94.84 & 22.07 & 95.61 \\
\midrule
\emph{OOD generalization}\\
\textbf{ERM } & 75.05 & 94.84 & 40.14 & 89.89 & 75.05 & 94.84 & 27.58 & 94.24 \\
\textbf{Mixup } & 79.17 & 93.30 & 58.24 & 75.70 & 79.17 & 93.30 & 32.73 & 88.86 \\
\textbf{IRM } & 77.92 & 90.85 & 53.79 & 88.15 & 77.92 & 90.85 & 34.50 & 94.54 \\
\textbf{VREx } & 76.90 & 91.35 & 56.13 & 87.45 & 76.90 & 91.35 & 44.20 & 92.55 \\
\textbf{EQRM}  & 75.71 & 92.93 & 51.00 & 88.61 & 75.71 & 92.93 & 31.23 & 94.94 \\ 
\textbf{SharpDRO} & 79.03 & \textbf{94.91} & 34.64 & 91.96 & 79.03 & 94.91 & 13.27 & 97.44 \\
\midrule
\emph{Learning w. $\Pwild$}\\
\textbf{OE} & 35.98 & 94.75 & 27.02 & 94.57 & 46.89 & 94.07 & 0.7 & 99.78 \\
\textbf{Energy (w/ outlier)} & 19.86 & 90.55 & 23.89 & 93.60 & 32.91 & 93.01 & 0.27 & 99.94\\
\textbf{Woods} & 54.58 & 94.88 & 30.48 & 93.28 & 78.75 & \textbf{95.01} & 0.60 & 99.87 \\
\textbf{Scone} & 85.21 & 94.59 & 37.56 & 90.90 & 80.31 & 94.97 & 0.87 & 99.79 \\
\rowcolor{mygray} 
\rowcolor{mygray} 
\textbf{Ours} & \textbf{89.16}$_{\pm 0.01}$ & 94.51$_{\pm 0.11}$ & \textbf{15.70}$_{\pm 0.02}$ & 94.68$_{\pm 0.03}$ & \textbf{91.08}$_{\pm 0.01}$ & 94.41$_{\pm 0.00}$ & \textbf{0.07}$_{\pm 0.00}$ & \textbf{99.98}$_{\pm 0.00}$ \\
\bottomrule
\end{tabular}%
}
\label{tab:ood-part2}%
\end{table*}%

\section{Results on Different Corruption Types}\label{sec:app_pacs}

In this section, we provide additional ablation studies of the different covariate shifts. {In Table~\ref{tab:corrupt}, we evaluate our method under different common corruptions including Gaussian noise, shot noise, glass blur, and etc. To generate images with corruption, we follow the default setting and hyperparameters as in~\cite{hendrycks2018benchmarking}. Our approach is robust under different covariate shifts and achieves strong OOD detection performance. 

\begin{table}[h]
\centering
\caption{Ablation study on different corruption types for covariate OOD data. The budget is 500 for active learning. We train on CIFAR-10 as ID, using wild data with $\pi_c=0.5$ (CIFAR-10-C) and $\pi_s=0.1$ (Texture).}
\scalebox{0.83}{
\begin{tabular}{c|cccc}
\toprule
\textbf{Corruption types for covariate OOD data} 
& \textbf{OOD Acc.}$\uparrow$ & \textbf{ID Acc.}$\uparrow$ &\textbf{FPR}$\downarrow$ & \textbf{AUROC}$\uparrow$   \\
\midrule
Gaussian noise  & 90.31 &  94.33 &  4.91 &  98.28  \\
Defocus blur    & 94.54 & 94.68 & 1.38 & 99.56  \\
Frosted glass blur      & 82.22 & 94.45 & 8.35 & 96.87  \\
Impulse noise    & 91.82 & 94.38 & 3.61 & 98.88  \\
Shot noise   & 91.98 & 94.62 & 3.79 & 98.82  \\
Snow    & 92.51 & 94.45 & 2.89 & 99.04  \\
Zoom blur    & 92.31 & 94.65 & 3.31 & 98.71  \\
Brightness    & 94.59 & 94.53 & 1.56 & 99.48  \\
Elastic transform    & 91.12 & 94.35 & 2.70 & 99.09  \\
Contrast    & 94.15 & 94.64 & 1.56 & 99.50  \\
Fog    & 94.34 & 94.67 & 1.26 & 99.53  \\
Forst    & 92.42 & 94.49 & 3.19 & 98.79  \\
Gaussian blur    & 94.40 & 94.69 & 1.14 & 99.58  \\
Jpeg compression    & 89.75 & 94.52 & 3.37 & 98.95  \\
Motion blur    & 92.44 & 94.38 & 2.89 & 99.14  \\
Pixelate    & 93.08 & 94.42 & 2.28 & 99.24  \\
Saturate    & 93.14 & 94.43 & 2.10 & 99.34  \\
Spatter    & 93.66 & 94.60 & 1.98 & 99.35  \\
Speckle noise    & 92.19 & 94.37 & 3.79 & 98.55  \\
\bottomrule
\end{tabular}%
}
\label{tab:corrupt}%
\end{table}%

\section{Ablations on Mixing Rate $\lambda$}

In the mixed sampling strategy, samples are selected using a combination of the top-$k$ and near-boundary sampling methods. We introduce a mixing rate $\lambda$ to determine the composition of the selected samples, where $0 \leq \lambda \leq 1$.  Specifically, $k$ samples are chosen from $\mathcal{S}_{\text{wild}}$. This set consists of $k_1$ samples from the top-$k$ method and $k_2$ samples from the near-boundary sampling, where $k = k_1 + k_2$ and $\lambda = \frac{k_1}{k_1 + k_2}$. 
In Table~\ref{tab:mixrates}, we perform ablation on how $\lambda$ affects the performance. We observe that a higher $\lambda$ generally results in stronger performance in both OOD generalization and OOD detection. The observation aligns with our expectations and is supported by the detailed quantitative analysis presented in Section~\ref{sec:ablations}.

\begin{table}[t]
\centering
\caption{Ablation results on different mixing rate $\lambda$. The total labeling budget is $k=500$. We train on CIFAR-10 as ID, using wild data with $\pi_c=0.5$ (CIFAR-10-C) and $\pi_s=0.1$ (Texture).}
\scalebox{0.83}{
\begin{tabular}{c|cccc}
\toprule
\textbf{Mixing rate} 
& \textbf{OOD Acc.}$\uparrow$ & \textbf{ID Acc.}$\uparrow$ &\textbf{FPR}$\downarrow$ & \textbf{AUROC}$\uparrow$   \\
\midrule

$\lambda$=0.1   & 87.93 & 94.90 & 16.08 & 92.17  \\
$\lambda$=0.3   & 88.76 & 94.86 & 13.05 & 94.35  \\
$\lambda$=0.5   & 88.08 & 95.02 & 12.98 & 94.41  \\
$\lambda$=0.7  & 89.73 & 94.76 & 9.36 & 96.68  \\
$\lambda$=0.9  & 89.80 & 94.70 & 8.05 & 97.91  \\
\bottomrule
\end{tabular}%
}
\label{tab:mixrates}%
\end{table}%

\section{Hyperparameter Analysis}

In Table~\ref{tab:ft-weight}, we present the performance of OOD generalization and detection by varying hyperparameter $\alpha$, which balances the weight between two loss terms. We observe that the generalization performance remains competitive and insensitive across a wide range of $\alpha$ values. Additionally, our method demonstrates enhanced OOD detection performance when a relatively larger value of $\alpha$ is employed in this scenario.

\begin{table}[h]
\centering
\caption{Ablation study on the effect of loss weight $\alpha$. The sampling strategy is top-$k$ sampling, with a budget of 1000. We train on CIFAR-10 as ID, using wild data with $\pi_c=0.5$ (CIFAR-10-C) and $\pi_s=0.1$ (Texture). 
}
\scalebox{0.83}{
\begin{tabular}{c|cccc}
\toprule
\textbf{Balancing weights} 
& \textbf{OOD Acc.}$\uparrow$ & \textbf{ID Acc.}$\uparrow$ &\textbf{FPR}$\downarrow$ & \textbf{AUROC}$\uparrow$   \\
\midrule
$\alpha$=1.0   & 90.76 & 94.49 & 5.29 & 98.33  \\
$\alpha$=3.0   & 90.61 & 94.43 & 5.35 & 98.19   \\
$\alpha$=5.0   & 90.52 & 94.41 & 5.29 & 98.16   \\
$\alpha$=7.0   & 90.51 & 94.35 & 5.41 & 98.15   \\
$\alpha$=9.0  & 90.41 & 94.33 & 5.11 & 98.24   \\
$\alpha$=10.0  & 90.31 & 94.33 & 4.91 & 98.28   \\
\bottomrule
\end{tabular}%
}
\label{tab:ft-weight}%
\end{table}%

\clearpage

\section{Notations, Definitions, and Assumptions}
\label{notation,definition,Ass,Const}
Here we summarize important notations in Table~\ref{tab: notation}, restate necessary definitions and assumptions in Sections \ref{sec:definition_app} and \ref{sec:assumption_app}. 
\subsection{Notations}
Please see Table \ref{tab: notation} for detailed notations.

\begin{table}[!h]
    \centering
    \caption{Main notations and their descriptions.}
    \begin{tabular}{cl}
    \toprule[1.5pt]
         \multicolumn{1}{c}{Notation} & \multicolumn{1}{c}{Description} \\
    \midrule[1pt]
    \multicolumn{2}{c}{\cellcolor{greyC} Spaces} \\
    $\mathcal{X}$, $\mathcal{Y}$     & the input space and the label space. \\
    $\mathcal{W}$, $\Theta$ & the hypothesis spaces \\ 
    
    \multicolumn{2}{c}{\cellcolor{greyC} Distributions} \\
    $\mathbb{P}_\text{wild}$, $\mathbb{P}_{\text{in}}$& data distribution for wild data, labeled ID data.
    \\
    $\mathbb{P}_\text{out}^\text{covariate}$& data distribution for covariate-shifted OOD data.
    \\
    $\mathbb{P}_\text{out}^\text{ semantic}$& data distribution for semantic-shifted OOD data.
    \\
    $\mathbb{P}_{\mathcal{X}\mathcal{Y}}$ & the joint data distribution for ID data.\\
    \multicolumn{2}{c}{\cellcolor{greyC} Data and Models} \\
     $\mathbf{w}$, $\mathbf{x}$, $\mathbf{v}$ & weight/input/the top-1 right singular vector of $\*G$ \\
     $\widehat{\nabla}$, $\tau$ &  the average gradients on labeled ID data, uncertainty score\\
          $\mathcal{S}^{\text{in}}$, $\mathcal{S}_{\text{wild}}$ & labeled ID data and unlabeled wild data \\
          $\mathcal{S}_{\text{selected}}$ & selected data\\
          $\mathcal{S}_{\text{selected}}^{\text{s}}$, $\mathcal{S}_{\text{selected}}^{\text{c}}$ & semantic and covariate OOD in the selected data  $\mathcal{S}_{\text{selected}}$\\

       $f_{\mathbf{w}}$ and $g_{\boldsymbol{\theta}}$ & predictor on labeled in-distribution and binary predictor for OOD detection \\
    $y$ & label for ID classification \\
      $\widehat{y}_{\*x}$ &Predicted one-hot label for input $\*x$ \\
          $n, m, k$ & size of $\mathcal{S}^{\text{in}}$, size of $\mathcal{S}_{\text{wild}}$, labeling budget \\
    \multicolumn{2}{c}{\cellcolor{greyC} Distances} \\
    $d_{\mathcal{W}\triangle\mathcal{W}}(\cdot, \cdot)$ & $\mathcal{W}\triangle\mathcal{W}$ distance.\\
        $r_1$ and $r_2$ & the radius of the hypothesis spaces $\mathcal{W}$ and $\Theta$, respectively  \\
    $\| \cdot \|_2$ & $\ell_2$ norm\\
    \multicolumn{2}{c}{\cellcolor{greyC} Loss, Risk and Predictor} \\
    $\ell(\cdot, \cdot)$ & ID loss function\\
    $R_{\mathcal{S}^{\text{in}},\mathcal{S}^{\text{s}}_{\text{selected}}}(g_{\boldsymbol{\theta}})$ &  the overall empirical risk that classifies ID and detects semantic OOD\\
    $R_{\mathcal{S}^{\text{in}},\mathcal{S}^{\text{c}}_{\text{selected}}}(f_\*w)$ &  the overall empirical risk that classifies covariate OOD and ID\\
    $R_{\mathcal{S}^{\text{in}}} (f_\*w)$ & the empirical risk w.r.t. predictor $f_{\mathbf{w}}$ over data $\mathcal{S}^{\text{in}}$  \\
       $R_{\mathcal{S}^{\text{c}}_{\text{selected}}} (f_\*w)$ & the empirical risk w.r.t. predictor $f_{\mathbf{w}}$ over covariate OOD $\mathcal{S}^{\text{c}}_{\text{selected}}$  \\
      $\text{ID-Acc}$ & in-distribution accuracy.\\
    $\text{OOD-Acc}$ & out-of-distribution accuracy.\\
    $\text{FPR}$ & OOD detection performance.\\
        \multicolumn{2}{c}{\cellcolor{greyC} Additional Notations in Theory} \\
           $\omega_{\text{in}}$, $\omega_{\text{c}}$ & the weight coefficients for ID empirical risk, and covariate OOD empirical risk.\\
            $M = \beta_1 r_1^2+b_1r_1+B_1$ &the upper bound of loss $\ell(\mathbf{h}_{\mathbf{w}}(\mathbf{x}),y)$, see Proposition \ref{P1}
    \\
     $d$ &VC dimension of the hypothesis space $\mathcal{W}$\\

    \bottomrule[1.5pt]
    \end{tabular}
    
    \label{tab: notation}
\end{table}

\subsection{Definitions}
\label{sec:definition_app}

\begin{Definition}[$\beta$-smooth]\label{Def::beta-smooth} We say a loss function $\ell(f_{\mathbf{w}}(\mathbf{x}),y)$ (defined over $\mathcal{X}\times \mathcal{Y}$) is $\beta$-smooth, if for any $\mathbf{x}\in \mathcal{X}$ and $y\in \mathcal{Y}$,
\begin{equation*}
   \big \|\nabla \ell(f_{\mathbf{w}}(\mathbf{x}),y) - \nabla \ell(f_{\mathbf{w}'}(\mathbf{x}),y) \big \|_2 \leq \beta \|\mathbf{w}-\mathbf{w}'\|_{2} 
\end{equation*}
\end{Definition}
$~~~~$
\begin{Definition}[$\mathcal{W}\triangle\mathcal{W}\text{-}$distance~\cite{ben2010theory}]\label{Def::da_distance} For two distribution $\mathbb{P}_1$ and $\mathbb{P}_2$ over a domain $\mathcal{X}$ and a hypothesis class $\mathcal{W}$, the $\mathcal{W}\triangle\mathcal{W}\text{-}$distance between $\mathbb{P}_1$ and $\mathbb{P}_2$ w.r.t. $\mathcal{W}$ is defined as
\begin{align}
d_{\mathcal{W}\triangle\mathcal{W}}(\mathbb{P}_1, \mathbb{P}_2) = \underset{{\*w},{\*w}^{\prime}\in \mathcal{W}}{\sup} \big|\mathbb{E}_{\*x\thicksim\mathbb{P}_1}[f_{\*w}(\*x)\neq f_{\*w^{\prime}}(\*x)] -\mathbb{E}_{\*x\thicksim\mathbb{P}_2}[f_{\*w}(\*x)\neq f_{\*w^{\prime}}(\*x)]\big|
\end{align}
\end{Definition}

\begin{Definition}[Gradient-based Distribution Discrepancy]\label{Def3}
    Given distributions $\mathbb{P}$ and $\mathbb{Q}$ defined over $\mathcal{X}$, the Gradient-based Distribution Discrepancy w.r.t. predictor $f_{\mathbf{w}}$ and loss $\ell$ is
    \begin{equation}
        d_{\mathbf{w}}^{\ell}(\mathbb{P},\mathbb{Q}) =\big \| \nabla R_{\mathbb{P}}(f_{\mathbf{w}},\widehat{f}) -  \nabla R_{\mathbb{Q}}(f_{\mathbf{w}},\widehat{f}) \big \|_2,
    \end{equation}
    where $\widehat{f}$ is a classifier which returns the closest one-hot vector of $f_{\mathbf{w}}$, $R_{\mathbb{P}}(f_{\mathbf{w}},\widehat{f})= \mathbb{E}_{\mathbf{x}\sim \mathbb{P}} \ell(f_{\mathbf{w}},\widehat{f})$ and $R_{\mathbb{Q}}(f_{\mathbf{w}},\widehat{f})= \mathbb{E}_{\mathbf{x}\sim \mathbb{Q}} \ell(f_{\mathbf{w}},\widehat{f})$.
\end{Definition}

$~~~$

\subsection{Assumptions}
\label{sec:assumption_app}
\begin{assumption}\label{Ass1}
$~~~~$
   \begin{itemize}
   % \vspace{-1em}
       \item The parameter space $\mathcal{W}\subset B(\mathbf{w}_0,r_1)\subset \mathbb{R}^d$ ($\ell_2$ ball of radius $r_1$ around $\mathbf{w}_0$);
    
        \item $\ell(f_{\mathbf{w}}(\mathbf{x}),y) \geq 0$ and $\ell(f_{\mathbf{w}}(\mathbf{x}),y)$ is $\beta_1$-smooth where $\ell(\cdot, \cdot)$ is the ID loss function;
    
        \item  $\sup_{(\mathbf{x},y)\in \mathcal{X}\times \mathcal{Y}} \|\nabla \ell(f_{\mathbf{w}_0}(\mathbf{x}),y)\|_2=b_1$, $\sup_{(\mathbf{x},y)\in \mathcal{X}\times \mathcal{Y}}  \ell(f_{\mathbf{w}_0}(\mathbf{x}),y)=B_1$.
   \end{itemize} 
\end{assumption}

\begin{Remark} For neural networks with smooth activation functions and softmax output function, we can check that the norm of the second derivative of the loss functions (cross-entropy loss and sigmoid loss) is bounded given the bounded parameter space, which implies that the $\beta$-smoothness of the loss functions can hold true. Therefore, our assumptions are reasonable in practice.

\end{Remark}

\section{Main Theorem}
\label{theoapp}
In this section, we provide a detailed and formal version of our main theorems with a complete description of the constant terms and other additional details that are omitted in the main paper.

\begin{theorem}\label{the:main-the-app}
Let $\mathcal{W}$ be a hypothesis space with a VC-dimension of $d$. Denote the datasets $\mathcal{S}^{\text{in}}$ and $\mathcal{S}_{\text{selected}}^{\text{c}}$ as the labeled in-distribution and the selected covariate OOD data by human feedback, and their sizes are $n$ and $m_{\text{c}}$, respectively.  If $\widehat{\*w} \in \mathcal{W}$ minimizes the empirical risk $R_{\mathcal{S}^{\text{c}}_{\text{selected}}}(f_\*w)$ of the multi-class classifier for classifying the covariate OOD, and ${\*w^{\ast}}=\arg\min_{\*w \in \mathcal{W}}R_{\mathbb{P}_{\text{out}}^{\text{covariate}}}(f_\*w)$, then for any $\delta \in (0, 1)$, with probability of at least $1 - \delta$, we have
\begin{align*}
   R_{\mathbb{P}_{\text{out}}^{\text{covariate}}}(f_{\widehat{\*w}}) \leq R_{\mathbb{P}_{\text{out}}^{\text{covariate}}}(f_{\*w^{\ast}}) + 2 \omega_{\text{in}} \underset{{\*w}\in \mathcal{W}}{\sup}  d_{\mathbf{w}}^{\ell}(\mathcal{S}^{\text{in}},\mathcal{S}^{\text{c}}_{\text{selected}}) + 2\omega_{\text{in}} (2\sqrt{\frac{2 d \log (2 m_{\text{c}})+\log \frac{2}{\delta}}{m_{\text{c}}}} + \gamma) +2\zeta,
\end{align*}
where $\zeta=\sqrt{(\frac{\omega_{\text{in}}^2}{n} + \frac{\omega_{\text{c}}^2}{m_{\text{c}}})(\frac{d\log{(2n+2m_{\text{c}})}-\log(\delta)}{2})} +   \omega_{\text{in}} M$ 
and $\gamma = \underset{\*w \in \mathcal{W}}{\min} \{R_{\mathbb{P}_{\text{out}}^{\text{covariate}}}(f_\*w) + R_{\mathbb{P}_{\text{in}}} (f_\*w)\}$. $M$ is the upper bound of the loss function for the multi-class classifier
\begin{equation}
    \sup_{\mathbf{w}\in \mathcal{W}}    \sup_{(\mathbf{x},y)\in \mathcal{X}\times \mathcal{Y}} \ell(f_{\mathbf{w}}(\*x), y)  \leq M.
\end{equation}

$\omega_{\text{in}}, \omega_{\text{c}}$ are two weight coefficients such that

\begin{equation}
     \underbrace{R_{\mathbb{P}_{\text{in}},\mathbb{P}_{\text{out}}^{\text{covariate}}}(f_\*w)}_{\text{Multi-class classifier}}  = \omega_{\text{in}}  R_{\mathbb{P}_{\text{in}}}(f_\*w)  +  \omega_{\text{c}}  R_{\mathbb{P}_{\text{out}}^{\text{covariate}}}(f_\*w),
\end{equation}
and  $ d_{\mathbf{w}}^{\ell}(\mathcal{S}^{\text{in}},\mathcal{S}^{\text{c}}_{\text{selected}})$ is calculated as follows:
\begin{equation*}
     d_{\mathbf{w}}^{\ell}(\mathcal{S}^{\text{in}},\mathcal{S}^{\text{c}}_{\text{selected}}) = \| \nabla R_{\mathcal{S}^{\text{in}}}(f_\*w, \widehat{f}) -  \nabla R_{\mathcal{S}^{\text{c}}_{\text{selected}}}(f_\*w, \widehat{f}) \|_2,
\end{equation*}
where {\small $\widehat{f}$} is a classifier which returns the closest one-hot vector of {\small $f_{\mathbf{w}}$, i.e., $R_{\mathcal{S}^{\text{in}}}(f_{\mathbf{w}},\widehat{f})= \mathbb{E}_{\mathbf{x}\sim \mathcal{S}^{\text{in}}} \ell(f_{\mathbf{w}},\widehat{f})$} and {\small $R_{\mathcal{S}^{\text{c}}_{\text{selected}}}(f_{\mathbf{w}},\widehat{f})= \mathbb{E}_{\mathbf{x}\sim \mathcal{S}^{\text{c}}_{\text{selected}}} \ell(f_{\mathbf{w}},\widehat{f})$}. Note that our theoretical analysis primarily focuses on the OOD generalization error of a specific set of covariate data, which is associated with the set $\mathcal{S}^{\text{c}}{\text{selected}}$. For simplicity, we will continue to use the notation $\mathbb{P}_{\text{out}}^{\text{covariate}}$.
\end{theorem}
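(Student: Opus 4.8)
The plan is to adapt the classical domain-adaptation excess-risk decomposition (in the spirit of Definition~\ref{Def::da_distance} and Ben-David et al.), but to route the source--target comparison through the gradient-based discrepancy $d_{\mathbf{w}}^{\ell}$ of Definition~\ref{Def3} rather than the $\mathcal{W}\triangle\mathcal{W}$-distance, since $d_{\mathbf{w}}^{\ell}$ is precisely the quantity that our sampling score estimates. I would start from the population covariate-OOD risk of the empirical minimizer and insert the empirical risks $R_{\mathcal{S}^{\text{c}}_{\text{selected}}}(f_{\widehat{\*w}})$, $R_{\mathcal{S}^{\text{c}}_{\text{selected}}}(f_{\*w^{\ast}})$ and the ID empirical risk $R_{\mathcal{S}^{\text{in}}}(f_\*w)$ as intermediate terms by repeated use of the triangle inequality; each inserted pair costs a factor of two, which accounts for the leading constants. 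Because $\widehat{\*w}$ minimizes $R_{\mathcal{S}^{\text{c}}_{\text{selected}}}$, the gap $R_{\mathcal{S}^{\text{c}}_{\text{selected}}}(f_{\widehat{\*w}})-R_{\mathcal{S}^{\text{c}}_{\text{selected}}}(f_{\*w^{\ast}})$ is nonpositive and is dropped; this is the only place the optimality of $\widehat{\*w}$ is used.

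The second step converts the remaining population-minus-empirical gaps into the stated tail terms. For the covariate-OOD sampling error I would invoke the standard VC uniform-convergence bound for a class of VC-dimension $d$, yielding the $\sqrt{(2d\log(2m_{\text{c}})+\log(2/\delta))/m_{\text{c}}}$ contribution. Applying a uniform-convergence (covering-number / McDiarmid) argument to the weighted empirical objective $\omega_{\text{in}}R_{\mathcal{S}^{\text{in}}}+\omega_{\text{c}}R_{\mathcal{S}^{\text{c}}_{\text{selected}}}$, whose per-sample bounded-difference constants scale like $\omega_{\text{in}}M/n$ and $\omega_{\text{c}}M/m_{\text{c}}$ after using the loss bound $M$ from Proposition~\ref{P1}, produces the $\sqrt{(\omega_{\text{in}}^2/n+\omega_{\text{c}}^2/m_{\text{c}})\,d\log(2n+2m_{\text{c}})}$-type concentration part of $\zeta$, while the additive $\omega_{\text{in}}M$ absorbs boundedness. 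The ideal-joint-hypothesis term $\gamma=\min_{\*w}\{R_{\mathbb{P}_{\text{out}}^{\text{covariate}}}(f_\*w)+R_{\mathbb{P}_{\text{in}}}(f_\*w)\}$ then enters exactly as the irreducible combined error in the Ben-David analysis, by lower-bounding the joint risk of the best single hypothesis.

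The main obstacle, and the step I would treat most carefully, is bridging the ID and covariate-OOD risks through $\sup_{\*w}d_{\mathbf{w}}^{\ell}(\mathcal{S}^{\text{in}},\mathcal{S}^{\text{c}}_{\text{selected}})$. Here I would combine the $\beta_1$-smoothness of Assumption~\ref{Ass1} with a first-order integral (Taylor) expansion along the segment joining $\*w$ to the reference $\*w_0$ inside the ball $B(\*w_0,r_1)$: the difference of the two empirical risks equals a base term plus an integral of $\langle \nabla R_{\mathcal{S}^{\text{in}}}-\nabla R_{\mathcal{S}^{\text{c}}_{\text{selected}}},\,\*w-\*w_0\rangle$, whose integrand is dominated pointwise by $\sup_{\*w}d_{\mathbf{w}}^{\ell}$ while smoothness controls the variation of the gradient over the segment. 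The self-consistent pseudo-label $\widehat{f}$ (the closest one-hot vector to $f_\*w$) is the device that makes this discrepancy computable without ground-truth covariate-OOD labels, and I would verify that substituting $\ell(f_\*w,\widehat{f})$ for the true loss only tightens the relevant inequalities since $\widehat{f}$ minimizes the one-hot surrogate. Assembling the bridge with the concentration and $\gamma$ terms, and tracking the weights $\omega_{\text{in}},\omega_{\text{c}}$, yields the claimed inequality; the chief risk is ensuring the smoothness-based control of the segment integral does not introduce a hidden dependence on $r_1$ beyond what the constants $M$ and $\zeta$ already carry.
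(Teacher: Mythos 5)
Your overall skeleton---inserting empirical risks by triangle inequality, dropping the nonpositive ERM gap, the weighted Hoeffding concentration with variance proxy $\frac{\omega_{\text{in}}^2}{n}+\frac{\omega_{\text{c}}^2}{m_{\text{c}}}$, and $\gamma$ as the ideal-joint-hypothesis error---matches the paper's Steps 1--2 (Lemma~\ref{lemma:hoeffding} and Lemma~\ref{lemma:target-error-weighted}). The genuine gap is in your plan to ``route the comparison through $d_{\mathbf{w}}^{\ell}$ rather than the $\mathcal{W}\triangle\mathcal{W}$-distance.'' The paper does \emph{not} bypass the $\mathcal{W}\triangle\mathcal{W}$-distance, and in fact cannot: the term $\gamma$ arises from the Ben-David-style triangle inequality run at the \emph{population} level together with $d_{\mathcal{W}\triangle\mathcal{W}}(\mathbb{P}_{\text{in}},\mathbb{P}_{\text{out}}^{\text{covariate}})$, and the term $\sqrt{\bigl(2d\log(2m_{\text{c}})+\log\frac{2}{\delta}\bigr)/m_{\text{c}}}$ is exactly the price of replacing that population distance by the empirical $d_{\mathcal{W}\triangle\mathcal{W}}(\mathcal{S}^{\text{in}},\mathcal{S}^{\text{c}}_{\text{selected}})$ via the relativized-discrepancy concentration of \cite{kifer2004detecting} (Lemma~\ref{lem:relativized-discrepancy}); it is not a ``standard VC uniform-convergence bound for the covariate-OOD sampling error,'' and neither term appears in this form if you work purely at the empirical level. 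The gradient discrepancy only enters at the very end (Step 3), where the paper bounds $d_{\mathcal{W}\triangle\mathcal{W}}(\mathcal{S}^{\text{in}},\mathcal{S}^{\text{c}}_{\text{selected}})\le 2\sup_{\mathbf{w}}d_{\mathbf{w}}^{\ell}(\mathcal{S}^{\text{in}},\mathcal{S}^{\text{c}}_{\text{selected}})+2M$ by a bare add-and-subtract of the pseudo-labeled gradient terms plus the loss bound of Proposition~\ref{P1}---no smoothness and no Taylor expansion are used anywhere in this step---and that $2\omega_{\text{in}}M$ is precisely the $\omega_{\text{in}}M$ inside $\zeta$, not a boundedness correction inside a McDiarmid argument as you place it.

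Even taken on its own terms, your smoothness bridge has two concrete failures. First, the segment-integral expansion gives $|R_{\mathcal{S}^{\text{in}}}(f_{\mathbf{w}})-R_{\mathcal{S}^{\text{c}}_{\text{selected}}}(f_{\mathbf{w}})| \le |R_{\mathcal{S}^{\text{in}}}(f_{\mathbf{w}_0})-R_{\mathcal{S}^{\text{c}}_{\text{selected}}}(f_{\mathbf{w}_0})| + \|\mathbf{w}-\mathbf{w}_0\|_2\,\sup_{\mathbf{w}'}\|\nabla R_{\mathcal{S}^{\text{in}}}(f_{\mathbf{w}'})-\nabla R_{\mathcal{S}^{\text{c}}_{\text{selected}}}(f_{\mathbf{w}'})\|_2$, so the discrepancy enters with coefficient $r_1$, not the theorem's absolute constant; this is exactly the hidden $r_1$-dependence you flagged, and smoothness gives you no mechanism to remove it, since the bound must hold for all $\mathbf{w}\in B(\mathbf{w}_0,r_1)$. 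Second, the gradients in that expansion carry the \emph{true} labels, whereas $d_{\mathbf{w}}^{\ell}$ is defined with the pseudo-labels $\widehat{f}$; your claim that substituting $\ell(f_{\mathbf{w}},\widehat{f})$ ``only tightens the relevant inequalities'' is unsupported---differences of gradients across two datasets have no monotonicity under relabeling, and nothing in Assumption~\ref{Ass1} supplies it. The paper never needs such a claim because the pseudo-labeled discrepancy is created by adding and subtracting it, with all true-label residuals absorbed into the $2M$ slack. A final inconsistency: you apply ERM to the covariate-only risk $R_{\mathcal{S}^{\text{c}}_{\text{selected}}}$, but the paper's proof uses minimization of the joint weighted risk $R_{\mathcal{S}^{\text{in}},\mathcal{S}^{\text{c}}_{\text{selected}}}$ of Equation~\ref{eq:empirical_weighted_risk}; that is the only reading under which ID--covariate discrepancy terms are needed at all, since if $\widehat{\mathbf{w}}$ truly minimized the covariate empirical risk alone, two applications of uniform convergence on $\mathcal{S}^{\text{c}}_{\text{selected}}$ would already yield a bound with no $d_{\mathbf{w}}^{\ell}$, no $\gamma$, and no ID terms.
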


\section{Proof of the Main Theorem}\label{Proofapp}
In this section, we present the proof of our main Theorem~\ref{the:main-the-app}. Before we dive into the proof details, we first clarify the analysis framework and the analysis target in our proof techniques. 

Specifically, we consider the  empirical error of the robust classification of samples from $\mathcal{S}^{\text{in}}$ and covariate OOD $\mathcal{S}_{\text{selected}}^{\text{c}}$ as the following weighted combination:
\begin{equation}
    \underbrace{R_{\mathcal{S}^{\text{in}},\mathcal{S}^{\text{c}}_{\text{selected}}}(f_\*w)}_{\text{Multi-class classifier}}  = \omega_{\text{in}}  R_{\mathcal{S}^{\text{in}}}(f_\*w)  +  \omega_{\text{c}}  R_{\mathcal{S}^{\text{c}}_{\text{selected}}}(f_\*w).
    \label{eq:empirical_weighted_risk}
\end{equation}
Let $R_{\mathbb{P}_{\text{in}}}(f_\*w)$ represents the error of $f_\*w$ on the in distribution (ID) data $\mathbb{P}_{\text{in}}$, and $R_{\mathbb{P}_{\text{out}}^{\text{covariate}}}(f_\*w)$ denotes the error of $f_\*w$ on the covariate OOD data $\mathbb{P}_{\text{out}}^{\text{covariate}}$. $\omega_{\text{in}}$ and $\omega_{\text{c}}$ denote the weight coefficients. Similarly, we can define the true risk over the data distributions in the same way:
\begin{equation}
     \underbrace{R_{\mathbb{P}_{\text{in}},\mathbb{P}_{\text{out}}^{\text{covariate}}}(f_\*w)}_{\text{Multi-class classifier}}  = \omega_{\text{in}}  R_{\mathbb{P}_{\text{in}}}(f_\*w)  +  \omega_{\text{c}}  R_{\mathbb{P}_{\text{out}}^{\text{covariate}}}(f_\*w).
\end{equation}

\textbf{Step 1.} First, we prove that for any $\delta \in (0, 1)$ and $\*w \in \mathcal{W}$, with probability of at least $1 - \delta$, we have
\begin{equation}
     P[|R_{\mathbb{P}_{\text{in}},\mathbb{P}_{\text{out}}^{\text{covariate}}}(f_\*w)-R_{\mathcal{S}^{\text{in}},\mathcal{S}^{\text{c}}_{\text{selected}}}(f_\*w)| \geq R] \leq \sqrt{(\frac{\omega_{\text{in}}^2}{n} + \frac{\omega_{\text{c}}^2}{m_{\text{c}}})(\frac{d\log{(2n+2m_{\text{c}})}-\log(\delta)}{2})},
\end{equation}

where $n, m_{\text{c}}$ are the sizes of datasets $\mathcal{S}^{\text{in}}, \mathcal{S}^{\text{c}}_{\text{selected}}$.

We first apply Theorem 3.2 of~\cite{kifer2004detecting} as restated in Lemma~\ref{lemma:hoeffding} to get the following equation,

\begin{equation*}
    P[|R_{\mathbb{P}_{\text{in}},\mathbb{P}_{\text{out}}^{\text{covariate}}}(f_\*w)-R_{\mathcal{S}^{\text{in}},\mathcal{S}^{\text{c}}_{\text{selected}}}(f_\*w)| \geq R] \leq (2n+2m_{\text{c}})^d \exp (\frac{-2R^2}{\frac{\omega_{\text{in}}^2}{n} + \frac{\omega_{\text{c}}^2}{m_{\text{c}}}}),
\end{equation*}
where $d$ is the VC dimension of the hypothesis space $\mathcal{W}$. Given $\delta \in (0,1)$, we set the upper bound of the inequality to $\delta$, and solve for $R$:

\begin{equation*}
    \delta = (2n+2m_{\text{c}})^d \exp (\frac{-2R^2}{\frac{\omega_{\text{in}}^2}{n} + \frac{\omega_{\text{c}}^2}{m_{\text{c}}}}).
\end{equation*}

We rewrite the inequality as

\begin{equation*}
    \frac{\delta}{(2n+2m_{\text{c}})^d} = e^{-2R^2/(\frac{\omega_{\text{in}}^2}{n} + \frac{\omega_{\text{c}}^2}{m_{\text{c}}})},
\end{equation*}

taking the logarithm of both sides, we get

\begin{equation*}
    \log\frac{\delta}{(2n+2m_{\text{c}})^d} = -2R^2/(\frac{\omega_{\text{in}}^2}{n} + \frac{\omega_{\text{c}}^2}{m_{\text{c}}}).
\end{equation*}

Rearranging the equation, we then get

\begin{equation*}
    R^2 = (\frac{\omega_{\text{in}}^2}{n} + \frac{\omega_{\text{c}}^2}{m_{\text{c}}})(\frac{d\log{(2n+2m_{\text{c}})}-\log(\delta)}{2}).
\end{equation*}

Therefore, with the probability of at least $1 - \delta$, we have
\begin{equation}\label{eq:sample}
    |R_{\mathbb{P}_{\text{in}},\mathbb{P}_{\text{out}}^{\text{covariate}}}(f_\*w)-R_{\mathcal{S}^{\text{in}},\mathcal{S}^{\text{c}}_{\text{selected}}}(f_\*w)| \leq \sqrt{(\frac{\omega_{\text{in}}^2}{n} + \frac{\omega_{\text{c}}^2}{m_{\text{c}}})(\frac{d\log{(2n+2m_{\text{c}})}-\log(\delta)}{2})}.
\end{equation}

\textbf{Step 2.} Based on Equation~\ref{eq:sample}, we now prove Theorem~\ref{the:main-the-app}. For the true error of hypothesis $\widehat{\*w}$ on the covariate OOD data $R_{\mathbb{P}_{\text{out}}^{\text{covariate}}}(f_{\widehat{\*w}})$, applying Lemma~\ref{lemma:target-error-weighted}, Equation~\ref{eq:sample}, and suppose $\*w^{\ast} = \arg\min_{\*w \in \mathcal{W}}R_{\mathbb{P}_{\text{out}}^{\text{covariate}}}(f_\*w)$, we get

\begin{align*}
 R_{\mathbb{P}_{\text{out}}^{\text{covariate}}}(f_{\widehat{\*w}}) &\leq R_{\mathbb{P}_{\text{in}},\mathbb{P}_{\text{out}}^{\text{covariate}}}(f_{\widehat{\*w}}) + \omega_{\text{in}} (\frac{1}{2} d_{\mathcal{W}\triangle\mathcal{W}}(\mathcal{S}^{\text{in}},  \mathcal{S}^{\text{c}}_{\text{selected}}) + 2\sqrt{\frac{2 d \log (2 m_{\text{c}})+\log \frac{2}{\delta}}{m_{\text{c}}}} + \gamma) \\
    & \leq R_{\mathcal{S}^{\text{in}},\mathcal{S}^{\text{c}}_{\text{selected}}}(f_{\widehat{\*w}}) + \sqrt{(\frac{\omega_{\text{in}}^2}{n} + \frac{\omega_{\text{c}}^2}{m_{\text{c}}})(\frac{d\log{(2n+2m_{\text{c}})}-\log(\delta)}{2})} \\
    & + \omega_{\text{in}} (\frac{1}{2} d_{\mathcal{W}\triangle\mathcal{W}}(\mathcal{S}^{\text{in}},  \mathcal{S}^{\text{c}}_{\text{selected}}) + 2\sqrt{\frac{2 d \log (2 m_{\text{c}})+\log \frac{2}{\delta}}{m_{\text{c}}}} + \gamma) \\
    & \leq R_{\mathcal{S}^{\text{in}},\mathcal{S}^{\text{c}}_{\text{selected}}}(f_{\*w^{\ast}}) + \sqrt{(\frac{\omega_{\text{in}}^2}{n} + \frac{\omega_{\text{c}}^2}{m_{\text{c}}})(\frac{d\log{(2n+2m_{\text{c}})}-\log(\delta)}{2})} \\
    & + \omega_{\text{in}} (\frac{1}{2} d_{\mathcal{W}\triangle\mathcal{W}}(\mathcal{S}^{\text{in}},  \mathcal{S}^{\text{c}}_{\text{selected}}) + 2\sqrt{\frac{2 d \log (2 m_{\text{c}})+\log \frac{2}{\delta}}{m_{\text{c}}}} + \gamma) \\
    & \leq R_{\mathbb{P}_{\text{in}},\mathbb{P}_{\text{out}}^{\text{covariate}}}(f_{\*w^{\ast}}) + 2\sqrt{(\frac{\omega_{\text{in}}^2}{n} + \frac{\omega_{\text{c}}^2}{m_{\text{c}}})(\frac{d\log{(2n+2m_{\text{c}})}-\log(\delta)}{2})} \\
    & + \omega_{\text{in}} (\frac{1}{2} d_{\mathcal{W}\triangle\mathcal{W}}(\mathcal{S}^{\text{in}},  \mathcal{S}^{\text{c}}_{\text{selected}}) + 2\sqrt{\frac{2 d \log (2 m_{\text{c}})+\log \frac{2}{\delta}}{m_{\text{c}}}} + \gamma) \\
    & \leq R_{\mathbb{P}_{\text{out}}^{\text{covariate}}}(f_{\*w^{\ast}}) + 2\sqrt{(\frac{\omega_{\text{in}}^2}{n} + \frac{\omega_{\text{c}}^2}{m_{\text{c}}})(\frac{d\log{(2n+2m_{\text{c}})}-\log(\delta)}{2})} \\
    & + 2\omega_{\text{in}} (\frac{1}{2} d_{\mathcal{W}\triangle\mathcal{W}}(\mathcal{S}^{\text{in}},  \mathcal{S}^{\text{c}}_{\text{selected}}) + 2\sqrt{\frac{2 d \log (2 m_{\text{c}})+\log \frac{2}{\delta}}{m_{\text{c}}}} + \gamma) \\
    & = R_{\mathbb{P}_{\text{out}}^{\text{covariate}}}(f_{\*w^{\ast}}) + 2\omega_{\text{in}} (\frac{1}{2} d_{\mathcal{W}\triangle\mathcal{W}}(\mathcal{S}^{\text{in}},  \mathcal{S}^{\text{c}}_{\text{selected}}) + 2\sqrt{\frac{2 d \log (2 m_{\text{c}})+\log \frac{2}{\delta}}{m_{\text{c}}}} + \gamma) +2\zeta_1,
\end{align*}

with probability of at least $1 - \delta$, where $\zeta_1 = \sqrt{(\frac{\omega_{\text{in}}^2}{n} + \frac{\omega_{\text{c}}^2}{m_{\text{c}}})(\frac{d\log{(2n+2m_{\text{c}})}-\log(\delta)}{2})}$ and $\gamma=\underset{h \in \mathcal{W}}{\min} \{R_{\mathbb{P}_{\text{out}}^{\text{covariate}}}(f_\*w) + R_{\mathbb{P}_{\text{in}}}(f_\*w) \}$

\textbf{Step 3.} In this step, we aim to obtain the upper bound of the term $d_{\mathcal{W}\triangle\mathcal{W}}(\mathcal{S}^{\text{in}},  \mathcal{S}^{\text{c}}_{\text{selected}})$. To begin with, recall we have the following definition:
\begin{equation}
    d_{\mathcal{W}\triangle\mathcal{W}}(\mathcal{S}^{\text{in}},  \mathcal{S}^{\text{c}}_{\text{selected}}) = \underset{{\*w},{\*w}^{\prime}\in \mathcal{W}}{\sup} \big|\mathbb{E}_{\*x\thicksim\mathcal{S}^{\text{in}}}[f_{\*w}(\*x)\neq f_{\*w^{\prime}}(\*x)] -\mathbb{E}_{\*x\thicksim\mathcal{S}^{\text{c}}_{\text{selected}}}[f_{\*w}(\*x)\neq f_{\*w^{\prime}}(\*x)]\big|.
\end{equation}
Therefore, it is easy to check that
\begin{equation}
\begin{aligned}
         d_{\mathcal{W}\triangle\mathcal{W}}(\mathcal{S}^{\text{in}},  \mathcal{S}^{\text{c}}_{\text{selected}}) &= \underset{{\*w}\in \mathcal{W}}{\sup} \big| R_{\mathcal{S}^{\text{in}}}(f_\*w) - \nabla R_{\mathcal{S}^{\text{in}}}(f_\*w, \widehat{f}) + \nabla R_{\mathcal{S}^{\text{in}}}(f_\*w, \widehat{f}) - R_{\mathcal{S}^{\text{c}}_{\text{selected}}}(f_\*w) \\&\quad \quad- \nabla R_{\mathcal{S}^{\text{c}}_{\text{selected}}}(f_\*w, \widehat{f}) + \nabla R_{\mathcal{S}^{\text{c}}_{\text{selected}}}(f_\*w, \widehat{f})\big|\\&
         \leq \underset{{\*w}\in \mathcal{W}}{\sup}  \big|   R_{\mathcal{S}^{\text{in}}}(f_\*w) - R_{\mathcal{S}^{\text{c}}_{\text{selected}}}(f_\*w) \big| + 2\underset{{\*w}\in \mathcal{W}}{\sup}  \big| \nabla R_{\mathcal{S}^{\text{in}}}(f_\*w, \widehat{f}) -  \nabla R_{\mathcal{S}^{\text{c}}_{\text{selected}}}(f_\*w, \widehat{f}) \big|   \\&
         \leq \underset{{\*w}\in \mathcal{W}}{\sup}  \big|R_{\mathcal{S}^{\text{in}}}(f_\*w)  \big|   + \underset{{\*w}\in \mathcal{W}}{\sup}  \big|R_{\mathcal{S}^{\text{c}}_{\text{selected}}}(f_\*w)  \big| +2  \underset{{\*w}\in \mathcal{W}}{\sup}  d_{\mathbf{w}}^{\ell}(\mathcal{S}^{\text{in}},\mathcal{S}^{\text{c}}_{\text{selected}})\\ &
         \leq  2  \underset{{\*w}\in \mathcal{W}}{\sup}  d_{\mathbf{w}}^{\ell}(\mathcal{S}^{\text{in}},\mathcal{S}^{\text{c}}_{\text{selected}}) + 2M,\\
\end{aligned}
\end{equation}
where $\widehat{f}$ is a classifier which returns the closest one-hot vector of $f_{\mathbf{w}}$, $R_{\mathcal{S}^{\text{in}}}(f_{\mathbf{w}},\widehat{f})= \mathbb{E}_{\mathbf{x}\sim \mathcal{S}^{\text{in}}} \ell(f_{\mathbf{w}},\widehat{f})$ and $R_{\mathcal{S}^{\text{c}}_{\text{selected}}}(f_{\mathbf{w}},\widehat{f})= \mathbb{E}_{\mathbf{x}\sim \mathcal{S}^{\text{c}}_{\text{selected}}} \ell(f_{\mathbf{w}},\widehat{f})$. 

The last inequality holds because of Proposition~\ref{P1} and the definition of the Gradient-based Distribution Discrepancy in Definition~\ref{Def3}. Therefore, we can prove that:
\begin{equation}
    R_{\mathbb{P}_{\text{out}}^{\text{covariate}}}(f_{\widehat{\*w}}) \leq R_{\mathbb{P}_{\text{out}}^{\text{covariate}}}(f_{\*w^{\ast}}) + 2 \omega_{\text{in}} \underset{{\*w}\in \mathcal{W}}{\sup}  d_{\mathbf{w}}^{\ell}(\mathcal{S}^{\text{in}},\mathcal{S}^{\text{c}}_{\text{selected}}) + 2\omega_{\text{in}} (4\sqrt{\frac{2 d \log (2 m_{\text{c}})+\log \frac{2}{\delta}}{m_{\text{c}}}} + \gamma) +2\zeta_1 +  2\omega_{\text{in}} M.
\end{equation}

\newpage

\section{Necessary Lemmas, and Propositions}\label{NecessaryLemma}

\subsection{Boundedness}

\begin{Proposition}\label{P1}
If Assumption \ref{Ass1} holds, 
\begin{equation*}
  \sup_{\mathbf{w}\in \mathcal{W}}  \sup_{(\mathbf{x},y)\in \mathcal{X}\times \mathcal{Y}} \|\nabla \ell(f_{\mathbf{w}}(\mathbf{x}),y)\|_2\leq \beta_1 r_1+ b_1 =\sqrt{{M'}/{2}},
\end{equation*}
\begin{equation*}
\begin{split}
& \sup_{\mathbf{w}\in \mathcal{W}}    \sup_{(\mathbf{x},y)\in \mathcal{X}\times \mathcal{Y}} \ell(f_{\mathbf{w}}(\*x), y)  \leq  \beta_1 r_1^2 + b_1 r_1 + B_1=M,
\end{split}
\end{equation*}

\end{Proposition}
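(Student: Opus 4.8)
The plan is to prove both inequalities by anchoring every quantity at the center $\mathbf{w}_0$ of the parameter ball and exploiting the $\beta_1$-smoothness hypothesis, which by Definition~\ref{Def::beta-smooth} means precisely that the map $\mathbf{w}\mapsto\nabla\ell(f_{\mathbf{w}}(\mathbf{x}),y)$ is $\beta_1$-Lipschitz for each fixed $(\mathbf{x},y)$. Since the bounds I obtain will be uniform in $(\mathbf{x},y)$, the inner and outer suprema can be handled at once; it suffices to bound the quantities for an arbitrary but fixed $\mathbf{w}\in\mathcal{W}$ and $(\mathbf{x},y)\in\mathcal{X}\times\mathcal{Y}$.

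For the gradient bound, I would split by the triangle inequality, $\|\nabla\ell(f_{\mathbf{w}}(\mathbf{x}),y)\|_2\le\|\nabla\ell(f_{\mathbf{w}}(\mathbf{x}),y)-\nabla\ell(f_{\mathbf{w}_0}(\mathbf{x}),y)\|_2+\|\nabla\ell(f_{\mathbf{w}_0}(\mathbf{x}),y)\|_2$. The first term is at most $\beta_1\|\mathbf{w}-\mathbf{w}_0\|_2\le\beta_1 r_1$ by $\beta_1$-smoothness combined with $\mathcal{W}\subset B(\mathbf{w}_0,r_1)$, and the second is at most $b_1$ by Assumption~\ref{Ass1}. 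Taking suprema yields $\beta_1 r_1+b_1$, and the equality $\beta_1 r_1+b_1=\sqrt{M'/2}$ is merely the defining relation $M'=2(\beta_1 r_1+b_1)^2$.

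For the loss bound, the key step is the standard descent lemma: a function with $\beta_1$-Lipschitz gradient satisfies the quadratic upper bound $\ell(f_{\mathbf{w}}(\mathbf{x}),y)\le\ell(f_{\mathbf{w}_0}(\mathbf{x}),y)+\langle\nabla\ell(f_{\mathbf{w}_0}(\mathbf{x}),y),\mathbf{w}-\mathbf{w}_0\rangle+\tfrac{\beta_1}{2}\|\mathbf{w}-\mathbf{w}_0\|_2^2$, which I would derive by writing the increment as $\int_0^1\langle\nabla\ell(f_{\mathbf{w}_0+t(\mathbf{w}-\mathbf{w}_0)}(\mathbf{x}),y),\mathbf{w}-\mathbf{w}_0\rangle\,dt$ and inserting and subtracting $\nabla\ell(f_{\mathbf{w}_0}(\mathbf{x}),y)$ inside the inner product. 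Bounding the three resulting terms—the base value by $B_1$, the inner product by $b_1 r_1$ via Cauchy--Schwarz together with $\|\mathbf{w}-\mathbf{w}_0\|_2\le r_1$, and the quadratic term by $\tfrac{\beta_1}{2}r_1^2$—gives $\tfrac{\beta_1}{2}r_1^2+b_1 r_1+B_1$, which is dominated by the stated $M=\beta_1 r_1^2+b_1 r_1+B_1$ (the factor $\tfrac12$ is simply absorbed to keep the constant clean).

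The only nontrivial ingredient is the descent lemma; everything else is the triangle inequality and Cauchy--Schwarz. The one point I expect to state carefully is that $\beta_1$-smoothness is imposed in the $\mathbf{w}$ argument for fixed $(\mathbf{x},y)$, so the integral identity should be taken along the segment $\mathbf{w}_0+t(\mathbf{w}-\mathbf{w}_0)$, $t\in[0,1]$; this segment stays inside the convex ball $B(\mathbf{w}_0,r_1)$, where the Lipschitz-gradient estimate applies, so the derivation is legitimate.
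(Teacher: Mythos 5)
Your proposal is correct and follows essentially the same route as the paper, whose entire proof is a one-line appeal to the mean value theorem of integrals: the integral representation of the loss increment along the segment from $\mathbf{w}_0$ to $\mathbf{w}$, combined with the Lipschitz-gradient bound, is exactly what that hint intends, and your gradient bound via the triangle inequality is the standard first step. Your descent-lemma variant even yields the slightly sharper constant $\tfrac{\beta_1}{2}r_1^2 + b_1 r_1 + B_1$ (the paper's direct bound $\sup\|\nabla \ell\|_2 \cdot r_1 + B_1$ gives $M$ exactly), and you correctly note that the segment stays in the convex ball $B(\mathbf{w}_0, r_1)$ where smoothness applies.
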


\begin{proof}
    One can prove this by \textit{Mean Value Theorem of Integrals} easily.
\end{proof}
$~~$
\\$~$

\begin{Proposition}\label{P2}
If Assumption \ref{Ass1} holds, for any $\mathbf{w}\in \mathcal{W}$,
\begin{equation*}
     \big \| \nabla \ell(f_{\mathbf{w}}(\*x), y) \big \|_2^2 \leq 2 \beta_1   \ell(f_{\mathbf{w}}(\*x), y).
\end{equation*}
\end{Proposition}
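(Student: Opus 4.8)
The plan is to establish this inequality as the standard \emph{self-bounding} property of a nonnegative smooth function, applied to the loss viewed as a function of the parameter $\mathbf{w}$. Concretely, I would fix an arbitrary pair $(\mathbf{x}, y) \in \mathcal{X} \times \mathcal{Y}$ and set $g(\mathbf{w}) := \ell(f_{\mathbf{w}}(\mathbf{x}), y)$. Assumption~\ref{Ass1} supplies exactly the two ingredients needed: $g(\mathbf{w}) \geq 0$, and $g$ is $\beta_1$-smooth in the sense of Definition~\ref{Def::beta-smooth}, i.e. $\nabla g$ is $\beta_1$-Lipschitz in $\mathbf{w}$.

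First I would derive the descent inequality from $\beta_1$-smoothness. Writing $g(\mathbf{w}') - g(\mathbf{w})$ as the line integral $\int_0^1 \langle \nabla g(\mathbf{w} + t(\mathbf{w}' - \mathbf{w})), \mathbf{w}' - \mathbf{w}\rangle\, dt$ (the same Mean Value Theorem of Integrals invoked in Proposition~\ref{P1}), adding and subtracting $\langle \nabla g(\mathbf{w}), \mathbf{w}' - \mathbf{w}\rangle$, and bounding the residual integrand via Cauchy--Schwarz together with $\|\nabla g(\mathbf{w} + t(\mathbf{w}' - \mathbf{w})) - \nabla g(\mathbf{w})\|_2 \leq \beta_1 t \|\mathbf{w}' - \mathbf{w}\|_2$, I obtain for all $\mathbf{w}, \mathbf{w}'$
\begin{equation*}
g(\mathbf{w}') \leq g(\mathbf{w}) + \langle \nabla g(\mathbf{w}), \mathbf{w}' - \mathbf{w}\rangle + \frac{\beta_1}{2}\|\mathbf{w}' - \mathbf{w}\|_2^2.
\end{equation*}
Next I would instantiate this at the gradient-step point $\mathbf{w}' = \mathbf{w} - \frac{1}{\beta_1}\nabla g(\mathbf{w})$, which minimizes the quadratic right-hand side. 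Substituting yields $g(\mathbf{w}') \leq g(\mathbf{w}) - \frac{1}{2\beta_1}\|\nabla g(\mathbf{w})\|_2^2$, and since $g(\mathbf{w}') \geq 0$ by nonnegativity, rearranging gives $\|\nabla g(\mathbf{w})\|_2^2 \leq 2\beta_1 g(\mathbf{w})$. Restoring the notation $g(\mathbf{w}) = \ell(f_{\mathbf{w}}(\mathbf{x}), y)$ and noting the bound is uniform in $(\mathbf{x}, y)$, the claim follows for every $\mathbf{w} \in \mathcal{W}$.

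The one point requiring care is that the descent inequality and the nonnegativity must hold along the entire segment joining $\mathbf{w}$ to the shifted point $\mathbf{w}'$, which can leave the ball $\mathcal{W} \subset B(\mathbf{w}_0, r_1)$. The clean resolution is that $\ell(f_{\mathbf{w}}(\mathbf{x}), y)$ is defined and, by the Remark following Assumption~\ref{Ass1}, smooth and nonnegative on all of $\mathbb{R}^d$ (the smoothness in Definition~\ref{Def::beta-smooth} is stated without restricting $\mathbf{w}, \mathbf{w}'$), so the gradient step is admissible and no projection onto $\mathcal{W}$ is needed. This is the only subtlety; the remaining manipulations are the routine self-bounding computation.
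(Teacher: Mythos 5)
Your proof is correct and is exactly the argument the paper relies on: the paper proves Proposition~\ref{P2} by citing the self-bounding property of nonnegative $\beta$-smooth functions (Appendix B of \cite{lei2021sharper}), and your descent-lemma-plus-gradient-step computation is the standard derivation of that property, including the right handling of the point $\mathbf{w}' = \mathbf{w} - \frac{1}{\beta_1}\nabla g(\mathbf{w})$ possibly leaving $\mathcal{W}$. In effect you have written out in full the proof the paper delegates to a citation.
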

\begin{proof}
The details of the self-bounding property can be found in Appendix B of \cite{lei2021sharper}.
\end{proof}
$~~~$
\\$~$
\begin{Proposition}\label{P3}
If Assumption \ref{Ass1} holds, for any labeled data $\mathcal{S}$ and distribution $\mathbb{P}$, 
\begin{equation*}
     \big \| \nabla R_{\mathcal{S}}(f_{\mathbf{w}}) \big \|_2^2 \leq 2 \beta_1   R_{\mathcal{S}}(f_{\mathbf{w}}),~~~\forall \mathbf{w}\in \mathcal{W},
\end{equation*}
\begin{equation*}
     \big \| \nabla R_{\mathbb{P}}(f_{\mathbf{w}}) \big \|_2^2 \leq 2 \beta_1   R_{\mathbb{P}}(f_{\mathbf{w}}),~~~\forall \mathbf{w}\in \mathcal{W}.
\end{equation*}
\end{Proposition}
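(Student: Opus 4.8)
The plan is to lift the per-example self-bounding bound of Proposition~\ref{P2} to the level of the aggregate risk, exploiting the convexity of the squared Euclidean norm. The essential observation is that $\nabla R_{\mathcal{S}}(f_{\mathbf{w}})$ is a uniform average (and $\nabla R_{\mathbb{P}}(f_{\mathbf{w}})$ an expectation) of the individual loss gradients $\nabla \ell(f_{\mathbf{w}}(\mathbf{x}),y)$, so Jensen's inequality will let me push the squared norm inside the average before invoking Proposition~\ref{P2} termwise.

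First I would treat the empirical case. Writing $\mathcal{S} = \{(\mathbf{x}_i, y_i)\}_{i=1}^N$ and using $\nabla R_{\mathcal{S}}(f_{\mathbf{w}}) = \frac{1}{N}\sum_{i=1}^N \nabla \ell(f_{\mathbf{w}}(\mathbf{x}_i), y_i)$, the convexity of $\|\cdot\|_2^2$ gives
$$\big\|\nabla R_{\mathcal{S}}(f_{\mathbf{w}})\big\|_2^2 \leq \frac{1}{N}\sum_{i=1}^N \big\|\nabla \ell(f_{\mathbf{w}}(\mathbf{x}_i), y_i)\big\|_2^2.$$
Applying Proposition~\ref{P2} to each summand, and then using $\frac{1}{N}\sum_i \ell(f_{\mathbf{w}}(\mathbf{x}_i),y_i) = R_{\mathcal{S}}(f_{\mathbf{w}})$, collapses the right-hand side to $2\beta_1 R_{\mathcal{S}}(f_{\mathbf{w}})$, which is the first claim. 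The population case is the identical argument with the uniform average replaced by $\mathbb{E}_{(\mathbf{x},y)\sim\mathbb{P}}$: Jensen's inequality for the convex map $\|\cdot\|_2^2$ yields $\big\|\mathbb{E}[\nabla \ell]\big\|_2^2 \leq \mathbb{E}\big[\|\nabla \ell\|_2^2\big]$, and Proposition~\ref{P2} together with linearity of expectation bounds this by $2\beta_1 \mathbb{E}[\ell] = 2\beta_1 R_{\mathbb{P}}(f_{\mathbf{w}})$.

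I do not expect any serious obstacle, since the statement follows immediately from Jensen's inequality and the self-bounding Proposition~\ref{P2}. The only point requiring care is the interchange $\nabla R_{\mathbb{P}}(f_{\mathbf{w}}) = \mathbb{E}[\nabla \ell(f_{\mathbf{w}}(\mathbf{x}),y)]$ in the population case; this differentiation under the expectation is justified by Assumption~\ref{Ass1}, which via Proposition~\ref{P1} furnishes a uniform bound on $\|\nabla \ell\|_2$ over the compact parameter ball $B(\mathbf{w}_0, r_1)$, supplying the dominating function needed to differentiate inside the integral.
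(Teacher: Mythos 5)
Your proof is correct, but it routes through the two ingredients in the opposite order from the paper. The paper's proof (which is only two sentences) first uses Jensen's inequality to conclude that the \emph{aggregate} risks $R_{\mathcal{S}}(f_{\mathbf{w}})$ and $R_{\mathbb{P}}(f_{\mathbf{w}})$ are themselves $\beta_1$-smooth (an average or expectation of $\beta_1$-smooth functions is $\beta_1$-smooth), and then invokes the self-bounding property of Proposition~\ref{P2} --- really the general fact from \cite{lei2021sharper} that any nonnegative $\beta$-smooth function $g$ satisfies $\|\nabla g\|_2^2 \leq 2\beta g$ --- applied directly \emph{at the aggregate level}. You instead keep the self-bounding step at the \emph{per-example} level, exactly as Proposition~\ref{P2} is literally stated, and use Jensen's inequality for the convex map $v \mapsto \|v\|_2^2$ to push the squared norm inside the average (resp.\ expectation) before summing the per-example bounds. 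Both arguments yield the same constant $2\beta_1$. What your route buys is self-containedness: you use Proposition~\ref{P2} strictly as a black box and never need to re-establish that the self-bounding principle applies to a new function (the aggregate risk); you also explicitly justify the interchange $\nabla R_{\mathbb{P}}(f_{\mathbf{w}}) = \mathbb{E}[\nabla \ell(f_{\mathbf{w}}(\mathbf{x}),y)]$ via domination from Proposition~\ref{P1}, a measure-theoretic point the paper silently assumes (and which its own route also needs in order to speak of $\nabla R_{\mathbb{P}}$ and its smoothness). What the paper's route buys is modularity: recognizing nonnegativity plus smoothness as the only hypotheses makes the conclusion immediate for any risk functional built from smooth nonnegative losses, without redoing a Jensen-plus-termwise computation each time.
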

\begin{proof}
Jensen’s inequality implies that $R_{\mathcal{S}}(f_{\mathbf{w}})$ and $R_{\mathbb{P}}(f_{\mathbf{w}}) $ are $\beta_1$-smooth. Then Proposition \ref{P2} implies the results.
\end{proof}
$~~~$

\subsection{Necessary Lemmas for Theorem~\ref{the:main-the-app}}

\begin{lemma}[Theorem 3.4 in~\cite{kifer2004detecting}]\label{lem:relativized-discrepancy} Let $\mathcal{A}$ be a collection of subsets of some domain measure space, and assume that the VC-dimension is some finite $d$. Let $P_1$ and $P_2$ be probability distributions over that domain and $S_1$, $S_2$ finite samples of sizes $m_1$, $m_2$ drawn according to $P_1$, $P_2$ with certain selection criteria respectively. Then
\begin{align*}
    P_{m1+m2}[|\phi_{\mathcal{A}}(S_1, S_2) - \phi_{\mathcal{A}}(P_1, P_2)| > R] \leq (2m_1)^d e^{-m_1 R^2/16} + (2m_2)^d e^{-m_2 R^2/16},
\end{align*}
where $P_{m1+m2}$ is the $m1+m2$'th power of $P$, the probability that $P$ induces over the choice of samples.
\end{lemma}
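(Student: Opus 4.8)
The statement is a two-sample uniform-convergence (Vapnik--Chervonenkis) bound for the discrepancy $\phi_{\mathcal{A}}(P_1,P_2)=\sup_{A\in\mathcal{A}}\lvert P_1(A)-P_2(A)\rvert$, whose empirical counterpart $\phi_{\mathcal{A}}(S_1,S_2)=\sup_{A\in\mathcal{A}}\lvert \widehat{P}_{S_1}(A)-\widehat{P}_{S_2}(A)\rvert$ is built from the empirical measures $\widehat{P}_{S_i}(A)=\frac{1}{m_i}\sum_{x\in S_i}\mathbf{1}[x\in A]$. The plan is to reduce the two-sample deviation to two \emph{independent one-sample} uniform deviations, invoke the classical distribution-free VC concentration bound on each, and finally combine the two tails by a union bound. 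This structure is exactly what produces a sum of two terms indexed separately by $m_1$ and $m_2$, matching the form of the claimed inequality.

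First I would carry out the reduction. Writing $g(A)=\widehat{P}_{S_1}(A)-\widehat{P}_{S_2}(A)$ and $h(A)=P_1(A)-P_2(A)$, the elementary inequality $\lvert \sup_A\lvert g(A)\rvert-\sup_A\lvert h(A)\rvert\rvert\le \sup_A\lvert g(A)-h(A)\rvert$ together with the triangle inequality gives
\begin{align*}
\bigl\lvert \phi_{\mathcal{A}}(S_1,S_2)-\phi_{\mathcal{A}}(P_1,P_2)\bigr\rvert
&\le \sup_{A\in\mathcal{A}}\bigl\lvert \widehat{P}_{S_1}(A)-P_1(A)\bigr\rvert + \sup_{A\in\mathcal{A}}\bigl\lvert \widehat{P}_{S_2}(A)-P_2(A)\bigr\rvert .
\end{align*}
Hence, if the left-hand side exceeds $R$, at least one of the two one-sample suprema must exceed $R/2$, so a union bound yields
\begin{align*}
P\bigl[\lvert \phi_{\mathcal{A}}(S_1,S_2)-\phi_{\mathcal{A}}(P_1,P_2)\rvert>R\bigr]
&\le \sum_{i=1}^{2} P\Bigl[\sup_{A\in\mathcal{A}}\lvert \widehat{P}_{S_i}(A)-P_i(A)\rvert>R/2\Bigr].
\end{align*}

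Next I would bound each one-sample term by the standard VC argument: introduce a ghost sample of size $m_i$ (symmetrization), replace the supremum over the infinite family $\mathcal{A}$ by the finite number of distinct dichotomies it induces on the doubled sample, which by Sauer--Shelah is at most the growth function $\Pi_{\mathcal{A}}(2m_i)\le (2m_i)^d$, and apply Hoeffding's inequality to each fixed dichotomy conditioned on the pooled sample. This produces a tail of the form $(2m_i)^d\,e^{-m_i R^2/16}$ for each $i$, and summing the two gives exactly the claimed bound. Because VC bounds are distribution-free, they remain valid for samples drawn under the ``selection criteria'' of the statement, provided each $S_i$ is an i.i.d.\ draw from the corresponding $P_i$ conditional on that criterion.

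The hard part will be bookkeeping the constants so that the splitting at $R/2$ and the symmetrization step compose to give precisely the $1/16$ in the exponent rather than the looser constant one gets from the textbook form $4\,\Pi_{\mathcal{A}}(2m_i)e^{-m_i\epsilon^2/8}$; this requires using the sharpest available symmetrized Hoeffding step and possibly an asymmetric split of the threshold $R$ between the two samples, while also absorbing the leading numerical factor into the $(2m_i)^d$ term. A secondary subtlety is justifying the symmetrization and the union over dichotomies when $S_1$ and $S_2$ are obtained under a selection rule rather than as fresh i.i.d.\ samples, which I would handle by conditioning on the selection event and appealing to the distribution-free nature of the growth-function bound.
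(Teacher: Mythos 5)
You should first note what the paper actually does with this statement: it never proves it. The lemma is imported verbatim as Theorem~3.4 of~\cite{kifer2004detecting} and used as a black box, its only role being to supply the $4\sqrt{(2d\log(2m_{\text{c}})+\log\frac{2}{\delta})/m_{\text{c}}}$ term in Lemma~\ref{lemma:target-error-weighted}. So there is no internal proof to compare against, and your attempt must be judged as a reconstruction of the cited result. Your high-level route --- the triangle inequality
\begin{equation*}
\bigl|\phi_{\mathcal{A}}(S_1,S_2)-\phi_{\mathcal{A}}(P_1,P_2)\bigr| \le \sup_{A\in\mathcal{A}}\bigl|\widehat{P}_{S_1}(A)-P_1(A)\bigr| + \sup_{A\in\mathcal{A}}\bigl|\widehat{P}_{S_2}(A)-P_2(A)\bigr|,
\end{equation*}
a union bound at threshold $R/2$, and a one-sample uniform-convergence bound per sample --- is indeed the standard way such two-sample bounds are obtained, and those two reduction steps are correct.

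The genuine gap is quantitative, and your proposed fixes provably cannot close it. With the classical inequality you invoke, $P[\sup_{A}|\widehat{P}_{S_i}(A)-P_i(A)|>\epsilon]\le 4\Pi_{\mathcal{A}}(2m_i)e^{-m_i\epsilon^2/8}$, setting $\epsilon=R/2$ yields $4(2m_i)^d e^{-m_i R^2/32}$ per term: the exponent is off by a factor of $2$ and the prefactor by a factor of $4$ relative to the stated $(2m_i)^d e^{-m_i R^2/16}$. An asymmetric split $R=\alpha R+(1-\alpha)R$ cannot repair this: to obtain exponent $-m_iR^2/16$ in \emph{both} terms you would need $\alpha^2/8\ge 1/16$ and $(1-\alpha)^2/8\ge 1/16$ simultaneously, i.e.\ $\alpha\ge 1/\sqrt{2}$ and $1-\alpha\ge 1/\sqrt{2}$, which is impossible. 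Nor can the leading factor $4$ be ``absorbed into $(2m_i)^d$'' in general; for $d=1$ or $d=2$ one has $4\Pi_{\mathcal{A}}(2m_i)>(2m_i)^d$. What the stated constants actually require is a one-sample inequality of the form $P[\sup_A|\widehat{P}_S(A)-P(A)|>\epsilon]\le \Pi_{\mathcal{A}}(2m)e^{-m\epsilon^2/4}$, sharper than the textbook symmetrization bound in both the exponent and the prefactor; this is what the argument in~\cite{kifer2004detecting} effectively supplies, and it is not recoverable by bookkeeping around $4\Pi_{\mathcal{A}}(2m)e^{-m\epsilon^2/8}$. So as written, your attempt establishes a strictly weaker inequality --- one that would, to be fair, still suffice for the paper's downstream use, only with worse numerical constants propagating into Lemma~\ref{lemma:target-error-weighted} and Theorem~\ref{the:main-the-app}. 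A secondary loose end: both the statement's phrase ``with certain selection criteria'' and your conditioning remark gloss over the fact that the VC concentration argument requires genuinely i.i.d.\ samples; conditioning on a data-dependent selection event does not in general preserve the bound, and this is a real (inherited) weakness of the lemma as the paper applies it, not something your proof can fix from the outside.
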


This theorem bounds the probability for the relativized discrepancy, and 
will help bounds the quantified distribution shifts between domains in our Theorem~\ref{the:main-the-app}.

$~~$
\begin{lemma}\label{lemma:target-error-weighted}
Let $\mathcal{W}$ be a hypothesis space with a VC-dimension of $d$. Denote the datasets $\mathcal{S}^{\text{in}}$ and $\mathcal{S}_{\text{selected}}^{\text{c}}$ as the labeled in-distribution and the selected covariate OOD data, and their sizes are $n$ and $m_{\text{c}}$, respectively. Then for any $\delta \in (0, 1)$, for every $\*w \in \mathcal{W}$ minimizing $ {R_{\mathcal{S}^{\text{in}},\mathcal{S}^{\text{c}}_{\text{selected}}}(f_\*w)}$ on datasets $\mathcal{S}^{\text{in}},\mathcal{S}^{\text{c}}_{\text{selected}}$, we have
\begin{align}
 \|R_{\mathbb{P}_{\text{in}},\mathbb{P}_{\text{out}}^{\text{covariate}}}(f_\*w) - R_{\mathbb{P}_{\text{out}}^{\text{covariate}}}(f_\*w)\|  \leq \omega_{\text{in}}(\frac{1}{2}{d}_{\mathcal{W}\triangle\mathcal{W}}(\mathcal{S}^{\text{in}}, \mathcal{S}^{\text{c}}_{\text{selected}}) + 4\sqrt{\frac{2 d \log (2 m_{\text{c}})+\log \frac{2}{\delta}}{m_{\text{c}}}} + \gamma),
\end{align}

where $\gamma=\min_{\*w \in \mathcal{W}}\{R_{\mathbb{P}_{\text{in}}}(f_\*w)  + R_{\mathbb{P}_{\text{out}}^{\text{covariate}}}(f_\*w)\}$. ${d}_{\mathcal{W}\triangle\mathcal{W}}(\mathcal{S}^{\text{in}}, \mathcal{S}^{\text{c}}_{\text{selected}})$ is defined according to Definition~\ref{Def::da_distance}.

\end{lemma}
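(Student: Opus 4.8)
The plan is to reduce the weighted-risk gap on the left-hand side to an ordinary source--target risk gap, bound that gap by the classical Ben-David--style domain-adaptation decomposition, and finally replace the population $\mathcal{W}\triangle\mathcal{W}$-distance by its empirical counterpart through the VC concentration inequality of Lemma~\ref{lem:relativized-discrepancy}.

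First I would use the weight normalization $\omega_{\text{in}}+\omega_{\text{c}}=1$ built into the decomposition $R_{\mathbb{P}_{\text{in}},\mathbb{P}_{\text{out}}^{\text{covariate}}}(f_\*w)=\omega_{\text{in}}R_{\mathbb{P}_{\text{in}}}(f_\*w)+\omega_{\text{c}}R_{\mathbb{P}_{\text{out}}^{\text{covariate}}}(f_\*w)$ to rewrite the quantity of interest as
\begin{equation*}
R_{\mathbb{P}_{\text{in}},\mathbb{P}_{\text{out}}^{\text{covariate}}}(f_\*w)-R_{\mathbb{P}_{\text{out}}^{\text{covariate}}}(f_\*w)=\omega_{\text{in}}\big(R_{\mathbb{P}_{\text{in}}}(f_\*w)-R_{\mathbb{P}_{\text{out}}^{\text{covariate}}}(f_\*w)\big).
\end{equation*}
This peels off the factor $\omega_{\text{in}}$ and leaves the single-domain gap $|R_{\mathbb{P}_{\text{in}}}(f_\*w)-R_{\mathbb{P}_{\text{out}}^{\text{covariate}}}(f_\*w)|$ to be controlled, with $\mathbb{P}_{\text{in}}$ playing the role of the source and $\mathbb{P}_{\text{out}}^{\text{covariate}}$ the target.

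Next I would bound this gap by the standard triangle-inequality argument on the disagreement pseudometric. Let $\*w^{\ast}$ minimize the joint risk $R_{\mathbb{P}_{\text{in}}}(f_\*w)+R_{\mathbb{P}_{\text{out}}^{\text{covariate}}}(f_\*w)$, whose value is $\gamma$. Bounding the target risk of $f_\*w$ by its disagreement with $f_{\*w^{\ast}}$ plus the target risk of $f_{\*w^{\ast}}$, then swapping the target disagreement for the source disagreement at the additive cost $\tfrac12 d_{\mathcal{W}\triangle\mathcal{W}}(\mathbb{P}_{\text{in}},\mathbb{P}_{\text{out}}^{\text{covariate}})$ (this is exactly what Definition~\ref{Def::da_distance} controls, since $f_\*w,f_{\*w^{\ast}}\in\mathcal{W}$), and finally splitting the source disagreement into the two source-side risks, yields
\begin{equation*}
\big|R_{\mathbb{P}_{\text{in}}}(f_\*w)-R_{\mathbb{P}_{\text{out}}^{\text{covariate}}}(f_\*w)\big|\leq \tfrac12 d_{\mathcal{W}\triangle\mathcal{W}}(\mathbb{P}_{\text{in}},\mathbb{P}_{\text{out}}^{\text{covariate}})+\gamma .
\end{equation*}
Covariate shift is convenient here because both domains share one labeling function, so $\gamma$ genuinely captures the optimal joint approximation error.

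Finally I would pass from the population distance to the empirical distance $d_{\mathcal{W}\triangle\mathcal{W}}(\mathcal{S}^{\text{in}},\mathcal{S}^{\text{c}}_{\text{selected}})$. The symmetric-difference class $\mathcal{W}\triangle\mathcal{W}$ has VC-dimension at most $2d$, so applying Lemma~\ref{lem:relativized-discrepancy} with $\mathcal{A}=\mathcal{W}\triangle\mathcal{W}$, equating the two-sample tail probability with $\delta$, and solving the exponential inequality for the deviation gives, with probability at least $1-\delta$,
\begin{equation*}
d_{\mathcal{W}\triangle\mathcal{W}}(\mathbb{P}_{\text{in}},\mathbb{P}_{\text{out}}^{\text{covariate}})\leq d_{\mathcal{W}\triangle\mathcal{W}}(\mathcal{S}^{\text{in}},\mathcal{S}^{\text{c}}_{\text{selected}})+4\sqrt{\frac{2 d \log (2 m_{\text{c}})+\log \frac{2}{\delta}}{m_{\text{c}}}},
\end{equation*}
where the $2d\log(2m_{\text{c}})$ term arises from the $(2m_{\text{c}})^{2d}$ factor of the VC-dimension $2d$ and the constant $4=\sqrt{16}$ comes from the $e^{-mR^2/16}$ tail. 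Substituting into the step-2 bound and restoring the factor $\omega_{\text{in}}$ produces the stated inequality. The main obstacle I anticipate is the constant bookkeeping in this last step: Lemma~\ref{lem:relativized-discrepancy} emits one exponential term per sample, so obtaining a bound that depends on $m_{\text{c}}$ alone requires either assuming $n\ge m_{\text{c}}$ or absorbing the ID-sample term, and one must carefully reconcile the factor-of-two conventions relating the relativized discrepancy $\phi_{\mathcal{A}}$, the $\mathcal{W}\triangle\mathcal{W}$-distance of Definition~\ref{Def::da_distance}, and the coefficient $\tfrac12$ standing in front of it, so that the final constant matches the claimed form.
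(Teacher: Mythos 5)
Your proposal is correct and takes essentially the same route as the paper's proof: factoring out $\omega_{\text{in}}$ from the weighted-risk gap, the Ben-David-style triangle-inequality decomposition through the joint minimizer to get $\tfrac{1}{2}d_{\mathcal{W}\triangle\mathcal{W}}(\mathbb{P}_{\text{in}},\mathbb{P}_{\text{out}}^{\text{covariate}})+\gamma$, and the Kifer-et-al.\ concentration bound applied to $\mathcal{W}\triangle\mathcal{W}$ (VC dimension $2d$) to pass to the empirical distance. The bookkeeping issues you flag at the end are real but benign: the paper resolves the two-sample tail by simply setting $n=m_{\text{c}}$, and its own derivation actually ends with the sharper constant $2$ in front of the square root, consistent with (and stronger than) the $4$ in the lemma statement.
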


\begin{proof}
First, we prove that given datasets $\mathcal{S}^{\text{in}}, \mathcal{S}^{\text{c}}_{\text{selected}}$ from two distributions $\mathbb{P}_{\text{in}}$ and $\mathbb{P}^{\text{covariate}}_{\text{out}}$, we have

\begin{equation}
    d_{\mathcal{W}\triangle\mathcal{W}}(\mathbb{P}_{\text{in}}, \mathbb{P}^{\text{covariate}}_{\text{out}}) \leq {d}_{\mathcal{W}\triangle\mathcal{W}}(\mathcal{S}^{\text{in}}, \mathcal{S}^{\text{c}}_{\text{selected}}) + 4\sqrt{\frac{2 d \log (2 m_{\text{c}})+\log \frac{2}{\delta}}{m_{\text{c}}}}.
\end{equation}

We start with Theorem 3.4 in ~\cite{kifer2004detecting}, which is restated in Lemma~\ref{lem:relativized-discrepancy}:

\begin{equation}
    P_{n+m_{\text{c}}}[|\phi_{\mathcal{A}}(\mathcal{S}^{\text{in}}, \mathcal{S}^{\text{c}}_{\text{selected}}) - \phi_{\mathcal{A}}(\mathbb{P}_{\text{in}}, \mathbb{P}^{\text{covariate}}_{\text{out}})|> R] \leq (2n)^d e^{-n R^2/16} + (2m_{\text{c}})^d e^{-m_{\text{c}} R^2/16}.
\end{equation}

In this equation, $d$ is the VC-dimension of a collection of subsets of some domain measure space $\mathcal{A}$, while in our case, $d$ is the VC-dimension of hypothesis space $\mathcal{W}$. Following~\cite{ben2010theory}, the VC-dimension of $\mathcal{W}\triangle\mathcal{W}$ is at most twice the VC-dimension of $\mathcal{W}$, and the VC-dimension of our domain measure space is thus $2d$.

Given $\delta \in (0,1)$, we can set the upper bound of the inequality to $\delta$, and solve for $R$:
\begin{equation}
    \delta = (2n)^{2d} \cdot e^{-n R^2/16} + (2m_{\text{c}})^{2d} \cdot e^{-m_{\text{c}} R^2/16}.
\end{equation}

Let $n = m_{\text{c}} $, we can rewrite the inequality as:

\begin{equation}
 \frac{\delta}{(2 m_{\text{c}})^{2 d}}=e^{-n \epsilon^2 / 16}+e^{-m_{\text{c}} \epsilon^2 / 16},
\end{equation}

taking the logarithm of both sides, we get

\begin{equation}
\log \frac{\delta}{(2 m_{\text{c}})^{2 d}}=-n \frac{\epsilon^2}{16}+\log \left(1+e^{-\left(n-m_{\text{c}}\right) \frac{\epsilon^2}{16}}\right),
\end{equation}

rearranging the equation and defining $a= \frac{R^2}{16}$, we then get

\begin{equation}
\log \frac{\delta}{(2 m_{\text{c}})^{2 d}}=-m_{\text{c}} a+\log 2,
\end{equation}
which implies
\begin{equation}
    m_{\text{c}} a+\log (\delta / 2)=2 d \log (2 m_{\text{c}}).
\end{equation}
Therefore, we have
\begin{equation}
    R = 4\sqrt{a} = 4\sqrt{\frac{2 d \log (2 m_{\text{c}})+\log \frac{2}{\delta}}{m_{\text{c}}}} .
\end{equation}

With probability of at least $1 - \delta$, we have

\begin{equation}
    |\phi_{\mathcal{A}}(\mathcal{S}^{\text{in}}, \mathcal{S}^{\text{c}}_{\text{selected}}) - \phi_{\mathcal{A}}(\mathbb{P}_{\text{in}}, \mathbb{P}^{\text{covariate}}_{\text{out}})| \leq 4\sqrt{\frac{2 d \log (2 m_{\text{c}})+\log \frac{2}{\delta}}{m_{\text{c}}}} ;
\end{equation}

therefore,

\begin{equation}\label{equ:divergence}
    d_{\mathcal{W}\triangle\mathcal{W}}(\mathbb{P}_{\text{in}}, \mathbb{P}^{\text{covariate}}_{\text{out}}) \leq {d}_{\mathcal{W}\triangle\mathcal{W}}(\mathcal{S}^{\text{in}},  \mathcal{S}^{\text{c}}_{\text{selected}}) + 4\sqrt{\frac{2 d \log (2 m_{\text{c}})+\log \frac{2}{\delta}}{m_{\text{c}}}} .
\end{equation}

Now in order to prove Lemma~\ref{lemma:target-error-weighted}, we can use triangle inequality for classification error in the derivation. 

For the true risk of hypothesis $f_\*w$ on the covariate OOD data $R_{\mathbb{P}_{\text{out}}^{\text{covariate}}}(f_\*w)$, given the definition of $R_{\mathbb{P}_{\text{in}},\mathbb{P}_{\text{out}}^{\text{covariate}}}(f_\*w)$,

\begin{align*}
\small
\|R_{\mathbb{P}_{\text{in}},\mathbb{P}_{\text{out}}^{\text{covariate}}}(f_\*w)-R_{\mathbb{P}_{\text{out}}^{\text{covariate}}}(f_\*w)\| &= |\omega_{\text{in}}R_{\mathbb{P}_{\text{in}}}(f_\*w) + \omega_{\text{c}}R_{\mathbb{P}_{\text{out}}^{\text{covariate}}}(f_\*w) - R_{\mathbb{P}_{\text{out}}^{\text{covariate}}}(f_\*w)| \\
&\leq \omega_{\text{in}}|R_{\mathbb{P}_{\text{in}}}(f_\*w) - R_{\mathbb{P}_{\text{out}}^{\text{covariate}}}(f_\*w)| \\
&\leq \omega_{\text{in}} (|R_{\mathbb{P}_{\text{in}}}(f_\*w)- R_{\mathbb{P}_{\text{in}}}(f_\*w, f_{\*w^{\ast}})| + |R_{\mathbb{P}_{\text{in}}}(f_\*w, f_{\*w^{\ast}}) - R_{\mathbb{P}_{\text{out}}^{\text{covariate}}}(f_\*w, f_{\*w^{\ast}})|  \\&\quad \quad+|R_{\mathbb{P}_{\text{out}}^{\text{covariate}}}(f_\*w, f_{\*w^{\ast}}) - R_{\mathbb{P}_{\text{out}}^{\text{covariate}}}(f_\*w)|) \\
&\leq \omega_{\text{in}}(R_{\mathbb{P}_{\text{in}}}(f_{\*w^{\ast}}) + |R_{\mathbb{P}_{\text{in}}}(f_\*w, f_{\*w^{\ast}}) - R_{\mathbb{P}_{\text{out}}^{\text{covariate}}}(f_\*w, f_{\*w^{\ast}})| + R_{\mathbb{P}_{\text{out}}^{\text{covariate}}}(f_{\*w^{\ast}})) \\
&\leq \omega_{\text{in}} (\gamma + |R_{\mathbb{P}_{\text{in}}}(f_\*w, f_{\*w^{\ast}}) - R_{\mathbb{P}_{\text{out}}^{\text{covariate}}}(f_\*w, f_{\*w^{\ast}})|),
\end{align*}

where $\gamma = \min_{h \in \mathcal{W}}\{R_{\mathbb{P}_{\text{in}}}(f_\*w) + R_{\mathbb{P}_{\text{out}}^{\text{covariate}}}(f_\*w) \}$ and $f_{\*w^{\ast}}$ is classifier that are parameterized with the optimal hypothesis $f_{\*w^{\ast}}$ on $\mathbb{P}_{\text{in}}$. And we also have 
\begin{equation}
    R_{\mathbb{P}_{\text{in}}}(f_\*w, f_{\*w^{\ast}}) =\mathbb{E}_{\*x\sim \mathbb{P}}[|f_\*w(\*x)-f_{\*w^{\ast}}(\*x)|].
\end{equation}
By the definition of $\mathcal{W}\triangle\mathcal{W}$-distance and our proved Equation~\ref{equ:divergence},

\begin{align*}
    \|R_{\mathbb{P}_{\text{in}}}(f_\*w, f_{\*w^{\ast}}) - R_{\mathbb{P}_{\text{out}}^{\text{covariate}}}(f_\*w, f_{\*w^{\ast}})\| & \leq \sup_{\*w, \*w^{\prime} \in \mathcal{W}} |R_{\mathbb{P}_{\text{in}}}(f_\*w, f_{\*w^{\prime}}) - R_{\mathbb{P}_{\text{out}}^{\text{covariate}}}(f_\*w, f_{\*w^{\prime}})| \\
    & = \sup_{\*w, \*w^{\prime} \in \mathcal{W}}P_{\*x \sim \mathbb{P}_{\text{in}}} [f_\*w(\*x)\neq f_{\*w^{\ast}}(\*x) ] + P_{\*x \sim \mathbb{P}^{\text{covariate}}_{\text{out}}}[f_\*w(\*x)\neq f_{\*w^{\ast}}(\*x)] \\
    & = \frac{1}{2}d_{\mathcal{W}\triangle\mathcal{W}}(\mathbb{P}_{\text{in}}, \mathbb{P}^{\text{covariate}}_{\text{out}}) \\
    & \leq \frac{1}{2}{d}_{\mathcal{W}\triangle\mathcal{W}}(\mathcal{S}^{\text{in}},  \mathcal{S}^{\text{c}}_{\text{selected}}) + 2\sqrt{\frac{2 d \log (2 m_{\text{c}})+\log \frac{2}{\delta}}{m_{\text{c}}}} .
\end{align*}

Therefore, we can get

\begin{align*}
    |R_{\mathbb{P}_{\text{in}},\mathbb{P}_{\text{out}}^{\text{covariate}}}(f_\*w)-R_{\mathbb{P}_{\text{out}}^{\text{covariate}}}(f_\*w)| &\leq \omega_{\text{in}} (\gamma + |R_{\mathbb{P}_{\text{in}}}(f_\*w, f_{\*w^{\ast}}) - R_{\mathbb{P}_{\text{out}}^{\text{covariate}}}(f_\*w, f_{\*w^{\ast}})|) \\
    &\leq \omega_{\text{in}}(\gamma + \frac{1}{2} d_{\mathcal{W}\triangle\mathcal{W}}(\mathbb{P}_{\text{in}}, \mathbb{P}^{\text{covariate}}_{\text{out}}) ) \\
    &\leq \omega_{\text{in}} (\frac{1}{2} {d}_{\mathcal{W}\triangle\mathcal{W}}(\mathcal{S}^{\text{in}},  \mathcal{S}^{\text{c}}_{\text{selected}}) + 2\sqrt{\frac{2 d \log (2 m_{\text{c}})+\log \frac{2}{\delta}}{m_{\text{c}}}} + \gamma),
\end{align*}

with probability of at least $1 - \delta$, where $\gamma=\min_{\*w \in \mathcal{W}}\{R_{\mathbb{P}_{\text{in}}} + R_{\mathbb{P}_{\text{out}}^{\text{covariate}}}(f_\*w) \}$. This completes the proof.

\end{proof}
$~~$
\begin{lemma}\label{lemma:hoeffding}
Under the same conditions as Lemma~\ref{lemma:target-error-weighted}, if the empirical risk is denoted as $R_{\mathcal{S}^{\text{in}},\mathcal{S}^{\text{c}}_{\text{selected}}}(f_\*w)$ (as defined in Equation~\ref{eq:empirical_weighted_risk}), then for any $\delta \in (0, 1)$ and $\*w \in \mathcal{W}$, with the probability of at least $1 - \delta$, we have
\begin{align}
P[|R_{\mathbb{P}_{\text{in}},\mathbb{P}_{\text{out}}^{\text{covariate}}}(f_\*w)-R_{\mathcal{S}^{\text{in}},\mathcal{S}^{\text{c}}_{\text{selected}}}(f_\*w)|\geq R] \leq 2 \exp (\frac{-2R^2}{\frac{\omega_{\text{in}}^2}{n} + \frac{\omega_{\text{c}}^2}{m_{\text{c}}}})
\end{align}

\end{lemma}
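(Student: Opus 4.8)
The plan is to recognize $R_{\mathcal{S}^{\text{in}},\mathcal{S}^{\text{c}}_{\text{selected}}}(f_{\*w})$ as a weighted average of independent, bounded random variables and then invoke Hoeffding's inequality for weighted sums. The key simplification exploited here is that $\*w$ is \emph{fixed}: the lemma asks only for a pointwise-in-$\*w$ concentration bound, not a uniform-over-$\mathcal{W}$ guarantee, so no VC-dimension or union-bound argument is needed. That uniform complication is handled elsewhere (Step~1 of the proof of Theorem~\ref{the:main-the-app}, via Lemma~\ref{lem:relativized-discrepancy}); in this lemma a single application of a scalar concentration inequality suffices.

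First I would expand the empirical risk using its definition in Equation~\ref{eq:empirical_weighted_risk} as a sum of $n+m_{\text{c}}$ independent terms,
$$R_{\mathcal{S}^{\text{in}},\mathcal{S}^{\text{c}}_{\text{selected}}}(f_{\*w}) = \sum_{i=1}^{n} \frac{\omega_{\text{in}}}{n}\,\ell(f_{\*w}(\*x_i),y_i) + \sum_{j=1}^{m_{\text{c}}} \frac{\omega_{\text{c}}}{m_{\text{c}}}\,\ell(f_{\*w}(\tilde{\*x}_j),\tilde{y}_j),$$
where the ID sample $\mathcal{S}^{\text{in}}$ and the selected covariate OOD sample $\mathcal{S}^{\text{c}}_{\text{selected}}$ are drawn independently. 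By linearity of expectation together with the i.i.d. sampling assumption, the mean of this sum is exactly $\omega_{\text{in}}R_{\mathbb{P}_{\text{in}}}(f_{\*w}) + \omega_{\text{c}}R_{\mathbb{P}_{\text{out}}^{\text{covariate}}}(f_{\*w}) = R_{\mathbb{P}_{\text{in}},\mathbb{P}_{\text{out}}^{\text{covariate}}}(f_{\*w})$, so the quantity inside the probability is precisely the deviation of this sum from its expectation.

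Next I would bound the range of each summand. Normalizing the loss to $[0,1]$ (boundedness of $\ell$ is guaranteed by Proposition~\ref{P1}, and the overall scale $M$ is absorbed into the omitted constants), each of the $n$ ID summands lies in an interval of length $\omega_{\text{in}}/n$ and each of the $m_{\text{c}}$ covariate summands in an interval of length $\omega_{\text{c}}/m_{\text{c}}$. Hoeffding's inequality for independent bounded variables then gives
$$P\big[|R_{\mathbb{P}_{\text{in}},\mathbb{P}_{\text{out}}^{\text{covariate}}}(f_{\*w}) - R_{\mathcal{S}^{\text{in}},\mathcal{S}^{\text{c}}_{\text{selected}}}(f_{\*w})| \geq R\big] \leq 2\exp\!\Big(\tfrac{-2R^2}{\sum (\text{range})^2}\Big),$$
and evaluating the denominator as $n\cdot(\omega_{\text{in}}/n)^2 + m_{\text{c}}\cdot(\omega_{\text{c}}/m_{\text{c}})^2 = \omega_{\text{in}}^2/n + \omega_{\text{c}}^2/m_{\text{c}}$ reproduces the claimed bound verbatim.

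The one genuine subtlety I would flag is the independence/identical-distribution assumption on $\mathcal{S}^{\text{c}}_{\text{selected}}$: these points are produced by the gradient-based active-selection rule rather than drawn afresh from $\mathbb{P}_{\text{out}}^{\text{covariate}}$, which in principle introduces selection-induced dependence. For this lemma I would treat the selected covariate OOD set as an i.i.d. sample from the (conditional) covariate distribution it represents, consistent with the remark following Theorem~\ref{the:main-the-app}. Once this modeling assumption is in place, the rest is a direct, routine application of Hoeffding requiring no further machinery, which is why I expect the argument to be short and the only real decision to be the treatment of the selected set as effectively i.i.d.
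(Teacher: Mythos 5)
Your proposal is correct and matches the paper's own proof essentially step for step: both write the weighted empirical risk as a sum of $n+m_{\text{c}}$ independent terms bounded in $[0,\omega_{\text{in}}/n]$ and $[0,\omega_{\text{c}}/m_{\text{c}}]$ respectively, identify its expectation with $R_{\mathbb{P}_{\text{in}},\mathbb{P}_{\text{out}}^{\text{covariate}}}(f_{\*w})$ by linearity, and apply Hoeffding's inequality with $\sum(\text{range})^2 = \omega_{\text{in}}^2/n + \omega_{\text{c}}^2/m_{\text{c}}$. The only cosmetic difference is that the paper phrases the summands as disagreements $|f_{\*w}(\*x)-f_{\bar{\*w}}(\*x)|$ with a true labeling function (inherently in $[0,1]$) rather than a normalized loss $\ell$, and your caveat about the selected set being treated as i.i.d.\ is an assumption the paper makes implicitly as well.
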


\begin{proof}
We apply Hoeffding's Inequality in our proof. Specifically, denote the true labeling function as $f_{\bar{\*w}}$, we have that
\begin{equation}\small
\begin{aligned}
    \mathbb{P}(|[\sum_{i=0}^{n}\frac{\omega_{\text{in}}}{n}|f_{{\*w}}(\*x_i)-f_{\bar{\*w}}(\*x_i)|+\sum_{j=0}^{m_{\text{c}}}\frac{\omega_{\text{c}}}{m_{\text{c}}}|f_{{\*w}}(\*x_j)-f_{\bar{\*w}}(\*x_j)|] -& [\omega_{\text{in}}\mathbb{E}_{\*x\sim \mathbb{P}_{\text{in}}}|f_{{\*w}}(\*x)-f_{\bar{\*w}}(\*x)| + \omega_{\text{c}}\mathbb{E}_{x\sim \mathbb{P}_{\text{c}}}|f_{{\*w}}(\*x)-f_{\bar{\*w}}(\*x)|]|\geq R) \\& \leq 2\exp (-\frac{2R^2}{\sum_{i=1}^{n}(b_i-a_i)^2}),
\end{aligned}
\end{equation}

where $\frac{\omega_{\text{in}}}{n}|f_{{\*w}}(\*x_i)-f_{\bar{\*w}}(\*x_i)| \in [0, \frac{\omega_{\text{in}}}{n}]$ and $\frac{\omega_{\text{c}}}{m_{\text{c}}}|f_{{\*w}}(\*x_j)-f_{\bar{\*w}}(\*x_j)| \in [0, \frac{\omega_{\text{c}}}{m_{\text{c}}}]$.

Considering the weighted empirical error, we get

\begin{align*}
  R_{\mathcal{S}^{\text{in}},\mathcal{S}^{\text{c}}_{\text{selected}}}(f_\*w)& = \omega_{\text{in}}  R_{\mathcal{S}^{\text{in}}}(f_\*w)  +  \omega_{\text{c}}  R_{\mathcal{S}^{\text{c}}_{\text{selected}}}(f_\*w) \\
    & = \sum_{i=0}^{n}\frac{\omega_{\text{in}}}{n}|f_{{\*w}}(\*x_i)-f_{\bar{\*w}}(\*x_i)| +\sum_{j=0}^{m_{\text{c}}}\frac{\omega_{\text{c}}}{m_{\text{c}}}|f_{{\*w}}(\*x_j)-f_{\bar{\*w}}(\*x_j)| ,
\end{align*}

which corresponds to the first part of Hoeffding's Inequality.

Due to the linearity of expectations, we can calculate the sum of expectations as

\begin{align*}
    \omega_{\text{in}}\mathbb{E}_{\*x\sim \mathbb{P}_{\text{in}}}|f_{{\*w}}(\*x)-f_{\bar{\*w}}(\*x)| + \omega_{\text{c}}\mathbb{E}_{x\sim \mathbb{P}_{\text{c}}}|f_{{\*w}}(\*x)-f_{\bar{\*w}}(\*x)|= \omega_{\text{in}}R_{\mathbb{P}_{\text{in}}}(f_\*w)  + \omega_{\text{c}}R_{\mathbb{P}_{\text{out}}^{\text{covariate}}}(f_\*w) = R_{\mathbb{P}_{\text{in}},\mathbb{P}_{\text{out}}^{\text{covariate}}}(f_\*w),
\end{align*}

which corresponds to the second part of Hoeffding's Inequality.
Therefore, we can apply Hoeffding's Inequality as

\begin{align*}
    P[|R_{\mathbb{P}_{\text{in}},\mathbb{P}_{\text{out}}^{\text{covariate}}}(f_\*w)-R_{\mathcal{S}^{\text{in}},\mathcal{S}^{\text{c}}_{\text{selected}}}(f_\*w)&| \geq R] \leq 2 \exp (\frac{-2R^2}{\frac{\omega_{\text{in}}^2}{n} + \frac{\omega_{\text{c}}^2}{m_{\text{c}}}}).
\end{align*}

\end{proof}

\section{Verification of Main Theorem}
\label{sec:empirival_veri}

\paragraph{Optimal Loss for Covariate OOD.} For training the model, we utilized 50,000 covariate OOD data samples. The optimal loss for covariate OOD data, denoted as $R_{\mathbb{P}_{\text{out}}^{\text{covariate}}}(f_\*w)$, was evaluated using the CIFAR-10 versus CIFAR-10-C datasets (with Gaussian noise). The results indicated an optimal loss of 0.2383 on the test set, with a corresponding OOD test accuracy of 92.79\%. This small optimal loss for covariate OOD data contributes to a tighter upper bound.

\paragraph{Optimal Loss for In-Distribution (ID) Data.} The training involved 50,000 ID data samples to determine the optimal loss for ID data, represented as $R_{\mathbb{P}_{\text{in}}}(f_\*w)$. In the CIFAR-10 context, the optimal loss on the test set for ID data was recorded as 0.1792, while the corresponding ID accuracy on test data reached 95.13\%. The minimal nature of the optimal loss for ID data is consistent with expectations and results in a tighter upper bound.

\paragraph{Gradient Discrepancy.} The gradient discrepancy for the ID CIFAR-10, Covariate OOD CIFAR-10-C (Gaussian noise), and Semantic OOD Textures dataset was found to be 0.00035. This small gradient discrepancy suggests a tighter upper bound.

\paragraph{Gradient Discrepancy Versus Covariate OOD Accuracy Across Different Datasets.} Table~\ref{tab:quan} offers a comparative analysis, empirically validating the gradient discrepancy among various datasets. The results show a correlation between gradient discrepancy and OOD accuracy.

\begin{table*}[ht]
\centering
\scalebox{0.85}{\begin{tabular}{lcc}
\toprule
\textbf{Dataset}  &\textbf{Gradient Discrepancy}$\downarrow$ & \textbf{OOD Acc.}$\uparrow$ \\
\midrule
\textbf{CIFAR-10-C (Gaussian noise)} & 0.00035 & 90.37 \\
\textbf{CIFAR-10-C (Shot noise)}  & 0.00030 & 82.04 \\
\textbf{CIFAR-10-C (Glass blur)}  & 0.00040 & 92.41 \\
\bottomrule
\end{tabular}}         
\caption[]{
Empirical verification of gradient discrepancy in Theorem~\ref{the:main}.
}
\label{tab:quan}
\end{table*}

\section{Impact Statements and Limitations}
  \label{sec:broader}

\textbf{Broader Impact}. Our research aims to raise both research and societal awareness regarding the critical challenges posed by OOD detection and generalization in real-world contexts. On a practical level, our study has the potential to yield direct benefits and societal impacts by ensuring the safety and robustness of deploying classification models in dynamic environments. This is particularly valuable in scenarios where practitioners have access to unlabeled datasets and need to discern the most relevant portions for safety-critical applications, such as autonomous driving and healthcare data analysis. From a theoretical standpoint, our analysis contributes to a deeper understanding of leveraging unlabeled wild data by using gradient-based scoring for selecting the most informative samples for human feedback. In Appendix~\ref{sec:empirival_veri}, we properly verify the necessary conditions of our bound using real-world datasets. Hence, we believe our theoretical framework has a broad utility and significance.

\textbf{Limitations}. Our proposed algorithm aims to improve both out-of-distribution detection and generalization results by leveraging unlabeled data. It still requires a small amount of human annotations and an additional gradient-based scoring procedure for deployment in the wild. Therefore, extending our framework to further reduce the annotation and training costs is a promising next step.

\section{Software and Hardware}
\label{sec:computing_app}
We run all experiments with Python 3.8.5 and PyTorch 1.13.1, using NVIDIA GeForce RTX 2080 Ti GPUs.

\end{document}